\newtheorem{theorem}{Theorem}%[section]
\newtheorem{lemma}[theorem]{Lemma}
\newtheorem{corollary}[theorem]{Corollary}
\newtheorem{definition}[theorem]{Definition}
\newtheorem{remark}[theorem]{Remark}
\renewcommand{\hat}{\widehat}
\renewcommand{\tilde}{\widetilde}
\renewcommand{\>}{{\rightarrow}}
\newcommand{\argmin}{\operatorname{argmin}}
\newcommand{\conv}{\operatorname{conv}}
\newcommand{\R}{{\mathbb R}}
\newcommand{\Z}{{\mathbb Z}}
\newcommand{\E}{{\mathbf E}}
\newcommand{\I}{{\mathbf I}}
\newcommand{\1}{{\mathbf 1}}
\newcommand{\cA}{{\mathcal A}}
\newcommand{\A}{{\mathbf A}}
\newcommand{\cH}{{\mathcal H}}
\newcommand{\cL}{{\mathcal L}}
\newcommand{\M}{{\mathbf M}}
\newcommand{\U}{{\mathbf U}}
\renewcommand{\H}{{\mathbf H}}
\newcommand{\X}{{\mathcal X}}
\newcommand{\Y}{{\mathcal Y}}
\renewcommand{\l}{{\boldsymbol l}}
\renewcommand{\a}{{\mathbf a}}
\newcommand{\e}{{\mathbf e}}
\newcommand{\p}{{\mathbf p}}
\newcommand{\q}{{\mathbf q}}
\renewcommand{\u}{{\mathbf u}}
\renewcommand{\v}{{\mathbf v}}
\newcommand{\w}{{\mathbf w}}
\newcommand{\x}{{\mathbf x}}
\newcommand{\y}{{\mathbf y}}
\newcommand{\z}{{\mathbf z}}
\newcommand{\OMD}{\textup{\textrm{{OMD}}}}
\begin{document}

\date{}

\title{\bfseries Online Learning for Structured Loss Spaces} 

\author{
Siddharth Barman\thanks{Indian Institute of Science. {\tt barman@csa.iisc.ernet.in} }, \and  Aditya Gopalan\thanks{Indian Institute of Science. {\tt aditya@ece.iisc.ernet.in}}, \and Aadirupa Saha\thanks{Indian Institute of Science. {\tt aadirupa.saha@csa.iisc.ernet.in}}%\affaddr{\affmark[1]Department of 
}

\maketitle

\begin{abstract}
 We consider prediction with expert advice when the loss vectors are assumed to lie in a set described by the sum of atomic norm balls. We derive a regret bound for a general version of the online mirror descent (OMD) algorithm that uses a combination of regularizers, each adapted to the constituent atomic norms. The general result recovers standard OMD regret bounds, and yields regret bounds for new structured settings where the loss vectors are (i) noisy versions of points from a low-rank subspace, (ii) sparse vectors corrupted with noise, and (iii) sparse perturbations of low-rank vectors. For the problem of online learning with structured losses, we also show lower bounds on regret in terms of rank and sparsity of the source set of the loss vectors, which implies lower bounds for the above additive loss settings as well. 

\end{abstract}

\section{Introduction}

%Main results: 
%i) Online learning (OMD) algorithm for losses that come from a sum of two atomic norm balls; 
%
%Applications: Noisy low rank, Noisy sparse, Low-rank sparse
%
%ii) Lower bound for any online learning alg based on the VC dim of the set from which input losses are encountered
%
%Applications: Low-rank, p-Sparse, Noisy sparse, Noisy low rank, sparse low rank
%--------------------------------

Online learning problems, such as prediction with expert advice \citep{PLG06} and online convex optimization \citep{Zinkevich03}, involve a learner who sequentially makes decisions from a decision set. The learner seeks to minimize her total loss over a sequence of loss functions, unknown at the beginning, but which is revealed causally. Specifically, she attempts to achieve low regret, for each sequence in a class of loss sequences, with respect to the best single decision point in hindsight.

The theory of online learning, by now, has yielded flexible and elegant algorithmic techniques that enjoy provably sublinear regret in the time horizon of plays. Regret bounds for online learning algorithms typically hold across inputs (loss function sequences) that have little or no structure. For instance, for the prediction with experts problem, the exponentially weighted forecaster \citep{PLG06} is known to achieve an expected regret of $O(\sqrt{T \ln N})$ over any sequence of $N$-dimensional loss vectors with coordinates bounded in $[0,1]$; here $T$ is the number of rounds of play.% i.e., the number of loss functions in the sequence. 

There is often, however, more geometric structure in the input in online learning problems, beyond elementary $\ell_\infty$-type constraints, which a learner, with a priori knowledge, can hope to exploit and improve her performance. A notable example is when the loss vectors for the prediction with experts problem come from a low-dimensional subspace \citep{HazanEtAl16}. This is often the case in recommender systems which are based on latent factor models \citep{KorBelVol09:MatFac}, where users and items are represented in terms of their features or attribute vectors, typically of small dimension. Under a bilinear model for the utility of a user-item pair, each user's utility across all items becomes a vector from a  subspace with dimension at most that of the feature vectors. \citet{HazanEtAl16} show that the learner can limit her regret to $O(\sqrt{dT})$ when each loss vector comes from a $d$-dimensional subspace of $\mathbb{R}^N$. If $d \ll N$ (in fact, $d \le \ln N$), then, this confers an advantage over using a more general best experts algorithm like Exponential Weights. 

This example is interesting not only because it shows that geometric/structural properties known in advance can help the learner achieve order-wise better regret, but also because it opens up the possibility of studying whether other, arguably more realistic, forms of structure can be exploited, such as sparsity in the input (or more generally small norm) and, more importantly, ``additive'' combinations of such structures, e.g., low-rank losses added with losses of small $\ell_2$-norm, which expresses losses that are noisy perturbations of a low-dimensional subspace. 
In this paper, we take a step in this direction and develop a framework for online learning problems with structured losses. 

\paragraph
{\bf Our Results and Techniques:} We consider the prediction with experts problem with loss sequences in which each element (loss vector) belongs to a set that respects structural constraints. Specifically, we assume that the loss vectors belong to a sum of atomic norm balls\footnote{centrally symmetric, convex, compact sets with their centroids at the origin.} \citep{Chandrasekaran+12}, say $A + B$, where the sum of sets is in the Minkowski sense.\footnote{$A + B = \{a + b: a \in A, b \in B\}$.} For this setup---which we call online learning with \emph{additive loss} spaces---we show a general regret guarantee for an online mirror descent (OMD) algorithm that uses a combination of regularizer functions, each of which is adapted to a constituent atomic norms of $A$ and $B$, respectively. 

Specializing this result for a variety of loss function sets recovers standard OMD regret guarantees for strongly convex regularizers \citep{Shw12:OLO}, and subsumes a result of \citet{HazanEtAl16} for the online low-rank problem. But more importantly, this allows us to obtain ``new results from old''---regret guarantees for settings such as noisy low rank (where losses are perturbations from a low-dimensional subspace), noisy sparse (where losses are perturbations of sparse vectors), and sparse low-rank (where losses are sparse perturbations from a low-dimensional subspace); see Tables~\ref{tab:omd_egs} and~\ref{tab:omd_add_egs}. 

Another contribution of this work is to show lower bounds on regret for the online learning problem with structured losses. We derive a generic lower bound on regret, for any algorithm for the prediction with experts problem, using structured (in terms of sparsity and dimension) loss vectors. This result allows us to derive regret lower bounds in a variety of individual and additive loss space settings including sparse, noisy, low rank, noisy low-rank, and noisy sparse losses. %and sparse low-rank losses. 

\textbf{Related work.} The work that is perhaps closest in spirit to ours is that of \citet{HazanEtAl16}, who study the best experts problem when the loss vectors all come from a low-dimensional subspace of the ambient space. A key result of theirs is that the online mirror descent (OMD) algorithm, used with a suitable regularization, improves the regret to depend only on the low rank and not the ambient dimension. More broadly, OMD theory provides regret bounds depending on properties of the regularizer and the geometry of the loss and decision spaces \citep{OCO-SS12}. In this work, we notably generalize this to the setup of additive losses.

Structured online learning has been studied in the recent past from the point of view of overall sequence complexity or ``hardness,'' also informally called learning with ``easy data.'' This includes work that shows algorithms enjoying first- and second-order regret bounds \citep{CBManSto07:SecondOrder} and quadratic variation-based regret bounds \citep{Hazan2010, pmlr-v32-steinhardtb14}. There is also recent work on achieving regret scaling with the covering number of the sequence of observed loss vectors \citep{CohenShie17}, which is another measure of easy data. 
 
 Our problem formulation, it can be argued, explores a different formulation of learning with ``easy data,'' in which the adversary, instead of being constrained to choose loss sequences with low total magnitude or variation, is limited to choosing loss vectors from sets with enough geometric structure (e.g., from particular atomic norm balls).

\section{Notation and Preliminaries}

For an integer $n \in \Z_{+}$, we use $[n]$ to denote the set $\{1,2, \ldots n\}$. For a vector $\x \in \R^n$, $x_i$ denotes the $i${th} component of $\x$. The $p$-norm of $\x$ is defined as $\|x\|_{p} = \left( \sum_{i = 1}^{n} |x_i|^p \right)^{1/p}$, $0 \le p < \infty$. Write $\|x\|_{\infty} := \max_{i = 1}^{n} |x_i|$ and $\| x \|_0 := | \{ i \mid x_i \neq 0 \}|$.
If $\|\cdotp\|$ is a norm defined on a closed convex set ${\Omega}\subseteq\R^n$, then its corresponding \emph{dual norm} is defined as 
\begin{align*}
\|\u\|^{*} 
	~ = ~
	\sup_{\x \in {\Omega}: \|\x\| \le 1} \x\cdot \u 
	\,,
\end{align*}
where $\x \cdot \u = \sum_{i} x_i u_i$ is the standard inner product in Euclidean space. It follows that the dual of the standard $p$-norm $(p \ge 1)$ is the $q$-norm, where $q$ is the H\"{o}lder conjugate of $p$, i.e., $\frac{1}{p} + \frac{1}{q} = 1$.
The \emph{$n$-probability simplex} is defined as $\Delta_{n} = \{\x \in [0,1]^{n} ~|~ \sum_{i = 1}^{n}x_i = 1\}$. 
Given any set $\cA \subseteq \R^n$, we denote the convex hull of $\cA$ as $\conv(\cA)$.
Clearly, when $\cA = \{\e_1, \e_2, \ldots \e_n\}$, $\conv{(\cA)} = \Delta_{n}$, where $\e_i \in [0,1]^{n}$ denotes $i${th} standard basis vector of $\R^{n}$.%, $\e_i(j) = \1(i=j)$, $\forall i,j \in [n]$, $\1(\cdotp)$ being the indicator function that takes the value $1$ if its argument is true, and $0$ otherwise.

\subsection{Atomic Norm and its Dual \citep{Chandrasekaran+12}}
\label{sec:at_norm}

Next we define the notion of an atomic norm along with its dual. These concepts will provide us with a unified framework for addressing structured loss spaces, and will be used extensively in the paper. Let $\cA \subseteq \R^n$ be a set which is convex, compact, and centrally symmetric about the origin (i.e., $\a \in \cA$ if and only if $-\a \in \cA)$. 
% and no element $\a \in \cA$ lies in the convex hull of the other elements of $\cA$, i.e. $\a \notin \conv(\cA \setminus \{\a\}), ~\forall \a \in \cA$; in other words, the elements of $\cA$ are the extreme points of $\conv(\cA)$. 

The atomic norm induced by the set $\cA$ is defined as 
\begin{align*}
||\x||_{\cA} := \inf \{t > 0 ~|~ \x \in t \cA \}, \quad \text{ for } \x \in \R^n.
\end{align*}
The dual of the atomic norm induced by $\cA$ becomes the \emph{support function} of $\cA$ \citep{Boy04:CO}; formally, 
\begin{align*}
||\x||^{*}_{\cA} := \sup \{\x.\z ~|~ \z \in \cA \}, \quad \text{ for } \x \in \mathbb R^n.
\end{align*}

For example, if the set $\cA$ is the convex hull of all unit-norm one-sparse vectors, i.e., $\cA := \conv\left( \{\pm \e_i\}_{i = 1}^{n} \right)$, then the corresponding atomic norm is the standard $\ell_1$-norm $\|\cdotp\|_{1}$.

\subsection{Problem setup} 
\label{sec:prob_setup}
We consider the online learning problem of learning with expert advice from a collection of $N$ experts \citep{PLG06}. In each round $t = 1, 2, \ldots, T$, the learner receives advice from each of the $N$ experts, following which the learner selects an expert from a distribution $\p_t \in \Delta_{N}$, maintained over the experts, whose advice is to be followed. %
%
% Suppose we have access to the predictions of $N$ experts. In each round $t$, the objective of the learner is to maintain a distribution $\p_t \in \Delta_{N}$ over the $N$ experts, and play according to the prediction of a randomly chosen expert, $I_t \in [N]$, sampled from $\p_t$. 
% 
 Upon this, the adversary reveals the losses incurred by the $N$ experts, $\l_t = (l_t(1), l_t(2), \ldots l_t(N)) \in [0,1]^{N}$, $l_t(i)$ being the loss incurred by the $i^{th}$ expert. The learner suffers an expected loss of $\E_{I_t \sim \p_t}[l_t({I_t})] = \sum_{i = 1}^{N}p_t(i)l_t({i})$. If the game is played for a total of $T$ rounds, then the objective of the learner is to minimize the expected cumulative regret defined as:
\begin{align*}
\E\left[\text{Regret}_{T}\right] = \sum_{t = 1}^{T}\p_t.\l_t - \min_{i \in [N]}\sum_{t = 1}^{T}l_t(i).
\end{align*}

It is well-known that without any further assumptions over the losses $\l_t$, the best achievable regret for this problem is $\Theta(\sqrt{T\ln N})$ -- the exponential weights algorithm or the Hedge algorithm achieves regret $O(\sqrt{T\ln N})$  \citep[Theorem $2.3$]{Arora+12}, and a matching lower bound exists as well \citep[Theorem $3.7$]{PLG06}.

 Now, a very natural question to ask is: can a better (smaller) regret be achieved if the loss sequence has more structure? Suppose the loss vectors $(\l_t)_{t = 1}^{T}$ all belong to a common {\em structured loss space} $\cL \subseteq [0,1]^N$, such as:
\begin{enumerate}
 \item Sparse loss space: $\cL = \{\l \in [0,1]^{N} ~|~ \|\l\|_{0} = s\}$. Here, $s \in [N]$ is the sparsity parameter.
 \item Spherical loss space: $\cL = \{\l \in [0,1]^{N} ~|~ \|\l\|_{\A} = \l^{\top}\A\l \le \epsilon\}$, where $\A$ is a positive definite matrix and $\epsilon > 0$.
 \item Noisy loss space: $\cL = \{\l \in [0,1]^{N} ~|~ \|\l\|_{2}^2 = \epsilon\}, ~\epsilon>0\}$. Note that \emph{noisy losses} are a special class of \emph{spherical losses} where $\A = \I_{N}$, the identity matrix.
 \item Low-rank loss space: $\cL = \{\l \in [0,1]^{N} ~|~  \l = \U \v ~\}$, where the rank of matrix $\U \in \R^{N \times d}$ is equal to $d \in [N]$ and vector $\v \in \R^{d}$ (as mentioned previously, such loss vectors were considered by \cite{HazanEtAl16}).
\item Additive loss space: $\cL = \cL_1 + \cL_2$ (Minkowski Sum). More formally, $\cL = \{\l = \l_{1} + \l_{2} \mid  \l_1 \in \cL_1 \text{ and } \l_2 \in \cL_2 \}$, where $\cL_1 \subseteq [0,1]^{N}$ and $\cL_2 \subseteq [0,1]^{N}$ are structured loss spaces themselves.\footnote{Note that, in the problem setup at hand the learner observes only the loss vectors $l_t$, and does not have access to the loss components $l_{1t}$ or $l_{2t}$.} Examples include any combination of the previously described loss spaces, such as the  low-rank + noisy space. 
\end{enumerate}

Clearly, using the Exponential Weight or Hedge algorithm, one can always achieve $O(\sqrt{T \ln N})$ regret in the above settings. The relevant question is whether the geometry of such loss spaces can be exploited, in a principled fashion, to achieve improved regret guarantees (possibly independent of $\ln N$)? In other words, can we come up with algorithms for above cases such that the regret is $O(\sqrt{\omega T})$, where $\omega < \ln N$?

We will show that, for all of the above loss spaces, we can obtain a regret factor $\omega$ which is order-wise better than $\ln N$. In particular, we will establish these regret bounds by employing the Online Mirror Descent algorithm (described below) with a right choice of atomic norms. Furthermore,  using this algorithm, we will also develop a framework to obtain new regret bounds from old. That is, we show that if we have an online mirror descent setup for $\cL_1$ and $\cL_2$, then we can in fact obtain a low-regret algorithm for the additive loss space $\cL_1 + \cL_2$.

\subsection{Online Mirror Descent}
\label{sec:omd}
In this section, we give a brief introduction to the Online Mirror Descent (OMD) algorithm \citep{Bubeck11,OCO-SS12}, which is a subgradient descent based method for online convex optimization with a suitably chosen regularizer. A reader well-versed with the analysis of OMD may skip to the statement of Theorem \ref{thm:omd} and proceed to Section \ref{sec:omd_egs}. 

OMD generalizes the basic mirror descent algorithm used for offline optimization problems (see, e.g.,~\cite{BeckTe03}). % allows one to obtain $O(\sqrt{T})$ regret bounds for cost functions with bounded subgradients in any suitable norm. 
Before detailing the algorithm, we will recall a few relevant definitions: 

\begin{definition}
\textbf{Bregman Divergence.}  
Let ${\Omega}\in\R^n$ be a convex set, and $f \colon {\Omega}\> \R$ be a strictly convex and differentiable function. Then the \emph{Bregman divergence} associated with $f$, denoted by $B_f:{\Omega}\times{\Omega}\>\R$, is defined as 
\begin{align*}
B_{f}(\u,\v) := f(\u) -f(\v) -(\u-\v)\cdot\nabla f(\v), ~~\qquad \text{ for } \u,\v\in {\Omega}
	\,.
\end{align*}
\end{definition}

\begin{definition}{\textbf{Strong Convexity (see, e.g., \cite{OCO-SS12} and \cite{Bubeck11})}}
Let ${\Omega}\in\R^n$ be a convex set, and $f:{\Omega}\> \R$ be a differentiable function. Then $f$ is called $\alpha$-strongly convex over ${\Omega}$ with respect to the norm $\|\cdot\|$ iff for all $\x, \y \in {\Omega}$,
\begin{align*}
 ~~ f(\x) - f(\y) - (\nabla f(\y))^{T}(\x - \y) \ge \frac{\alpha}{2} \|\x - \y\|^{2}. %~~~(\text{Definition  1.1, ~\citep{Bubeck11}})
\end{align*}
Equivalently, a continuous twice differentiable function, $f$, over $\Omega$ is said to be $\alpha$-strongly convex iff for all $\x, \w \in \Omega$ we have 
\begin{align*}
 \x^{T}\nabla^{2}f(\w)\x \ge \alpha \|\x\|^{2}. %~~~(\text{Proposition 1.2, ~\citep{Bubeck11}}). 
\end{align*}
\end{definition}
We now describe the OMD algorithm instantiated to the problem setup given in Section \ref{sec:prob_setup}. 

\begin{center}
\begin{algorithm}[h]
   \caption{\textbf{Online Mirror Descent (OMD)}}
   \label{alg:B-MultiSBM}
\begin{algorithmic}[1]
   \STATE {\bfseries Parameters:} Learning rate $\eta > 0$.
   \STATE ~~~~~~~~~~~~~~~~~~~~~~ Convex set ${\Omega} \subseteq \R^{N}$, such that $\Delta_{N} \subseteq {\Omega}$
   \STATE ~~~~~~~~~~~~~~~~~~~~~~ Strictly convex, differentiable function $R: {\Omega} \> \R$
   \STATE {\bfseries Initialize:} $\p_{1} = \underset{\p \in \Delta_{N}}{\text{argmin}} \, R(\p)$
   \FOR{$t = 1,2, \cdots T$}
   \STATE Play $\p_{t} \in \Delta_{N}$
   \STATE Receive loss vector $\l_{t} \in [0,1]^{N}$ 
   \STATE Incur loss $\p_{t}.\l_{t}$ 
   \STATE Update:
   \STATE ~~~$\nabla R(\tilde{\p}_{t+1}) \leftarrow  \nabla R({\p}_{t}) - \eta \l_{t}$ %\nabla^{-1} R
	\quad(Assume this yields $\tilde{\p}_{t+1} \in {\Omega}$)
    \STATE ~~~${\p}_{t+1} \leftarrow \underset{\p \in \Delta_{N}}{\text{argmin}} \, B_{R}(\p,\tilde{\p}_{t+1})$
   \ENDFOR
\end{algorithmic}
\end{algorithm}
\vspace{2pt}
\end{center}

%\begin{center}
%\begin{tabular}{l}
%\hline
%Algorithm \textbf{Online Mirror Descent (OMD)} \rule{0pt}{12pt} \\
%\hline
%\textbf{Parameters:} ~Learning rate $\eta > 0$\\
%\quad\quad\quad\quad\quad\quad~ Convex set ${\Omega} \subseteq \R^{N}$, such that $\Delta_{N} \subseteq {\Omega}$\\
%\quad\quad\quad\quad\quad\quad~ Strictly convex, differentiable function $R: {\Omega} \> \R$ \rule{0pt}{12pt} \\
%\textbf{Initialize:} $\p_{1} = \underset{\p \in \Delta_{N}}{\text{argmin}} \, R(\p)$ \\
%For $t=1,\ldots,T$:\\
%\quad-- Play $\p_{t} \in \Delta_{N}$\\
%\quad-- Receive loss vector $\l_{t} \in [0,1]^{N}$ \\
%\quad-- Incur loss $\p_{t}.\l_{t}$ \\
%\quad-- Update:\\[2pt]
%\quad\quad\quad $\nabla R(\tilde{\p}_{t+1}) \leftarrow  \nabla R({\p}_{t}) - \eta \l_{t}$ %\nabla^{-1} R
%	\quad(Assume this yields $\tilde{\p}_{t+1} \in {\Omega}$) \\[2pt]
%\quad\quad\quad ${\p}_{t+1} \leftarrow \underset{\p \in \Delta_{N}}{\text{argmin}} \, B_{R}(\p,\tilde{\p}_{t+1})$ \\[8pt] 
%\hline
%\end{tabular}
%\end{center}
The regret guarantee of the above algorithm is as follows: 

\begin{restatable}[\textbf{OMD regret bound (Theorem $5.2$, \cite{Bubeck11})}]{theorem}{omd}
\label{thm:omd}
Let the loss vectors, $\{\l_{t} \}_{t = 1}^{T}$, belong to a loss space $\cL \subseteq [0,1]^{N}$, which is bounded with respect to a (arbitrary) norm $\|\cdotp\|$; in particular, for any $\l \in \cL$ we have $\| \l \| \le G$.  
Furthermore, let $\Omega \supseteq \Delta_{N}$ be a convex set, and $R: {\Omega} \> \R$ be a  strictly convex, differentiable function that satisfies $R(\p) - R(\p_1) \le ~ D^{2}$ for parameter $D \in \R$ and all $\p \in \Delta_N$; where 
%$
%B_{R}(\p,\p_1) \le  R(\p) - R(\p_1)
%$ 
%and
$
\p_1 := \argmin_{p \in \Delta_N}  R(\p)
$. Also, let the restriction of $R$ to $\Delta_{N}$ be $\alpha$-strongly convex with respect to $\|\cdotp\|^{*}$, the dual norm of $\|\cdotp\|$.

Then, the regret of OMD algorithm with set ${\Omega}$, regularizer function $R$, and learning rate $\eta > 0$, for $T$ rounds satisfies 
\begin{align*}
\text{Regret}_{T}(\OMD(\eta^{*})) = \sum_{t = 1}^{T}\p_{t}.\l_{t} - \min_{i = 1}^{N} \sum_{t = 1}^{T}l_{t}(i) 
	~ \le ~
	\frac{1}{\eta}\left( D^{2} + \frac{\eta^{2}G^{2}T}{2\alpha} \right) 
	\,,
\end{align*}
where $\p_1, \p_2, \ldots \p_T$ denotes the sequential predictions of the algorithm in $T$ rounds. Moreover, setting $\eta^{*} = \frac{D}{G}\sqrt{\frac{2\alpha}{T}}$ (i.e., minimizing the right-hand-side of the above bound), 
%(assuming $D$, $G$, $T$ and $\alpha$, or upper bounds on them, are known in advance), 
we have 
\begin{align*}
\text{Regret}_{T}(\OMD(\eta^{*}))
	~ \le ~ 
	DG\sqrt{\frac{2T}{\alpha}} 
	\,.
\end{align*}
\end{restatable}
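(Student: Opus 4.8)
The plan is to run the standard online mirror descent potential argument. Fix the comparator $\u := \e_{i^\star}$ with $i^\star \in \argmin_{i\in[N]}\sum_{t=1}^{T} l_t(i)$, so that $\sum_{t}\u\cdot\l_t = \min_{i\in[N]}\sum_t l_t(i)$ and $\sum_t(\p_t-\u)\cdot\l_t$ is exactly $\text{Regret}_T$ (the round loss $\p\mapsto\p\cdot\l_t$ is linear, so no convexity step is needed). The goal is to bound this sum by a telescoping difference of Bregman potentials, which collapses to $D^2/\eta$, plus a per-round ``stability'' term controlled by strong convexity, which sums to $\eta G^2 T/(2\alpha)$, and then to optimize $\eta$.

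For the per-round estimate I would use the dual update $\nabla R(\tilde\p_{t+1}) = \nabla R(\p_t) - \eta\l_t$ to write $\eta(\p_t-\u)\cdot\l_t = (\nabla R(\p_t) - \nabla R(\tilde\p_{t+1}))\cdot(\p_t - \u)$, and then invoke the three-point identity for Bregman divergences,
\begin{align*}
\eta(\p_t-\u)\cdot\l_t ~=~ B_R(\u,\p_t) - B_R(\u,\tilde\p_{t+1}) + B_R(\p_t,\tilde\p_{t+1}).
\end{align*}
Since $\p_{t+1} = \argmin_{\p\in\Delta_N} B_R(\p,\tilde\p_{t+1})$ and $\u\in\Delta_N$, the generalized Pythagorean inequality (first-order optimality of $\p_{t+1}$ together with nonnegativity of $B_R$) gives $B_R(\u,\tilde\p_{t+1}) \ge B_R(\u,\p_{t+1})$, so $\eta(\p_t-\u)\cdot\l_t \le B_R(\u,\p_t) - B_R(\u,\p_{t+1}) + B_R(\p_t,\tilde\p_{t+1})$. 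Summing over $t$, the first two terms telescope, $B_R(\u,\p_{T+1})\ge 0$ is discarded, and $B_R(\u,\p_1) \le R(\u) - R(\p_1) \le D^2$ follows from first-order optimality of $\p_1 = \argmin_{\p\in\Delta_N} R(\p)$ (which forces $\nabla R(\p_1)\cdot(\u-\p_1)\ge 0$) together with the assumed diameter bound. This yields $\text{Regret}_T \le \tfrac1\eta D^2 + \tfrac1\eta\sum_{t=1}^{T} B_R(\p_t,\tilde\p_{t+1})$.

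The remaining work is to show $B_R(\p_t,\tilde\p_{t+1}) \le \eta^2 G^2/(2\alpha)$. Adding the two Bregman divergences between $\p_t$ and $\tilde\p_{t+1}$ gives $B_R(\p_t,\tilde\p_{t+1}) + B_R(\tilde\p_{t+1},\p_t) = (\nabla R(\p_t) - \nabla R(\tilde\p_{t+1}))\cdot(\p_t-\tilde\p_{t+1}) = \eta\,\l_t\cdot(\p_t-\tilde\p_{t+1})$; $\alpha$-strong convexity of $R$ in $\|\cdot\|^{*}$ gives $B_R(\tilde\p_{t+1},\p_t) \ge \tfrac{\alpha}{2}\|\p_t-\tilde\p_{t+1}\|^{*2}$; and the duality pairing $\langle a,b\rangle \le \|a\|\,\|b\|^{*}$ with $\|\l_t\|\le G$ gives $\eta\,\l_t\cdot(\p_t-\tilde\p_{t+1}) \le \eta G\,\|\p_t-\tilde\p_{t+1}\|^{*}$. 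Combining these,
\begin{align*}
B_R(\p_t,\tilde\p_{t+1}) ~\le~ \eta G\,\|\p_t-\tilde\p_{t+1}\|^{*} - \tfrac{\alpha}{2}\|\p_t-\tilde\p_{t+1}\|^{*2} ~\le~ \max_{z\ge 0}\bigl(\eta G z - \tfrac{\alpha}{2}z^2\bigr) ~=~ \frac{\eta^2 G^2}{2\alpha}.
\end{align*}
Summing over $t$ gives $\text{Regret}_T \le \tfrac1\eta\bigl(D^2 + \eta^2 G^2 T/(2\alpha)\bigr)$; by AM--GM the right-hand side is at least $DG\sqrt{2T/\alpha}$, with equality exactly at $\eta^{*} = \tfrac{D}{G}\sqrt{2\alpha/T}$, giving both displayed bounds.

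The main obstacle is a domain subtlety: the unconstrained mirror iterate $\tilde\p_{t+1}$ is only guaranteed to lie in $\Omega$, not in $\Delta_N$, so the strong-convexity inequality used above is being applied with one argument outside the set on which strong convexity was assumed, and the Pythagorean step likewise presumes $R$ behaves as a genuine mirror map near $\partial\Delta_N$. I would resolve this either by strengthening the hypothesis to strong convexity of $R$ on the relevant region of $\Omega$ (which holds for every regularizer instantiated later in the paper), or, more cleanly, by passing to the Fenchel conjugate: since $R$ is $\alpha$-strongly convex in $\|\cdot\|^{*}$, $R^{*}$ is $\tfrac1\alpha$-smooth in $\|\cdot\|$, and using $B_R(x,y) = B_{R^{*}}(\nabla R(y),\nabla R(x))$ one gets $B_R(\p_t,\tilde\p_{t+1}) = B_{R^{*}}(\nabla R(\tilde\p_{t+1}),\nabla R(\p_t)) \le \tfrac{1}{2\alpha}\|\nabla R(\p_t) - \nabla R(\tilde\p_{t+1})\|^2 = \tfrac{\eta^2}{2\alpha}\|\l_t\|^2 \le \tfrac{\eta^2 G^2}{2\alpha}$, which refers only to gradient images. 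Every other step is routine convex-analysis bookkeeping.
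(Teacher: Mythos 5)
Your argument follows the same skeleton as the paper's proof in Appendix A: linearity of the per-round loss against a vertex comparator, the dual update rewritten via the three-point identity, the generalized Pythagorean inequality, telescoping with $B_R(\u,\p_1)\le R(\u)-R(\p_1)\le D^2$ from first-order optimality of $\p_1$, and a quadratic completion to get $\eta^2G^2/(2\alpha)$ per round, followed by the same optimization over $\eta$. The one genuine divergence is in the stability term, and it is exactly the point you flag as an obstacle. The paper does \emph{not} discard the $-B_R(\p_{t+1},\tilde{\p}_{t+1})$ term produced by the Pythagorean step; it bounds the difference $B_R(\p_t,\tilde{\p}_{t+1})-B_R(\p_{t+1},\tilde{\p}_{t+1}) = R(\p_t)-R(\p_{t+1})-\nabla R(\tilde{\p}_{t+1})\cdot(\p_t-\p_{t+1})$, applies strong convexity between $\p_t$ and $\p_{t+1}$ --- both of which lie in $\Delta_N$, where strong convexity is actually assumed --- and only then uses the update rule and H\"older to reach $\eta G\|\p_t-\p_{t+1}\|^{*}-\tfrac{\alpha}{2}\|\p_t-\p_{t+1}\|^{*2}\le \eta^2G^2/(2\alpha)$. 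Your version drops the negative term and bounds $B_R(\p_t,\tilde{\p}_{t+1})$ alone via the symmetrized divergence, which forces you to invoke strong convexity (or conjugate smoothness) at $\tilde{\p}_{t+1}$, a point only guaranteed to lie in $\Omega$. So the domain subtlety you identify is real for your route but is sidestepped for free by the paper's; your proposed repair via $B_R(\x,\y)=B_{R^{*}}(\nabla R(\y),\nabla R(\x))$ and $\tfrac{1}{\alpha}$-smoothness of $R^{*}$ is a standard and valid alternative, though it quietly requires strong convexity (hence Legendre-type behaviour) beyond $\Delta_N$, i.e., the stronger hypothesis you mention. Given the theorem as stated (strong convexity only of the restriction to $\Delta_N$), the paper's handling of the stability term is the one that closes the argument without additional assumptions; everything else in your write-up matches.
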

For completeness, a proof of the above theorem appears in Appendix \ref{app:omd}.

\section{Online Mirror Descent for Structured Losses}
\label{sec:omd_struct}
\label{sec:omd_egs}

This section shows that, for specific structured loss spaces, instantiating the OMD algorithm---with a right choice of the norm $\| \cdot \|$ and regularizer $R$---leads to improved (over the standard $O(\sqrt{2 T \ln N})$ bound) regret guarantees. The proofs of these results appear in Appendix \ref{app:omd_prfs}.

\begin{table}[h]
\begin{center}
\begingroup
\setlength{\tabcolsep}{3pt} % Default value: 6pt
\renewcommand{\arraystretch}{1.5} % Default value: 1
\begin{tabular}{|c|c|c|c|}
\hline
\textbf{Loss Space}  & \textbf{Regret Bound} & \textbf{Atomic Norm} & \textbf{Regularizer} \\
\hline
$s$-Sparse & {$2\sqrt{{\ln (s+1)T}}$} & {$ \frac{1}{\sqrt{2}}\|\cdot\|_{p}$} & {$ \|\x \|_{ q}^2$} \\
{} & {} & {$(p = 2\ln (s+1))$} & {$(q = \frac{p}{p-1})$}  \\
\hline
Spherical & {$\sqrt{\lambda_{\max}(\A^{-1}){T}}$} & {$ \|\cdot\|_{A}$} & {$ \x^{\top}\A^{-1}\x$} \\
\hline
$\epsilon$-Noise & {$\sqrt{{\epsilon T}}$} & {$\frac{1}{\sqrt{\epsilon}}\|\cdot\|_{2}$} & {$\epsilon\x^{\top}\x$}  \\
%\hline
%$d$-Rank & {$4\sqrt{{d T}}$} & {$\|\cdot\|_{\H^{-1}}$} & {$\x^{\top}\H\x$} & {Known} \\
%{($1 \le d \le \ln N$)} & {} & {$(\H = \I_{N} + \U^{\top} \M \U,$} & {} & {} \\
%{} & {} & {$\M$: John Ellipsoid)} & {} & {} \\
%\hline
%Standard & {$\sqrt{{2T \ln N}}$} & {$\|\cdot\|_{\infty}$} & {$\sum_{i = 1}^{N}x_i \log x_i - \sum_{i = 1}^{N} x_i$} & {Known} \\
\hline
\end{tabular}
\endgroup
\caption{OMD Regret Bounds for Structured Loss Spaces}
\label{tab:omd_egs}
\end{center}
\end{table}

\begin{enumerate}
\item {Sparse loss space:} $\cL = \{\l \in [0,1]^{N} ~|~ \|\l\|_{0} = s\}$, $s \in [N]$ being the loss sparsity parameter. Then using $q$-norm, $R(\x) = \|\x \|_{ q}^2 = \left(\sum_{i = 1}^{N}(x_i^{q})\right)^{\frac{2}{q}}$, where $q = \frac{\ln s'}{\ln s' - 1}$, $s' = (s+1)^2$, as the regularizer, we get,
\begin{align*}
\text{Regret}_{T} 
	~ \le ~ 
	2\sqrt{{\ln (s+1)T}} 
	\,.
\end{align*}

\item {Spherical loss space:} $\cL = \{\l \in [0,1]^{N} ~|~ \|\l\|_{\A}^2 = \l^{\top}\A\l \le \epsilon\}$, where $\A$ is a positive definite matrix, $\epsilon > 0$. Then using the square of the ellipsoidal norm as the regularizer, $R(\x) = {\epsilon}\x^{\top}\A^{-1}\x$, we get,
\begin{align*}
\text{Regret}_{T}
	~ \le ~ 
	\sqrt{\lambda_{\max}(\A^{-1}){\epsilon T}} 
	\,,
\end{align*}
where $\lambda_{\max}(\A^{-1})$ denotes the maximum eigenvalue of $\A^{-1}$.
%D = \sqrt{2\epsilon}

\item {Noisy loss sapce:} $\cL = \{\l \in [0,1]^{N} ~|~ \|\l\|_{2}^{2} \le \epsilon\}, ~\epsilon > 0$. Then using the square of the standard Euclidean norm  as the regularizer, $R(\x) = \epsilon\|\x\|_{2}^2$, we get,
\begin{align*}
\text{Regret}_{T}
	~ \le ~ 
	\sqrt{{\epsilon T}} 
	\,.
\end{align*}
Note that \emph{noisy loss} is a special case of \emph{spherical loss} where $\A = \A^{-1} = \I_{N}$. \\
\end{enumerate}

\cite{HazanEtAl16} have also used OMD to address the loss vectors that belong to a low-dimensional subspace. Specifically, if the loss space $\cL = \{\l \in [0,1]^{N} ~|~  \l = \U \v ~\}$, where matrix $\U \in \R^{N \times d}$ is of rank $d$ and vector $\v \in \R^{d}$, with $1 \le d \le \ln N$. Then, \cite{HazanEtAl16} show that the regularizer $R(\x) = \|\x\|_{\H}^{2} = \x^{\top} \H \x$ (where $\H = \I_{N} + \U^{\top} \M \U $, $\M$ is the matrix corresponding to the L\"owner-John ellipsoid of $\cL$ and $\I_{N} $ is the identity matrix) leads to the following regret bound:
\begin{align*}
\text{Regret}_{T}
	~ \le ~ 
	4\sqrt{{dT}} 
	\,.
\end{align*} 

In addition, for the standard loss space $\cL = [0,1]^{n}$, one can execute the OMD algorithm with the unnormalized negative entropy, $R(\x) = \sum_{i = 1}^{N}x_i \log x_i - \sum_{i = 1}^{N} x_i$, as the regularizer, to obtain: 
%and $\|\cdotp\|_{\infty}$ norm,  
\begin{align*}
\text{Regret}_{T}
	~ \le ~ 
	\sqrt{{2T \ln N}} 
	\,.
\end{align*}

Note that the above regret bound is same as that given by Hedge algorithm. In fact, it can be verified that, with the above choice of regularizer, the OMD algorithm exactly reduces down to standard Hedge algorithm (see, e.g., \cite{Bubeck11}).

\section{Online Learning for Additive Loss Spaces}
\label{sec:omd_add}
We now present a key result of this paper, which enables us to obtain new regret bounds from old. In particular, we will develop a framework that provides a low-regret OMD algorithm for an additive loss space $\cL = \cL_1 + \cL_2$, using the OMD setup of the constituent loss spaces $\cL_1$ and $\cL_2$.  Specifically, we detail how to choose an appropriate regularizer for losses from $\cL$ and, hence, construct a low-regret OMD algorithm. 

%Recall that in Section \ref{sec:prob_setup}, we defined the structured class of \emph{additive losses} $\cL$, where the losses come from from Minkowski sum of two distinct loss spaces $\cL_1$ and $\cL_2$. In this section we describe how to choose an appropriate regularizer for such losses so that the corresponding OMD algorithm achieves ``good" (sublinear) regret. Following theorem describes our main result for \emph{additive losses}.

\begin{theorem} \textbf{(Main Result)} 
\label{thm:omd_add}
Let $\cL_1, \cL_2 \subseteq [0,1]^{N}$ be two loss spaces, such that $\cL_1 \subseteq \cA_1$, $\cL_2 \subseteq \cA_2$, where $\cA_1, ~\cA_2 \in \R^{N}$ are two centrally symmetric, convex, compact sets. We observe a sequence of loss vectors $\{\l_t\}_{t=1}^{T}$, such that in any round $t \in [T]$, $\l_t = \l_{1t} + \l_{2t}$, where $\l_{1t} \in \cL_1$ and $\l_{2t} \in \cL_2$. Consider two differentiable, strictly convex functions $R_1: \Omega_1 \mapsto \R$, $R_2: \Omega_2 \mapsto \R$, where $\Omega_1, \Omega_2 \supseteq \Delta_{N}$ are two convex sets. The restrictions of $R_1$ and $R_2$ to $\Delta_{N}$ are, respectively, $\alpha_1$- and $\alpha_2$-strongly convex with respect to the norms $||\cdot||^{*}_{\cA_1}$ and $||\cdot||^{*}_{\cA_2}$. 

Also, let parameters $D_1$ and $D_2$ be such that  
%\begin{itemize}
%\item[(a)] $B_{R_1}(\p,\p_1) \le
%B_{R_2}(\p,\p_1) \le 
 $R_1(\p) - R_1(\p_1) \le D_1^{2}$ and 
%\item[(b)]
$R_2(\p) - R_2(\p_1) \le D_2^{2}$ for all $ \p \in \Delta_N$;
%\end{itemize}
where $\p_1 := \argmin_{\p \in \Delta_N} \left( R_1(\p) + R_2(\p) \right)$.

Then (with learning rate $\eta^* = \sqrt{\frac{(D_1^2 + D_2^2) \min(\alpha_1, \alpha_2) }{T}}$, regularizer $R := R_1 + R_2$, and $\p_1$ as the initial prediction) the regret of the OMD algorithm is bounded as
\begin{align*}
\text{Regret}_{T} 
	~ \le ~ 
	 2\sqrt{\frac{\left(D_1^2 + D_2^2 \right) T}{\min(\alpha_1, \alpha_2)}} 
	\,.
\end{align*}
%\begin{align*}
%\text{Regret}_{T}(\OMD(\eta)) 	& \le \ \frac{1}{\eta} \left( D^{2} + \frac{\eta^{2} T}{2\alpha} \right) 
%\end{align*}

% = \sum_{t = 1}^{T}\l_{t}.\p_t - \min_{i = 1}^{N} \sum_{t = 1}^{T}l_t(i) ~ \le ~
%Here $\{\p_{t}\}_{t = 1}^{T}$ denotes the sequential predictions of the OMD algorithm in $T$ rounds, 

%$D^2 = D_1^{2}+D_2^{2}$
%Furthermore, setting , we get 
%\begin{align*}
%\text{Regret}_{T}(\OMD(\eta^{*}))
	%~ \le ~ 
	%D\sqrt{\frac{2T}{\alpha}} ~ = ~ 2D\sqrt{\frac{T}{\min(\alpha_1, \alpha_2)}} 
	%\,.
%\end{align*}
\end{theorem}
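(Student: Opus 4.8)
The plan is to reduce Theorem~\ref{thm:omd_add} to the single-regularizer bound of Theorem~\ref{thm:omd} by showing that $R := R_1 + R_2$ is an admissible regularizer for the additive loss space, with the right strong convexity parameter, diameter, and loss-norm bound. First I would establish strong convexity of $R$ on $\Delta_N$. Since $R_1$ is $\alpha_1$-strongly convex with respect to $\|\cdot\|^*_{\cA_1}$ and $R_2$ is $\alpha_2$-strongly convex with respect to $\|\cdot\|^*_{\cA_2}$, the sum satisfies, for all $\x,\y \in \Delta_N$,
\begin{align*}
R(\x) - R(\y) - \nabla R(\y)^\top(\x - \y) \ge \frac{\alpha_1}{2}\|\x-\y\|^{*2}_{\cA_1} + \frac{\alpha_2}{2}\|\x-\y\|^{*2}_{\cA_2} \ge \frac{\min(\alpha_1,\alpha_2)}{2}\left( \|\x-\y\|^{*2}_{\cA_1} + \|\x-\y\|^{*2}_{\cA_2}\right).
\end{align*}
The natural norm to use in Theorem~\ref{thm:omd} is therefore the one whose dual norm squared is $\|\cdot\|^{*2}_{\cA_1} + \|\cdot\|^{*2}_{\cA_2}$; call this dual norm $\|\cdot\|^{*}$, so that $R$ is $\min(\alpha_1,\alpha_2)$-strongly convex with respect to $\|\cdot\|^{*}$, giving $\alpha = \min(\alpha_1,\alpha_2)$.

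Next I would bound the loss norm $G$. With $\|\cdot\|^{*}$ as above, its primal norm $\|\cdot\|$ is (essentially) the infimal convolution of $\|\cdot\|_{\cA_1}$ and $\|\cdot\|_{\cA_2}$, i.e.\ the atomic norm of $\cA_1 + \cA_2$: one has $\|\u + \v\|_{\cA_1 + \cA_2} \le \|\u\|_{\cA_1} + \|\v\|_{\cA_2}$. Since $\l_t = \l_{1t} + \l_{2t}$ with $\l_{1t} \in \cL_1 \subseteq \cA_1$ and $\l_{2t} \in \cL_2 \subseteq \cA_2$, we get $\|\l_{1t}\|_{\cA_1} \le 1$ and $\|\l_{2t}\|_{\cA_2} \le 1$, hence $\|\l_t\| \le \|\l_t\|_{\cA_1+\cA_2} \le 2$, so $G = 2$ works. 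For the diameter, the hypotheses give $R_i(\p) - R_i(\p_1) \le D_i^2$ on $\Delta_N$, and since $\p_1$ is exactly the minimizer of $R_1 + R_2$ over $\Delta_N$, summing yields $R(\p) - R(\p_1) \le D_1^2 + D_2^2$, so $D^2 = D_1^2 + D_2^2$.

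Finally I would plug $\alpha = \min(\alpha_1,\alpha_2)$, $G = 2$, $D^2 = D_1^2 + D_2^2$ into the optimized bound $DG\sqrt{2T/\alpha}$ of Theorem~\ref{thm:omd}, obtaining
\begin{align*}
\text{Regret}_T \le 2\sqrt{D_1^2 + D_2^2}\cdot\sqrt{\frac{2T}{\min(\alpha_1,\alpha_2)}} = 2\sqrt{2}\sqrt{\frac{(D_1^2+D_2^2)T}{\min(\alpha_1,\alpha_2)}},
\end{align*}
and I would check whether the claimed constant (without the $\sqrt 2$, and with learning rate $\eta^* = \sqrt{(D_1^2+D_2^2)\min(\alpha_1,\alpha_2)/T}$ rather than the $\eta^*$ of Theorem~\ref{thm:omd}) comes from a slightly different bookkeeping — e.g.\ bounding $\|\l_t\|^2 \le \|\l_{1t}\|^2_{\cA_1} + \|\l_{2t}\|^2_{\cA_2} \le 2$ inside the per-step term of the OMD analysis rather than $\|\l_t\|^2 \le 4$, which saves a factor of $\sqrt 2$. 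I expect the main obstacle to be precisely this: correctly identifying which composite norm makes the strong-convexity argument and the loss-bound argument fit together, and tracking the constant through the OMD regret telescoping (rather than invoking Theorem~\ref{thm:omd} as a black box) so that the stated bound, with its specific $\eta^*$ and leading constant $2$, comes out exactly. The strong convexity and diameter steps are routine; the norm identification and constant-chasing are where care is needed.
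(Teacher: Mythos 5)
Your overall architecture is exactly the paper's: reduce to Theorem~\ref{thm:omd} by showing $R=R_1+R_2$ is strongly convex for a suitable composite norm, that $D^2=D_1^2+D_2^2$ (your handling of the shared minimizer $\p_1$ is correct and matches the paper), and that the losses are bounded in the corresponding primal norm. The strong-convexity and diameter steps are fine. The genuine gap is the loss-norm bound $G$, and you correctly sense that your bookkeeping is off by $\sqrt{2}$ but do not close it. Two problems compound here. First, you misidentify the primal norm: the norm whose \emph{dual} is $\bigl(\|\cdot\|_{\cA_1}^{*2}+\|\cdot\|_{\cA_2}^{*2}\bigr)^{1/2}$ is \emph{not} the atomic norm of $\cA_1+\cA_2$ (whose dual is the sum $\|\cdot\|^{*}_{\cA_1}+\|\cdot\|^{*}_{\cA_2}$, the support function of a Minkowski sum being the sum of support functions); it is the $\ell_2$-type infimal convolution $\inf_{\u+\v=\x}\sqrt{\|\u\|_{\cA_1}^2+\|\v\|_{\cA_2}^2}$. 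Second, even for the atomic norm of $\cA=\cA_1+\cA_2$, the bound $\|\u+\v\|_{\cA}\le\|\u\|_{\cA_1}+\|\v\|_{\cA_2}\le 2$ is loose. The paper's key observation (its Lemma~\ref{lem:norm}) is that $\|\u+\v\|_{\cA}\le\max\{\|\u\|_{\cA_1},\|\v\|_{\cA_2}\}\le 1$: if $\u\in t_1\cA_1$ and $\v\in t_2\cA_2$ then $\u+\v\in t_1\cA_1+t_2\cA_2\subseteq t(\cA_1+\cA_2)$ with $t=\max\{t_1,t_2\}$, using that each $\cA_i$ is convex and contains the origin. This gives $G=1$; combined with the paper's strong-convexity constant $\alpha=\tfrac{1}{2}\min(\alpha_1,\alpha_2)$ with respect to $\|\cdot\|^{*}_{\cA}$ (obtained from $\tfrac{\alpha_1}{2}a^2+\tfrac{\alpha_2}{2}b^2\ge\tfrac{\alpha}{4}\cdot 2(a^2+b^2)\ge\tfrac{\alpha}{4}(a+b)^2\ge\tfrac{\alpha}{4}\|\cdot\|_{\cA}^{*2}$ via $\|\x\|^{*}_{\cA}\le\|\x\|^{*}_{\cA_1}+\|\x\|^{*}_{\cA_2}$), Theorem~\ref{thm:omd} yields $G\sqrt{D^2}\sqrt{2T/\alpha}=2\sqrt{(D_1^2+D_2^2)T/\min(\alpha_1,\alpha_2)}$ and the stated $\eta^{*}$ exactly.

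Your own route can also be repaired without Lemma~\ref{lem:norm}: with the root-sum-square dual norm, the correct primal gives $\|\l_{1t}+\l_{2t}\|\le\sqrt{\|\l_{1t}\|_{\cA_1}^2+\|\l_{2t}\|_{\cA_2}^2}\le\sqrt{2}$, i.e.\ $G=\sqrt{2}$ with $\alpha=\min(\alpha_1,\alpha_2)$, which recovers the constant $2$. But as written, with $G=2$ asserted and the duality unverified, the proof proves only the weaker bound $2\sqrt{2}\sqrt{(D_1^2+D_2^2)T/\min(\alpha_1,\alpha_2)}$.
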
 
\hfill \\

A proof of the above theorem appears in Section~\ref{sec:omd_add_proof}. \\

\begin{remark}
In general, the regret guarantee of Theorem~\ref{thm:omd_add} is essentially tight. That is, there exist loss spaces $\cL_1$ and $\cL_2$ such that OMD algorithm obtained via Theorem~\ref{thm:omd_add} provides an order-wise optimal regret bound for the additive loss space $\cL = \cL_1 + \cL_2$; see Appendix~\ref{appendix:tight-examples} for specific examples. 
\end{remark}

The above theorem immediately leads  to the following corollary.

\begin{corollary} \textbf{(New Regret Bounds from Old)}  
\label{corr:omd_add} Suppose $\cL_1, \cL_2 \subseteq [0,1]^{N}$ are two loss spaces such that $\|\l\|_{\cA_1} \le 1, ~\forall \l \in \cL_1$, and $\|\l\|_{\cA_2} \le 1, ~\forall \l \in \cL_2$, where $\cA_1,\cA_2 \in \R^{N}$ are two centrally symmetric, convex, compact sets.
Also, suppose there exists two strictly convex, differentiable functions $R_1: \Omega_1 \mapsto \R$
and $R_2: \Omega_2 \mapsto \R$, ($\Omega_1, \Omega_2 \supseteq \Delta_{N}$, convex) such that OMD with regularizer functions $R_1$ and $R_2$ gives the regret bounds of $D_1\sqrt{\frac{2T}{\alpha_1}}$ and $D_2\sqrt{\frac{2T}{\alpha_2}}$ over loss spaces $\cL_1$ and $\cL_2$, respectively. 
Here, $\alpha_1$ $(\alpha_2)$ is the strong convexity parameter of $R_1$ $(R_2)$ over $\Delta_N$, with respect to the atomic norm $||\cdot||^{*}_{\cA_1}$ $(||\cdot||^{*}_{\cA_2})$. 
%the restrictions of $R_1$ and $R_2$ to $\Delta_{N}$ are respectively $\alpha_1$ and $\alpha_2$-strongly convex with respect to the atomic norms $||\cdot||^{*}_{\cA_1}$ and $||\cdot||^{*}_{\cA_2}$, and  
%\begin{itemize}
%\item[(a)]

In addition let, $D_1$ and $D_2$ are parameters such that, for all $\p \in \Delta_N$, 

\begin{align*}
R_1(\p) - R_1(\p_1')   & \leq  D_1^{2}  \qquad \text{ with }  \p_1' := \argmin_{\q \in \Delta_N} \ R_1(\q) \text{ and} \\
R_2(\p) - R_2(\p_2')  & \leq D_2^{2} \qquad \text{ with }  \p_2' = \argmin_{\q \in \Delta_N} \ R_2(\q).
\end{align*}

Then, for the additive loss space $\cL = \cL_1 + \cL_2$, the OMD algorithm with regularizer function $R = R_1 + R_2$, initial prediction $\p_1 = \argmin_{\p \in \Delta_N} \left( R_1(\p) + R_2(\p) \right)$ (and learning rate $\eta^{*} = \sqrt{\frac{(D_1^2 + D_2^2)\min(\alpha_1, \alpha_2)}{T}}$) leads to the following regret bound:
\begin{align*}
\text{Regret}_{T}
	~ \le ~ 
	2\sqrt{\frac{(D_1^2 + D_2^2)T}{\min(\alpha_1, \alpha_2)}} 
	\,.
\end{align*}
%over loss space $\cL$ and with learning rate $\eta^{*} = \sqrt{\frac{(D_1^2 + D_2^2)\min(\alpha_1, \alpha_2)}{T}}$.
\end{corollary}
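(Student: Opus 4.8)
The plan is to obtain the corollary as an essentially immediate consequence of Theorem~\ref{thm:omd_add}; the hypotheses stated here are just a reparametrization of those required by that theorem, so the bulk of the work is a bookkeeping check rather than new analysis. First I would observe that ``$\|\l\|_{\cA_i} \le 1$ for all $\l \in \cL_i$'' is, by the definition of the atomic norm together with the fact that $\cA_i$ is convex and compact (hence closed), equivalent to the containment $\cL_i \subseteq \cA_i$ assumed in Theorem~\ref{thm:omd_add}. The functions $R_1, R_2$ are strictly convex and differentiable; their restrictions to $\Delta_N$ are $\alpha_1$- and $\alpha_2$-strongly convex with respect to $\|\cdot\|^{*}_{\cA_1}$ and $\|\cdot\|^{*}_{\cA_2}$, respectively; and the observed loss sequence decomposes as $\l_t = \l_{1t} + \l_{2t}$ with $\l_{it} \in \cL_i$ precisely because $\cL = \cL_1 + \cL_2$ is a Minkowski sum. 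All of these match the theorem's requirements.

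The one point that needs a genuine (though short) argument is the mismatch in the anchor point of the ``diameter'' bounds. In Theorem~\ref{thm:omd_add} the constants $D_1, D_2$ must satisfy $R_i(\p) - R_i(\p_1) \le D_i^2$ for all $\p \in \Delta_N$, where $\p_1 = \argmin_{\p \in \Delta_N}(R_1(\p) + R_2(\p))$ is the common starting iterate, whereas the hypotheses of the corollary only supply $R_1(\p) - R_1(\p_1') \le D_1^2$ and $R_2(\p) - R_2(\p_2') \le D_2^2$ with $\p_i'$ the minimizer of $R_i$ alone over $\Delta_N$. Since $\p_1 \in \Delta_N$ and $\p_i'$ minimizes $R_i$ over $\Delta_N$, we have $R_i(\p_i') \le R_i(\p_1)$, hence
\[
R_i(\p) - R_i(\p_1) \ \le \ R_i(\p) - R_i(\p_i') \ \le \ D_i^2 \qquad \text{for all } \p \in \Delta_N,
\]
so the very same $D_1, D_2$ are admissible in Theorem~\ref{thm:omd_add}.

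With all hypotheses verified, I would invoke Theorem~\ref{thm:omd_add} directly with regularizer $R = R_1 + R_2$, initial iterate $\p_1 = \argmin_{\p \in \Delta_N}(R_1(\p) + R_2(\p))$, and learning rate $\eta^{*} = \sqrt{(D_1^2 + D_2^2)\min(\alpha_1,\alpha_2)/T}$, which yields exactly $\text{Regret}_{T} \le 2\sqrt{(D_1^2 + D_2^2)T/\min(\alpha_1,\alpha_2)}$, as claimed. As an internal consistency check, one can also note that the per-space regret bounds $D_i\sqrt{2T/\alpha_i}$ assumed in the statement are precisely what Theorem~\ref{thm:omd} produces for $\cL_i$ under norm $\|\cdot\|_{\cA_i}$ with bound $G = 1$, strong-convexity constant $\alpha_i$, and diameter parameter $D_i$, so no hidden assumption is being smuggled in. The only real obstacle is making sure the anchor-point substitution above is legitimate; everything else is a direct appeal to the main theorem.
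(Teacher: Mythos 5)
Your proposal is correct and follows essentially the same route as the paper: both reduce the corollary to Theorem~\ref{thm:omd_add} by observing that $R_i(\p_i') \le R_i(\p_1)$ (since $\p_i'$ minimizes $R_i$ over $\Delta_N$ and $\p_1 \in \Delta_N$), so that $R_i(\p) - R_i(\p_1) \le R_i(\p) - R_i(\p_i') \le D_i^2$ and the same constants $D_1, D_2$ remain valid for the combined anchor point. The remaining hypothesis-matching you carry out is exactly the bookkeeping the paper leaves implicit.
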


\hfill \\

Note that we can prove this corollary---using Theorem~\ref{thm:omd_add}---by simlply verifying the following inequalities: $R_1(\p) - R_1(\p_1) \le D_1^{2}$ and $R_2(\p) - R_2(\p_1) \le D_2^{2}$, for all $\p \in \Delta_N$ and $ \p_1 := \argmin_{\q \in \Delta_N} \left( R_1(\q) + R_2(\q) \right)$. This follows, since $R_1(\p_1') \leq R_1(\p_1)$ and $R_2(\p_2') \leq R_2(\p_1)$; recall that $\p_1':= \argmin_{\q \in \Delta_N} R_1(\q)$ and $\p_2' := \argmin_{\q \in \Delta_N} R_2(\q)$.

\subsection{Proof of Theorem~\ref{thm:omd_add}}
\label{sec:omd_add_proof}

Before proceeding to prove the theorem, we will establish the following useful lemmas. Let $\cA_1, \cA_2$ be any two convex, compact, centrally symmetric subsets of $\R^{n}$ and $\cA = \cA_1 + \cA_2$ (Minkowski Sum). 
Then, note that $\cA$ is also convex, compact, and centrally symmetric. This follows from the fact that $\conv(\mathcal{X}) + \conv(\mathcal{Y}) = \conv(\mathcal{X+Y})$ for any $\mathcal{X}, \mathcal{Y} \subset \R^n$. In addition, we have

\begin{lemma} %6
\label{lem:norm} $||\x||_{\cA} \le \max\{ ||\x_1||_{\cA_1}, ||\x_2||_{\cA_2} \}$, where $\x = \x_1 + \x_2$, $\x_1 \in \cA_1$, $\x_2 \in \cA_2$. 
\end{lemma}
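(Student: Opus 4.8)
The plan is to show that $\x$ lies in a suitably scaled copy of $\cA = \cA_1 + \cA_2$, after which the estimate on $\|\x\|_{\cA}$ is immediate from the definition of the atomic norm as the gauge of $\cA$. Set $t_1 := \|\x_1\|_{\cA_1}$, $t_2 := \|\x_2\|_{\cA_2}$, and $t := \max\{t_1, t_2\}$ (both are finite, indeed at most $1$, since $\x_i \in \cA_i$). The target is to establish $\x \in (t+\epsilon)\,\cA$ for every $\epsilon > 0$, and then let $\epsilon \downarrow 0$.

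Two elementary observations about Minkowski geometry do all the work. First, each $\cA_i$ contains the origin: being nonempty, convex, and centrally symmetric, it contains $\tfrac12\a + \tfrac12(-\a) = \0$. As a consequence, for any convex set $C$ with $\0 \in C$ and scalars $0 \le s \le r$ one has $sC \subseteq rC$ — indeed, for $r > 0$ and $c \in C$, $sc = r\big((s/r)c + (1-s/r)\0\big) \in rC$, since $(s/r)c + (1-s/r)\0$ is a convex combination of points of $C$. Second, dilation distributes over Minkowski addition: $r\cA_1 + r\cA_2 = r(\cA_1+\cA_2) = r\cA$ for all $r \ge 0$.

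Combining these, fix $\epsilon > 0$. By definition of $\|\cdot\|_{\cA_i}$ there is $r_i$ with $t_i \le r_i < t_i + \epsilon$ and $\x_i \in r_i\cA_i$; since $r_i \le t + \epsilon$ and $\0 \in \cA_i$, the first observation gives $\x_i \in (t+\epsilon)\cA_i$ for $i = 1,2$. Summing and applying the second observation,
\[
\x = \x_1 + \x_2 \;\in\; (t+\epsilon)\cA_1 + (t+\epsilon)\cA_2 \;=\; (t+\epsilon)\cA,
\]
so $\|\x\|_{\cA} \le t + \epsilon$; letting $\epsilon \downarrow 0$ yields $\|\x\|_{\cA} \le t = \max\{\|\x_1\|_{\cA_1}, \|\x_2\|_{\cA_2}\}$. (Alternatively, compactness — hence closedness — of $\cA_i$ makes the infimum defining $\|\x_i\|_{\cA_i}$ attained whenever finite, so $r_i = t_i$ works directly and the $\epsilon$-slack can be dropped.)

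There is essentially no obstacle here; every step is a single set inclusion. The only point deserving a moment's attention is the passage from the infimum in the definition of the atomic norm to a genuine membership $\x_i \in r_i\cA_i$, which is precisely why I either insert the harmless $\epsilon$ or invoke compactness of $\cA_i$.
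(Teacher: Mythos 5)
Your argument is correct and follows the same route as the paper's one-line proof: write $\x = \x_1 + \x_2 \in t_1\cA_1 + t_2\cA_2 \subseteq t(\cA_1+\cA_2)$ with $t = \max\{t_1,t_2\}$ and conclude from the gauge definition of $\|\cdot\|_{\cA}$. The only difference is that you explicitly justify the inclusion $s\,\cA_i \subseteq r\,\cA_i$ for $s \le r$ (via $\0 \in \cA_i$ and convexity) and the passage from the infimum to actual membership $\x_i \in t_i\cA_i$ (via the $\epsilon$-slack or compactness), details the paper compresses into ``clearly.''
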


\begin{proof}
Recall the definition of atomic norm $\|\cdot\|_{\cA}$ from Section \ref{sec:at_norm}. Suppose for any $\x = (\x_1 + \x_2) \in \R^{n}$, $t_1 = ||\x_1||_{\cA_1}$ and $t_2 = ||\x_2||_{\cA_2}$. Clearly, $\x = \x_1 + \x_2 \in (t_1 \cA_1 + t_2 \cA_2) \subseteq t(  \cA_1 + \cA_2 )$, where $t = \max\{t_1,t_2\}$. The proof now follows directly from the definition of atomic norm, $\|\x\|_{\cA}$.
\end{proof}

%\vspace{-20pt}

\begin{lemma} %7
\label{lem:dual-norm} 
$||\x||^{*}_{\cA} \le  ||\x||^{*}_{\cA_1} + ||\x||^{*}_{\cA_2}$, for all $ \x \in \R^n$. 
\end{lemma}

\begin{proof}
Consider any $\x \in \R^n$, 
\begin{align*}
||\x||^{*}_{\cA} 
	& = 
	\sup \{\x.\z ~|~ \z \in \cA \}
\\
	& =  
	\sup \{\x.(\z_1+\z_2) ~|~ \z_1 \in \cA_1, ~ \z_2 \in \cA_2 \}
\\
	& \le  
	\sup \{\x.\z_1 ~|~ \z_1 \in \cA_1 \} + \sup \{\x.\z_2 ~|~ \z_2 \in \cA_2\}
\\
	& \le 
	||\x||^{*}_{\cA_1} + ||\x||^{*}_{\cA_2}
	\,.
\end{align*} 
%\vspace{-20pt}
\end{proof}
%\vspace{-10pt}
% \vspace{-30pt}

\begin{lemma} %8
\label{lem:reg} 
Suppose $\tilde{\Omega} \in \R^n$ be a convex set. Consider two differentiable functions $R_1: \R^n \mapsto \R$ and $R_2: \R^n \mapsto \R$, that are respectively $\alpha_1$ and $\alpha_2$-strongly convex with respect to $||\cdot||^{*}_{\cA_1}$ and $||\cdot||^{*}_{\cA_2}$ over $\tilde{\Omega}$. Then $R = R_1 + R_2$ is $\alpha = \frac{1}{2}\min(\alpha_1, \alpha_2)$-strongly convex with respect to $||\cdot||^{*}_{\cA}$ over $\tilde{\Omega}$. 
\end{lemma}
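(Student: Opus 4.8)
\textbf{Proof plan for Lemma~\ref{lem:reg}.} The plan is to work directly from the (first-order) definition of strong convexity and add the two inequalities, then convert the resulting sum of squared dual norms into a single squared dual norm of $\cA$ using Lemma~\ref{lem:dual-norm}. First I would fix arbitrary $\x, \y \in \tilde\Omega$ and write down the hypotheses: since $R_1$ is $\alpha_1$-strongly convex with respect to $\|\cdot\|^{*}_{\cA_1}$ and $R_2$ is $\alpha_2$-strongly convex with respect to $\|\cdot\|^{*}_{\cA_2}$ over $\tilde\Omega$,
\begin{align*}
R_1(\x) - R_1(\y) - \nabla R_1(\y)\cdot(\x-\y) &\ge \frac{\alpha_1}{2}\left(\|\x-\y\|^{*}_{\cA_1}\right)^2, \\
R_2(\x) - R_2(\y) - \nabla R_2(\y)\cdot(\x-\y) &\ge \frac{\alpha_2}{2}\left(\|\x-\y\|^{*}_{\cA_2}\right)^2.
\end{align*}
Adding these and using $\nabla R = \nabla R_1 + \nabla R_2$ gives
\begin{align*}
R(\x) - R(\y) - \nabla R(\y)\cdot(\x-\y) \ge \frac{\min(\alpha_1,\alpha_2)}{2}\left( \left(\|\x-\y\|^{*}_{\cA_1}\right)^2 + \left(\|\x-\y\|^{*}_{\cA_2}\right)^2 \right).
\end{align*}

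Next I would lower-bound the bracketed quantity by a multiple of $\left(\|\x-\y\|^{*}_{\cA}\right)^2$. Writing $\z := \x - \y$, Lemma~\ref{lem:dual-norm} gives $\|\z\|^{*}_{\cA} \le \|\z\|^{*}_{\cA_1} + \|\z\|^{*}_{\cA_2}$, and combining this with the elementary inequality $(a+b)^2 \le 2(a^2+b^2)$ for $a,b \ge 0$ yields $\left(\|\z\|^{*}_{\cA}\right)^2 \le 2\left( \left(\|\z\|^{*}_{\cA_1}\right)^2 + \left(\|\z\|^{*}_{\cA_2}\right)^2 \right)$, i.e. $\left(\|\z\|^{*}_{\cA_1}\right)^2 + \left(\|\z\|^{*}_{\cA_2}\right)^2 \ge \tfrac12 \left(\|\z\|^{*}_{\cA}\right)^2$. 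Substituting this back in gives
\begin{align*}
R(\x) - R(\y) - \nabla R(\y)\cdot(\x-\y) \ge \frac{\min(\alpha_1,\alpha_2)}{4}\left(\|\x-\y\|^{*}_{\cA}\right)^2 = \frac{\alpha}{2}\left(\|\x-\y\|^{*}_{\cA}\right)^2
\end{align*}
with $\alpha = \tfrac12 \min(\alpha_1,\alpha_2)$, which is exactly the claimed strong convexity of $R$ with respect to $\|\cdot\|^{*}_{\cA}$ over $\tilde\Omega$.

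I do not anticipate a genuine obstacle here: the argument is a two-line addition of the defining inequalities plus the already-proved dual-norm subadditivity (Lemma~\ref{lem:dual-norm}) and the $\ell_2$-versus-$\ell_1$ bound $(a+b)^2 \le 2(a^2+b^2)$. The only point requiring a little care is bookkeeping of inequality directions — strong convexity needs a \emph{lower} bound on the right-hand side, so we must pass from $\|\cdot\|^{*}_{\cA} \le \|\cdot\|^{*}_{\cA_1} + \|\cdot\|^{*}_{\cA_2}$ to a lower bound on $\left(\|\cdot\|^{*}_{\cA_1}\right)^2 + \left(\|\cdot\|^{*}_{\cA_2}\right)^2$, which is why the factor $\tfrac12$ (hence the overall $\tfrac12\min(\alpha_1,\alpha_2)$) appears. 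If one instead uses the twice-differentiable characterization $\x^{T}\nabla^2 R(\w)\x \ge \alpha\|\x\|^2$, the same conclusion follows by adding the two Hessian inequalities and applying the identical dual-norm manipulation.
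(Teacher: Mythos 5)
Your proposal is correct and follows essentially the same route as the paper's own proof: add the two first-order strong-convexity inequalities, lower-bound each $\alpha_i$ by $\min(\alpha_1,\alpha_2)$, and convert the sum of squared dual norms into $\tfrac12\left(\|\x-\y\|^{*}_{\cA}\right)^2$ via Lemma~\ref{lem:dual-norm} together with $(a+b)^2 \le 2(a^2+b^2)$, which is exactly where the extra factor of $\tfrac12$ in $\alpha$ comes from. The Hessian-based variant you mention at the end is also included in the paper's proof as a parallel argument.
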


\begin{proof}
For any $\x,\y \in \tilde{\Omega}$,
\begin{align*}
R(\x) - R(\y) - & \nabla R(\y)(\y)(\x - \y)
\\    
	& = 
	R_1(\x) - R_1(\y) - \nabla R_1(\y)(\y)(\x - \y)
	+
	R_2(\x) - R_2(\y) - \nabla R_2(\y)(\x - \y)
\\
	& =  
	\frac{\alpha_1}{2}\|\x-\y\|_{\cA_1}^{*2}
	+ 
	\frac{\alpha_2}{2}\|\x-\y\|_{\cA_2}^{*2}
\\
	& \ge  
	\frac{\alpha}{2} (2\|\x - \y\|^{*2}_{\cA_1} + 2\|\x\|^{*2}_{\cA_2}), ~~(\alpha = \frac{1}{2}\min(\alpha_1, \alpha_2))
\\
	& \ge  
	\frac{\alpha}{2} (\|\x - \y\|^{*}_{\cA_1} + \|\x - \y\|^{*}_{\cA_2})^{2} \,~~(\text{since } 2(a^2 + b^2) > (a+b)^2, ~ \forall a,b \in \R)
\\
	& \ge 
	\frac{\alpha}{2} (\|\x - \y\|^{*2}_{\cA}) \,~~(\text{via Lemma } \ref{lem:dual-norm})
	\,.
\end{align*} 
Hence, $R = R_1 + R_2$ is $\alpha = \frac{1}{2}\min(\alpha_1, \alpha_2)$-strongly convex with respect to $||\cdot||^{*}_{\cA}$ over $\tilde{\Omega}$. Similarly, if $R_1$ and $R_2$ are twice continuously differentiable, then for any $\x,\w \in \tilde{\Omega}$.

\begin{align*}
\x^{T}\nabla^{2}R(\w)\x     
	& = 
	\x^{T}\nabla^{2}(R_1+R_2)(\w)\x 
\\
	& = 
	\x^{T}\nabla^{2}R_1(\w)\x + \x^{T}\nabla^{2}R_2(\w)\x 
\\
	& \ge  
	\alpha_1 \|\x\|^{*2}_{\cA_1}  + \alpha_2 \|\x\|^{*2}_{\cA_2}
\\
	& \ge  
	\alpha (2\|\x\|^{*2}_{\cA_1} + 2\|\x\|^{*2}_{\cA_2}), ~~(\alpha = \frac{1}{2}\min(\alpha_1, \alpha_2))\\
	& \ge  
	\alpha (\|\x\|^{*}_{\cA_1} + \|\x\|^{*}_{\cA_2})^{2} \,~~(\text{Since } 2(a^2 + b^2) > (a+b)^2, ~ \forall a,b \in \R)
\\
	& \ge 
	\alpha (\|\x\|^{*}_{\cA})^2 \,~~(\text{via Lemma } \ref{lem:dual-norm})
	\,.
\end{align*} 
Thus $R$ is $\frac{1}{2}\min(\alpha_1, \alpha_2)$-strongly convex with respect to $||\cdot||^{*}_{\cA}$ over $\tilde{\Omega}$.
\end{proof}

%\noindent \textbf{Proof of Theorem \ref{thm:omd_add}}
\begin{proof}{\textbf{of Theorem \ref{thm:omd_add}} \ \ }
Consider the norm $\|\cdot\| = \|\cdot\|_{\cA}$, and its dual norm $\|\cdot\|^{*} = \|\cdot\|_{\cA}^{*}$. Note that: 
\begin{enumerate}
\item Lemma~\ref{lem:norm} along with the bounds $\|\l_1\|_{\cA_1} \le 1$ and $\|\l_2\|_{\cA_2} \le 1$ imply that  $\|\l\|_{\cA} \le 1$, for any $\l  = \l_1 + \l_1 \in \cL$. Hence, $\cL \subseteq \cA$.
\item For any $\p \in \Delta_{N}$, $R(\p) - R(\p_1)  = (R_1(\p) - R_1(\p_1)) + (R_2(\p) - R_2(\p_1)) \le D_1^2 + D_2^2$. Hence, $D = \sqrt{D_1^2 + D_2^2}$.
\item $R(\x) = R_1(\x) + R_2(\x)$ is $\frac{\min\{\alpha_1, \alpha_2\}}{2}$-strongly convex with respect to $\|\cdot\|_{\cA}^{*}$, $\forall \x \in \Delta_{N}$ (Lemma \ref{lem:reg}). Hence, $\alpha = \frac{\min\{\alpha_1, \alpha_2\}}{2}$. 
\end{enumerate}
The result now follows by applying Theorem \ref{thm:omd}. 
\end{proof}

\vspace{-20pt}
\subsection{Applications of Theorem~\ref{thm:omd_add}}
\label{sec:omd_add_egs}

In this section, we will derive novel regret bounds for additive loss spaces  ($\cL = \cL_1 + \cL_2$) wherein the individual components ($\cL_1$ and $\cL_2$) are the loss spaces which were considered in Section~\ref{sec:omd_egs}. These results are derived by applying Theorem \ref{thm:omd_add}; details of the proofs appear in Appendix~\ref{app:omd_add_egs}.

\begin{table}[h]
\begin{center}
\begingroup
\setlength{\tabcolsep}{5pt} % Default value: 6pt
\renewcommand{\arraystretch}{1.6} % Default value: 1
\begin{tabular}{|c|c|c|c|}
\hline
\textbf{Loss Space}  & \textbf{Regret Bound} & \textbf{Atomic Norm} & \textbf{Regularizer} \\
\hline
$d$-Low Rank & {} & {$\|\cdot\|_{\cA},~ \cA = \cA_1 + \cA_2,$ where }  & {} \\
{+ $\epsilon$-Noise } & {$\sqrt{2(16d+\epsilon)T}$} & {$\cA_1 =  \left\lbrace \x \in \R^{N} ~|~ \sqrt{\x^{\top}{\H}^{-1}\x} \le 1 \right\rbrace$,} & {$\|\x\|_{\H}^2 + {\epsilon}\|\x\|_{2}^2$} \\
{} & {} & {$\cA_2 = \left\lbrace \x \in \R^{N} ~|~ \frac{1}{\sqrt{\epsilon}}\sqrt{\x^{T}\x} \le 1 \right\rbrace$.} & {} \\
\hline
{$s$-Sparse}  & {} & {$\|\cdot\|_{\cA},~ \cA = \cA_1 + \cA_2,$ where}  & {} \\
{+ $\epsilon$-Noise} & {$2\sqrt{2(1+\epsilon)\ln (s+1)T}$} & {$\cA_1 = \left\lbrace \x \in \R^{N} ~|~ \frac{1}{\sqrt{2}}\|\x\|_{p} \le 1 \right\rbrace$,} & {$\|\x\|_{q}^2 + \epsilon\|\x\|_{2}^2$} \\
{} & {} & {$\cA_2 = \left\lbrace \x \in \R^{N} ~|~ \frac{1}{\sqrt{\epsilon}}\sqrt{\x^{T}\x} \le 1 \right\rbrace$.} & {} \\
\hline
$d$-Low Rank & {} & {$\|\cdot\|_{\cA},~ \cA = \cA_1 + \cA_2,$ where}  & {} \\
{+ $s$-Sparse} & {$2\sqrt{2(16d+1)\ln (s+1)T}$} & {$\cA_1 =  \left\lbrace \x \in \R^{N} ~|~ \sqrt{\x^{\top}{\H}^{-1}\x} \le 1 \right\rbrace$,} & {$\|\x\|_{\H}^2 + \|\x\|_{q}^2$} \\
{} & {} & {$\cA_2 = \left\lbrace \x \in \R^{N} ~|~ \frac{1}{\sqrt{2}}\|\x\|_{p} \le 1 \right\rbrace$.} & {} \\
\hline
\end{tabular}
\endgroup
\caption{Our Results for Additive Loss Spaces}
\label{tab:omd_add_egs}
\end{center}
\end{table}

%Exmaple - 1
\begin{restatable} [Noisy Low Rank]{corollary}{nr}
\label{corr:omd_add_nr}
Suppose $\cL_1 = \{\l \in [0,1]^{N} ~|~  \l = 
\U \v ~\}$ is a $d$ rank loss space $(1 \le d \le \ln N)$, perturbed with noisy losses $\cL_2 = \{ \l \in [0,1]^{N} ~|~ \|\l\|_{2}^{2} \le \epsilon,~ \epsilon >0\}$. Then, the regret of the OMD algorithm over the loss space $\cL = \cL_1 + \cL_2$---with regularizer $R(\x) = \x^{\top}\H\x + {\epsilon}\|\x\|_{2}^2$ and learning rate $\eta^* = \sqrt{\frac{2(16d+\epsilon)}{T}}$---is upper bounded as follows
\begin{align*}
\text{Regret}_{T}
	~ \le ~ 
	\sqrt{2(16d+\epsilon)T} 
	\,.
\end{align*}
%where $\U, \H, \I_{N}$ are as defined in section \ref{sec:omd}.
\end{restatable}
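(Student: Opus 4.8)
The plan is to derive Corollary~\ref{corr:omd_add_nr} as a direct instance of Theorem~\ref{thm:omd_add} (equivalently, Corollary~\ref{corr:omd_add}), by identifying the two constituent atomic norm balls, exhibiting the two regularizers with their containment and strong-convexity constants, and then plugging into the master regret bound. First I would set $\cA_1 = \{\x \in \R^N : \sqrt{\x^\top \H^{-1}\x} \le 1\}$ (an ellipsoidal ball, hence centrally symmetric, convex, compact) with $\cA_2 = \{\x \in \R^N : \tfrac{1}{\sqrt\epsilon}\sqrt{\x^\top\x} \le 1\}$, i.e. the Euclidean ball of radius $\sqrt\epsilon$. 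One checks $\cL_1 \subseteq \cA_1$: since $\H = \I_N + \U^\top \M \U$ comes from the L\"owner–John ellipsoid of $\cL_1$, every low-rank loss $\l = \U\v \in [0,1]^N$ satisfies $\|\l\|_{\cA_1} \le 1$ (this is exactly the containment used in the Hazan et al.\ analysis that is quoted in Section~\ref{sec:omd_egs}). And $\cL_2 \subseteq \cA_2$ is immediate from $\|\l\|_2^2 \le \epsilon$.

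Next I would record, from the single-space results in Section~\ref{sec:omd_egs} (and their appendix proofs), the two ingredients for each regularizer. For $R_1(\x) = \x^\top \H \x = \|\x\|_{\H}^2$: it is strictly convex and differentiable on $\R^N$, its restriction to $\Delta_N$ is $\alpha_1$-strongly convex with respect to $\|\cdot\|^{*}_{\cA_1}$ with $\alpha_1$ a universal constant (the Hazan et al.\ regret of $4\sqrt{dT}$ comes out as $D_1 G \sqrt{2T/\alpha_1}$ with $D_1^2 = O(d)$; specifically the bookkeeping there gives $D_1^2 \le 16d$ up to the constants quoted, and $\alpha_1 = 1$ with respect to $\|\cdot\|^{*}_{\cA_1}$), and $R_1(\p) - R_1(\p_1) \le 16d$ for all $\p \in \Delta_N$. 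For $R_2(\x) = \epsilon\|\x\|_2^2$: from the $\epsilon$-noise row of Table~\ref{tab:omd_egs}, $R_2$ is $\alpha_2 = 1$-strongly convex with respect to $\|\cdot\|^{*}_{\cA_2}$ (which is $\tfrac{1}{\sqrt\epsilon}$ times the $\ell_2$ norm's... rather, the support function of $\cA_2$, equal to $\sqrt\epsilon\,\|\cdot\|_2$), and $R_2(\p) - R_2(\p_1) \le \epsilon$ since $\|\p\|_2 \le 1$ on the simplex. So I take $D_1^2 = 16d$, $D_2^2 = \epsilon$, $\alpha_1 = \alpha_2 = 1$, hence $\min(\alpha_1,\alpha_2) = 1$.

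Then I invoke Theorem~\ref{thm:omd_add} with $R = R_1 + R_2 = \x^\top\H\x + \epsilon\|\x\|_2^2$, $\p_1 = \argmin_{\p\in\Delta_N}(R_1(\p)+R_2(\p))$, and learning rate $\eta^* = \sqrt{(D_1^2+D_2^2)\min(\alpha_1,\alpha_2)/T} = \sqrt{2(16d+\epsilon)/T}$ (the factor $2$ absorbing the $\tfrac12$ in the effective strong-convexity constant $\tfrac12\min(\alpha_1,\alpha_2)$ from Lemma~\ref{lem:reg}, consistent with how $\eta^*$ is stated). The theorem then yields
\begin{align*}
\text{Regret}_T ~\le~ 2\sqrt{\frac{(D_1^2+D_2^2)T}{\min(\alpha_1,\alpha_2)}} ~=~ 2\sqrt{(16d+\epsilon)T}\,,
\end{align*}
and since $2\sqrt{(16d+\epsilon)T} = \sqrt{4(16d+\epsilon)T} \le \sqrt{2(16d+\epsilon)T}$ is \emph{false}, I should instead be careful: the displayed target bound is $\sqrt{2(16d+\epsilon)T}$, so the constant must come out sharper than the generic Corollary~\ref{corr:omd_add} factor. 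I expect the improvement comes because here $G = 1$ exactly and the two dual norms coincide up to scaling so that Lemma~\ref{lem:reg}'s lossy $2(a^2+b^2)\ge(a+b)^2$ step can be avoided (the strong convexity of $R$ w.r.t.\ $\|\cdot\|^{*}_{\cA}$ is actually $\min(\alpha_1,\alpha_2) = 1$, not $\tfrac12$), giving $\alpha = 1$, $D^2 = 16d+\epsilon$, $G = 1$ in Theorem~\ref{thm:omd}, hence $\text{Regret}_T \le DG\sqrt{2T/\alpha} = \sqrt{2(16d+\epsilon)T}$, matching the claim and the stated $\eta^*$.

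The main obstacle is exactly this constant-tracking: verifying that $D_1^2 \le 16d$ (pinning down the Hazan et al.\ bookkeeping so the "$16$" is correct) and, more delicately, arguing that in this particular pairing the strong-convexity constant of $R_1+R_2$ with respect to $\|\cdot\|^{*}_{\cA}$ is $\min(\alpha_1,\alpha_2)$ rather than the generic $\tfrac12\min(\alpha_1,\alpha_2)$ of Lemma~\ref{lem:reg} — i.e.\ that the $\sqrt2$ saving in the final bound is legitimate and is precisely what the quoted learning rate $\eta^* = \sqrt{2(16d+\epsilon)/T}$ encodes. Everything else (central symmetry and compactness of $\cA_1,\cA_2$, the containments $\cL_i\subseteq\cA_i$, the simplex bounds on $R_i(\p)-R_i(\p_1)$, strict convexity and differentiability of $R_1,R_2$) is routine and follows from the definitions and the single-space results already established.
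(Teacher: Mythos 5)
Your overall route is exactly the paper's: the same atomic norm balls $\cA_1,\cA_2$, the same regularizers $R_1(\x)=\x^{\top}\H\x$ and $R_2(\x)=\epsilon\|\x\|_2^2$, the same diameters $D_1^2=16d$, $D_2^2=\epsilon$, followed by an appeal to Theorem~\ref{thm:omd_add}. The gap is in the strong-convexity constants, and your attempted repair of the resulting factor-of-$\sqrt2$ mismatch is not sound. You take $\alpha_1=\alpha_2=1$, but a quadratic form $\x^{\top}\A\x$ has Hessian $2\A$ and is therefore $2$-strongly convex with respect to the norm $\sqrt{\x^{\top}\A\x}$; so $R_1$ is $2$-strongly convex w.r.t.\ $\|\x\|^{*}_{\cA_1}=\sqrt{\x^{\top}\H\x}$ and $R_2$ is $2$-strongly convex w.r.t.\ $\|\x\|^{*}_{\cA_2}=\sqrt{\epsilon}\,\|\x\|_2$ (the appendix proof of the $\epsilon$-noise case states this explicitly; the "$\alpha_2=1$" you read off is a misreading). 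With $\alpha_1=\alpha_2=2$, Lemma~\ref{lem:reg} as stated gives $\alpha=\tfrac12\min(\alpha_1,\alpha_2)=1$, and Theorem~\ref{thm:omd_add} yields $2\sqrt{(16d+\epsilon)T/2}=\sqrt{2(16d+\epsilon)T}$ together with the quoted $\eta^{*}$ — no sharpening of the lemma is needed.

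Your proposed patch — that the lossy step $2(a^2+b^2)\ge(a+b)^2$ in Lemma~\ref{lem:reg} "can be avoided because the two dual norms coincide up to scaling" — is false in general: $\sqrt{\x^{\top}\H\x}$ and $\sqrt{\epsilon}\,\|\x\|_2$ are genuinely different norms unless $\H$ is a multiple of the identity, and the lemma's $\tfrac12$ degradation is exactly what the paper uses. Your final numbers happen to agree with the paper's only because two errors cancel (you halved both $\alpha_i$ and then asserted you could skip the halving in the lemma). The one point you flag correctly as needing verification — that $R_1(\p)-R_1(\p_1)\le 16d$ on the simplex, i.e.\ the bookkeeping behind the constant $16$ inherited from \cite{HazanEtAl16} — is indeed taken on faith in the paper as well.
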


%Exmaple - 2
\begin{restatable}[Noisy Sparse]{corollary}{ns}
\label{corr:omd_add_ns}
Suppose $\cL_1 = \{\l \in [0,1]^{N} ~|~ \|\l\|_{0} = s\}$ is an $s$-sparse loss space $(s \in [N])$, perturbed with noisy losses from $\cL_2 = \{ \l \in [0,1]^{N} ~|~ \|\l\|_{2}^{2} \le \epsilon,~ \epsilon >0 \}$. Then, the regret of the OMD algorithm over the loss space $\cL = \cL_1 + \cL_2$---with regularizer $R(\x) = \|\x\|_{q}^2 + \epsilon\|\x\|_{2}^2$ and learning rate $\eta^* = \sqrt{\frac{1+\epsilon}{(2\ln (s+1)-1)T}}$---is upper bounded as follows
\begin{align*}
\text{Regret}_{T}
	~ \le ~ 
	2\sqrt{2(1+\epsilon)\ln (s+1)T} 
	\,.
\end{align*}
%where $q, s', \I_{N}$ are as defined in section \ref{sec:omd}.
\end{restatable}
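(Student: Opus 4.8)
The plan is to derive Corollary~\ref{corr:omd_add_ns} as a direct instance of Theorem~\ref{thm:omd_add} (equivalently, of Corollary~\ref{corr:omd_add}) by supplying the two constituent OMD setups. First I would recall from Section~\ref{sec:omd_egs} that the $s$-sparse loss space is handled via the atomic norm $\|\cdot\|_{\cA_1}$ with $\cA_1 = \{\x \in \R^N : \tfrac{1}{\sqrt 2}\|\x\|_p \le 1\}$ for $p = 2\ln(s+1)$, and regularizer $R_1(\x) = \|\x\|_q^2$ with $q = \tfrac{p}{p-1}$; there one has strong-convexity parameter $\alpha_1$ and diameter parameter $D_1$ so that $D_1 \sqrt{2T/\alpha_1} = 2\sqrt{\ln(s+1)\,T}$. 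Likewise, the $\epsilon$-noise loss space uses $\cA_2 = \{\x : \tfrac{1}{\sqrt\epsilon}\sqrt{\x^\top\x} \le 1\}$ with regularizer $R_2(\x) = \epsilon\|\x\|_2^2$, giving $\alpha_2, D_2$ with $D_2\sqrt{2T/\alpha_2} = \sqrt{\epsilon T}$. The key observation is that $\cL_1 \subseteq \cA_1$ and $\cL_2 \subseteq \cA_2$ exactly because $\|\l\|_{\cA_1} \le 1$ and $\|\l\|_{\cA_2}\le 1$ hold over the respective spaces (this is the content of the Section~\ref{sec:omd_egs} derivations), so the hypotheses of Theorem~\ref{thm:omd_add} are met with $\cA = \cA_1 + \cA_2$.

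The second step is to read off the concrete constants. From the $\ell_q$-regularizer analysis (the standard fact that $\tfrac12\|\cdot\|_q^2$ is $(q-1)$-strongly convex w.r.t. $\|\cdot\|_q$, combined with $\|\cdot\|_q$ being the relevant dual norm here and $p = 2\ln(s+1)$, $q-1 = \tfrac{1}{p-1} = \tfrac{1}{2\ln(s+1)-1}$), one gets $\alpha_1$ proportional to $\tfrac{1}{2\ln(s+1)-1}$ and a matching $D_1^2 \asymp 1$ after the $\tfrac{1}{\sqrt2}$ normalization, so that $D_1^2/\alpha_1$ collapses to something like $\ln(s+1)$ up to the constant that produces the final $2\sqrt{2(\cdots)}$. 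For the noise term, $R_2(\x) = \epsilon\|\x\|_2^2$ is $2\epsilon$-strongly convex w.r.t. $\|\cdot\|_2$, which is the dual of $\|\cdot\|_{\cA_2} = \tfrac{1}{\sqrt\epsilon}\|\cdot\|_2$ up to the scaling, and $D_2^2 \asymp \epsilon$ over the simplex, so $D_2^2/\alpha_2 \asymp 1$ and more precisely the combination yields the ``$\epsilon$'' versus ``$1$'' split seen in the bound. I would then plug $D_1^2 + D_2^2$ and $\min(\alpha_1,\alpha_2)$ into the Theorem~\ref{thm:omd_add} conclusion $\text{Regret}_T \le 2\sqrt{(D_1^2+D_2^2)T/\min(\alpha_1,\alpha_2)}$, and check that $\min(\alpha_1,\alpha_2) = \alpha_1$ (the sparse regularizer is the weakly convex one, since $\tfrac{1}{2\ln(s+1)-1}$ is small), which is what makes the $\ln(s+1)$ factor survive and the stated learning rate $\eta^* = \sqrt{\tfrac{1+\epsilon}{(2\ln(s+1)-1)T}}$ emerge from $\eta^* = \sqrt{(D_1^2+D_2^2)\min(\alpha_1,\alpha_2)/T}$.

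The main obstacle is bookkeeping of constants rather than any conceptual difficulty: one must track the $\tfrac{1}{\sqrt2}$ and $\tfrac{1}{\sqrt\epsilon}$ normalizations inside the atomic-norm definitions so that the bounds $\|\l\|_{\cA_1} \le 1$, $\|\l\|_{\cA_2}\le 1$ really hold for all $\l \in [0,1]^N$ with the stated sparsity/norm constraints, and simultaneously ensure the strong-convexity constants $\alpha_1,\alpha_2$ are computed with respect to the \emph{dual} norms $\|\cdot\|^*_{\cA_1}, \|\cdot\|^*_{\cA_2}$ (i.e.\ the $\ell_q$ and the scaled $\ell_2$ norms), not the primal ones. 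A careless off-by-$\sqrt2$ or a confusion of which regularizer attains the minimum in $\min(\alpha_1,\alpha_2)$ would change the final constant; so I would verify the identity $D_i\sqrt{2T/\alpha_i}$ equals the single-space bound from Table~\ref{tab:omd_egs} as a consistency check before assembling the additive bound. Once the constants line up, the conclusion is immediate from Theorem~\ref{thm:omd_add}, and the displayed learning rate is just the optimizing choice.
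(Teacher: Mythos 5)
Your proposal is correct and follows essentially the same route as the paper: it instantiates Theorem~\ref{thm:omd_add} with the atomic-norm balls $\cA_1 = \{\x : \tfrac{1}{\sqrt 2}\|\x\|_p \le 1\}$ (for $p = 2\ln(s+1)$) and $\cA_2 = \{\x : \tfrac{1}{\sqrt\epsilon}\|\x\|_2 \le 1\}$, the regularizers $R_1(\x)=\|\x\|_q^2$ and $R_2(\x)=\epsilon\|\x\|_2^2$, and the constants $D_1^2 = 1$, $D_2^2 = \epsilon$, $\alpha_1 = q-1 = \tfrac{1}{2\ln(s+1)-1}$, $\alpha_2 = 2$, exactly as in the paper's Appendix proof. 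Your identification of $\min(\alpha_1,\alpha_2)=\alpha_1$ and the resulting learning rate and bound are the same computation the paper carries out.
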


%Exmaple - 3
\begin{restatable}[Low Rank with Sparse]{corollary}{rs}
\label{corr:omd_add_rs}
Suppose $\cL_1 = \{\l \in [0,1]^{N} ~|~  \l = 
\U \v ~\}$ is a $d$ rank loss space $(1 \le d \le \ln N)$, perturbed with $s$-sparse losses $\cL_2 = \{\l \in [0,1]^{N} ~|~ \|\l\|_{0} = s\}, ~s\in [N]$. Then, the regret of the OMD algorithm over the loss space $\cL = \cL_1 + \cL_2$---with regularizer $R(\x) = \x^{\top}\H\x + \|\x\|_{q}^2$ and learning rate
 $\eta^* = \sqrt{\frac{16d+1}{(2 \ln (s+1) -1)T}}$---is upper bounded as follows 
\begin{align*}
\text{Regret}_{T}
	~ \le ~ 
	2\sqrt{2(16d+1)\ln (s+1)T} 
	\,.
\end{align*}
%where $\U, \v, d, \H, q, s'$ are as defined in section \ref{sec:omd}.
\end{restatable}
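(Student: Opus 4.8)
The plan is to derive the claimed bound from Corollary~\ref{corr:omd_add} (``new bounds from old''), so that the only real work is to supply, for each of the two summands, the containing atomic-norm ball $\cA_i$, the regularizer $R_i$, and its two parameters: $\alpha_i$ (the strong-convexity constant of $R_i$ on $\Delta_N$ with respect to $\|\cdot\|_{\cA_i}^{*}$) and $D_i^2$ (an upper bound on $R_i(\p)-R_i(\argmin_{\q\in\Delta_N}R_i(\q))$ over $\p\in\Delta_N$). For the rank-$d$ component I take, exactly as in \cite{HazanEtAl16}, $\cA_1=\{\x\in\R^{N}:\sqrt{\x^{\top}\H^{-1}\x}\le1\}$ with $\H=\I_N+\U^{\top}\M\U$, $\M$ the matrix of the L\"owner--John ellipsoid of $\cL_1$, and $R_1(\x)=\x^{\top}\H\x$; then $\cL_1\subseteq\cA_1$, the Hessian identity $\nabla^{2}R_1=2\H$ together with $\|\x\|_{\cA_1}^{*}=\sqrt{\x^{\top}\H\x}$ forces $\alpha_1=2$, and the estimate $\p^{\top}\H\p\le16d$ on $\Delta_N$ (the same one that underlies their $4\sqrt{dT}$ bound, which in the notation of Theorem~\ref{thm:omd} reads $D_1\sqrt{2T/\alpha_1}=4\sqrt{dT}$) gives $D_1^{2}=16d$. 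For the $s$-sparse component I take, exactly as in Table~\ref{tab:omd_egs}, $\cA_2=\{\x:\tfrac1{\sqrt2}\|\x\|_p\le1\}$ with $p=2\ln(s+1)$, $q=\tfrac{p}{p-1}$, and $R_2(\x)=\|\x\|_q^{2}$: every $\l\in\cL_2$ has at most $s$ coordinates, all lying in $[0,1]$, so $\|\l\|_p\le s^{1/p}$ and hence $\|\l\|_{\cA_2}\le1$; the textbook fact that $\tfrac12\|\cdot\|_q^{2}$ is $(q-1)$-strongly convex with respect to $\|\cdot\|_q$, transported through $\|\cdot\|_{\cA_2}^{*}=\sqrt2\,\|\cdot\|_q$, yields $\alpha_2=\tfrac1{p-1}=\tfrac1{2\ln(s+1)-1}$; and $R_2(\p)=\|\p\|_q^{2}\le\|\p\|_1^{2}=1$ on $\Delta_N$ gives $D_2^{2}=1$.

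Granting these, the rest is arithmetic. The additive regularizer is $R=R_1+R_2=\x^{\top}\H\x+\|\x\|_q^{2}$, as in the statement. For $s\ge2$ we have $2\ln(s+1)-1\ge2\ln3-1>\tfrac12$, hence $\alpha_2\le2=\alpha_1$ and $\min(\alpha_1,\alpha_2)=\alpha_2=\tfrac1{2\ln(s+1)-1}$ (the case $s=1$ is checked directly). Feeding $D_1^{2}+D_2^{2}=16d+1$ and $\min(\alpha_1,\alpha_2)=\tfrac1{2\ln(s+1)-1}$ into Corollary~\ref{corr:omd_add} gives precisely the stated learning rate $\eta^{*}=\sqrt{(D_1^{2}+D_2^{2})\min(\alpha_1,\alpha_2)/T}=\sqrt{(16d+1)/((2\ln(s+1)-1)T)}$ and the regret bound
\[
\text{Regret}_T\ \le\ 2\sqrt{\frac{(D_1^{2}+D_2^{2})T}{\min(\alpha_1,\alpha_2)}}\ =\ 2\sqrt{(16d+1)(2\ln(s+1)-1)T}\ \le\ 2\sqrt{2(16d+1)\ln(s+1)T},
\]
the last step merely dropping the $-1$. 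The remaining hypotheses of Corollary~\ref{corr:omd_add} are immediate: $R_1$ is a positive-definite quadratic form on all of $\R^{N}$ and $R_2$ is smooth and strictly convex on the open positive orthant, which contains $\Delta_N$; and the passage from $\argmin R_i$ to $\argmin(R_1+R_2)$ as the basepoint for the $D_i$ is the one-line monotonicity argument recorded just after the statement of Corollary~\ref{corr:omd_add}.

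The only genuinely non-routine ingredient is imported wholesale from \cite{HazanEtAl16}: that $\H$, built from the L\"owner--John ellipsoid of a set of affine dimension at most $d$ sitting inside $[0,1]^{N}$, simultaneously certifies $\|\l\|_{\cA_1}\le1$ for every $\l\in\cL_1$ and $\p^{\top}\H\p\le16d$ for every $\p\in\Delta_N\subseteq[0,1]^{N}$. This is where the constant $16d$, and hence the rank dependence of the final bound, comes from, and it is the one point at which a substantive geometric argument is needed rather than the Minkowski-sum/$\ell_p$-$\ell_q$-duality bookkeeping already packaged into Lemmas~\ref{lem:norm}--\ref{lem:reg} and Theorem~\ref{thm:omd_add}. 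One caveat deserves a remark in the write-up: the containment $\cL_2\subseteq\cA_2$ holds with normalizing constant exactly $\tfrac1{\sqrt2}$ only for small $s$, since $s^{1/p}=s^{1/(2\ln(s+1))}\to\sqrt e>\sqrt2$ as $s\to\infty$; in general one should use a slightly larger normalizing constant (equivalently, an $O(1)$ factor in $G$), which does not change the order of the regret bound.
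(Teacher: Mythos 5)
Your proof is correct and follows essentially the same route as the paper's: the same atomic-norm balls $\cA_1,\cA_2$, the same regularizers $R_1(\x)=\x^{\top}\H\x$ and $R_2(\x)=\|\x\|_q^{2}$, and the same constants $D_1^2=16d$, $D_2^2=1$, $\alpha_1=2$, $\alpha_2=q-1=\tfrac{1}{2\ln(s+1)-1}$, fed into the additive-loss machinery of Theorem~\ref{thm:omd_add} (the paper's appendix even contains a typo, $D_2^2=2\epsilon$, that your value silently corrects). Your closing caveat---that the normalization $\tfrac{1}{\sqrt2}\|\cdot\|_{p}$ fails to contain all $s$-sparse vectors of $[0,1]^{N}$ once $s$ is moderately large, since $s^{1/p}\to\sqrt{e}>\sqrt2$---is a genuine issue the paper itself overlooks, and your remedy of enlarging the normalizing constant by an $O(1)$ factor is the right one.
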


\section{Lower Bounds}
\label{sec:lb}

In this section we will derive lower bounds for online learning with experts' advice problem for different structured loss spaces. We first state the lower bound for a general loss space $\cL \subseteq \R^{N}$; see Theorem~\ref{thm:LB_Gen}. The proof of this theorem appears in Appendix~\ref{app:lb} and is based on a lower-bound result of \cite{Shai+09} for online learning of binary hypotheses classes in terms of its \emph{Littlestone's dimension}.

%Using the above result we derive the regret lower bound for online learning with experts' advice problem for arbitrary loss spaces $\cL \subseteq [-s,s]^{N}$ with specified sparsity. 

\begin{restatable}[\textbf{Generic Lower Bound}]{theorem}{LBGen}
\label{thm:LB_Gen}
Given parameters $V>0$ and $s >0$ along with any online learning algorithm, there exists a sequence of $V$-dimensional loss vectors $\l_1,\l_2, \ldots ,\l_T \in \{ 0, \pm s \}^{N}$ of sparsity $2^V \leq N$  (i.e., $\text{rank}\left( \left[ \l_1,\l_2, \ldots ,\l_T \right] \right) =V$ and $\|\l_{t}\|_{0} = 2^V$, for all $t \in [T]$) such that 
\begin{align*}
\text{Regret}_{T}
	~ \ge ~
	2s\sqrt{\frac{VT}{8}} 
	\,.
\end{align*}
\end{restatable}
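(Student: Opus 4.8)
The plan is to reduce the structured online-learning problem to a known hard instance for online learning of a binary hypothesis class and then invoke the Littlestone-dimension lower bound of \cite{Shai+09}. Concretely, I would consider the class of threshold-like functions on $V$ bits: index the $N \ge 2^V$ experts (or at least $2^V$ of them) by the binary strings $\bs{\sigma} \in \{0,1\}^V$, and consider the hypothesis class $\mathcal{H}$ consisting of the $V$ "coordinate" functions $h_j(\bs{\sigma}) = \sigma_j$, $j \in [V]$. This class has Littlestone dimension exactly $V$: the complete binary tree of depth $V$ that branches on coordinate $j$ at level $j$ is shattered, since the string read off any root-to-leaf path is classified consistently by the corresponding coordinate hypothesis. (Equivalently, one can take $\mathcal{H}$ to be a class of VC-dimension $V$ realized as a mistake tree of depth $V$.)

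Next I would translate this into a loss sequence for the experts problem. In the standard online-classification-to-experts reduction, at each round the adversary presents an example $\bs{\sigma}_t \in \{0,1\}^V$; each "expert" is a hypothesis $h \in \mathcal{H}$, and the loss of expert $h$ in round $t$ is $s \cdot \mathbf{1}[h(\bs{\sigma}_t) \neq y_t]$ for the adversarially chosen label $y_t$. To match the theorem's normalization I would actually use losses in $\{0,\pm s\}$ by writing $l_t(h) = s\,(\text{something in }\{0,1\})$, or rescale so that the two possible loss values for each expert in a round are $0$ and $s$; the "$\pm s$" form is obtained by an affine shift that does not affect regret. The adversary plays the example/label pairs along the shattered mistake tree adaptively (following the learner's current prediction down the branch on which the learner errs more), which is exactly the strategy that forces $\Omega(\sqrt{VT})$ mistakes against any learner when the tree has depth $V$; the quantitative bound $2s\sqrt{VT/8}$ comes from the standard Cesa-Bianchi/Littlestone-type lower-bound argument (random labels along the path, anti-concentration / Khintchine), with the factor $2s$ accounting for the loss scale. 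The sparsity claim $\|\l_t\|_0 = 2^V$ is handled by only using the $2^V$ experts indexed by $\{0,1\}^V$ and noting that among these, in each round, exactly half (those $\bs{\sigma}$ with the queried coordinate equal to the "wrong" bit) are assigned loss $s$ — so by a symmetric choice of which half gets $\pm s$ vs $0$ we can make all $2^V$ coordinates of $\l_t$ nonzero; similarly $\text{rank}([\l_1,\ldots,\l_T]) = V$ follows because each $\l_t$ is determined (as a $\pm s$-pattern on $\{0,1\}^V$) by which of the $V$ coordinate-queries was made and its threshold value, so all loss vectors lie in a fixed $V$-dimensional span of "coordinate indicator" vectors.

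The main obstacle I anticipate is the bookkeeping that simultaneously achieves the three requirements — $\text{rank} = V$, $\|\l_t\|_0 = 2^V$, and loss values in $\{0,\pm s\}$ — while preserving the clean Littlestone-dimension lower bound. The cleanest route is probably to fix an explicit $V \times 2^V$ sign matrix (e.g. built from the $\pm 1$ Hadamard-type structure of the bits $\bs{\sigma}$) whose rows span a $V$-dimensional space and each of whose rows is fully supported, let the adversary's admissible loss vectors be $\pm s$ times these rows (this pins down rank $= V$ and full sparsity $2^V$), and then check that the corresponding induced hypothesis class still has Littlestone dimension $\ge V$ so that \cite{Shai+09} gives the $\Omega(\sqrt{VT})$ regret; tracking the constants through the random-sign argument to land exactly at $2s\sqrt{VT/8}$ is routine but is where the care is needed. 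I would defer the full constant-chasing to the appendix and in the main text just cite the Littlestone-dimension lower bound as the engine.
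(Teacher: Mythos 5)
Your overall strategy---reduce to online binary classification, invoke the Littlestone-dimension lower bound of \cite{Shai+09}, and realize the hard instance as $\pm s$ multiples of the columns of a $\{\pm 1\}$ sign matrix built from the hypercube, padded with $N-2^V$ zero rows---is exactly the route the paper takes. However, there is a genuine error in the step that is supposed to supply the $\sqrt{V}$: the class $\cH=\{h_1,\dots,h_V\}$ of coordinate functions $h_j(\bs{\sigma})=\sigma_j$ on $\{0,1\}^V$ does \emph{not} have Littlestone dimension $V$. Any finite class satisfies $\mathrm{Ldim}(\cH)\le \log_2|\cH|$ (distinct root-to-leaf paths of a shattered tree disagree at their branching node, so they must be witnessed by distinct hypotheses, forcing $2^{d}\le|\cH|$ for a depth-$d$ shattered tree); hence your class of $V$ hypotheses has Littlestone (and VC) dimension only $\lfloor\log_2 V\rfloor$, and the lower bound it yields is $\Omega(s\sqrt{T\log V})$, not $\Omega(s\sqrt{TV})$. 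The depth-$V$ tree you describe is actually shattered by the \emph{dual} class: take the instance space to be the $V$ coordinate queries $\{\e_1,\dots,\e_V\}$ and the hypotheses to be the $2^V$ binary strings $\bs{\sigma}$, each acting as the memorization function $\e_j\mapsto\sigma_j$. The label sequence read off any root-to-leaf path is itself a string, and the consistent hypothesis is that string---not a coordinate projection. This dual class has $\mathrm{Ldim}=V$, and it is precisely the paper's construction (the rows of its matrix $\U$ are the $2^V$ hypercube vertices, the instances are $\e_1,\dots,\e_V$).

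Once the roles of instances and hypotheses are swapped in this way, the remainder of your plan goes through and coincides with the paper's: the experts are the $2^V$ strings plus $N-2^V$ dummy experts whose rows of $\U$ are zero (these produce the zero coordinates, so $\|\l_t\|_0=2^V$ exactly), each $\l_t$ is $\pm s$ times a column of the sign matrix so $\mathrm{rank}([\l_1,\dots,\l_T])=V$, and the affine map between the $\{0,s\}$ classification losses and the $\{\pm s\}$ expert losses contributes the factor of $2$ in $2s\sqrt{VT/8}$. One detail you should make explicit (the paper does so via Lemma~\ref{lem:LB_Shai_mod}): the experts-problem learner induces a fractional prediction $\hat{y}_t=\sum_i p_t(i)h_i(\v_t)\in[0,s]$ rather than a binary one, so the classification lower bound must be stated for non-binary predictions; the random-label/Khinchine argument is insensitive to this since $\E_{y_t}|\hat{y}_t-y_t|\ge s/2$ for any $\hat{y}_t$ measurable with respect to the past.
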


Proof of the above theorem is deferred to Appendix \ref{app:LB_Gen}. The following corollary is a direct consequence of Theorem \ref{thm:LB_Gen}.

\begin{corollary}
\label{cor:lb_vc}
Given parameters $V \in [\ln N]$ and $s>0$ along with any online learning algorithm, there exists a sequence of loss vectors $\l_1,\l_2, \ldots ,\l_T \in [-s, s]^{N}$ of VC-dimension $V$ (i.e., $VC(\{\l_1,\l_2, \ldots ,\l_T\}) = V$), such that
\begin{align*}
\text{Regret}_{T}
	~ \ge ~
	2s\sqrt{\frac{VT}{8}} 
	\,.
\end{align*} 
\end{corollary}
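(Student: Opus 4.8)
The plan is to derive Corollary~\ref{cor:lb_vc} directly from Theorem~\ref{thm:LB_Gen} by relating the VC-dimension of the set of loss vectors to the rank/sparsity parameters used there. First I would recall that Theorem~\ref{thm:LB_Gen} produces, for any parameters $V>0$ and $s>0$ (with $2^V \le N$), a loss sequence $\l_1,\dots,\l_T \in \{0,\pm s\}^N$ that has rank exactly $V$ and sparsity exactly $2^V$, and forces $\text{Regret}_T \ge 2s\sqrt{VT/8}$. The only gap to fill is: \emph{why does such a sequence have VC-dimension $V$?} Here one views the $T$ loss vectors as a function class / point set where the $N$ coordinates play the role of the $N$ "sample points" being (potentially) shattered, and the $T$ vectors are the "concepts." Because the construction in Theorem~\ref{thm:LB_Gen} (following \cite{Shai+09}) is built from a shattered set of size exactly $V$ realized over a block of $2^V$ coordinates — each of the $2^V$ coordinates corresponds to one of the $2^V$ distinct sign patterns on a set of $V$ "active" indices — the natural $\pm$-valued class associated with these coordinates shatters a set of size $V$ and no larger. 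So the VC-dimension of $\{\l_1,\dots,\l_T\}$ (appropriately interpreted, e.g. via the sign patterns of the coordinate-indexed functions) equals $V$.

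The concrete steps, in order, are: (i) state the interpretation of a finite set of loss vectors in $[-s,s]^N$ as inducing a binary class by thresholding coordinates at $0$ (or by the $\{0,\pm s\}$ sign structure), making "VC-dimension of $\{\l_1,\dots,\l_T\}$" well-defined; (ii) invoke Theorem~\ref{thm:LB_Gen} with the given $V \in [\ln N]$ — note $V \le \ln N$ guarantees $2^V \le N$, so the hypotheses of Theorem~\ref{thm:LB_Gen} are met — and $s$ as given, obtaining the sequence in $\{0,\pm s\}^N \subseteq [-s,s]^N$ with the stated regret lower bound $2s\sqrt{VT/8}$; (iii) verify that the coordinate block of size $2^V$ used in that construction realizes all $2^V$ sign patterns on a fixed set of $V$ coordinates, hence shatters a set of size $V$, so $VC(\{\l_1,\dots,\l_T\}) \ge V$; (iv) argue the matching upper bound $VC(\{\l_1,\dots,\l_T\}) \le V$ — this follows because the sequence has rank $V$, so all coordinate-functions lie in a $V$-dimensional linear space, and a standard fact (linear classes of dimension $V$ have VC-dimension $V$, or a direct counting/Sauer-type argument) caps the VC-dimension at $V$. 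Combining (iii) and (iv) gives $VC(\{\l_1,\dots,\l_T\}) = V$, and together with (ii) this is exactly the claim.

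The main obstacle is step (iv) — and more fundamentally, pinning down the \emph{definition} of VC-dimension for a finite set of real-valued loss vectors so that the rank bound genuinely caps it. One has to be careful that the chosen binarization is the one for which the Littlestone-dimension lower bound argument in Appendix~\ref{app:lb} actually operates, and that rank-$V$ over $\R$ really does translate into a $V$-dimensional (affine or linear) function class whose VC-dimension is at most $V$; if the binarization is by sign, the relevant statement is that halfspaces / sign patterns of functions from a $V$-dimensional space have VC-dimension $\le V$, which is classical. I would also double-check the edge condition that $V \in [\ln N]$ (equivalently $1 \le V \le \ln N$) is exactly what is needed to invoke Theorem~\ref{thm:LB_Gen}, since the latter requires $2^V \le N$. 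Everything else is a direct quotation of Theorem~\ref{thm:LB_Gen}, so the corollary is essentially a re-packaging of the generic bound in VC-dimension language.
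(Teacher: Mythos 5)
Your proposal follows the same overall route as the paper: the corollary is obtained by instantiating Theorem~\ref{thm:LB_Gen} (your step (ii) is exactly the paper's invocation, and $V\le \ln N$ does supply $2^V\le N$), so the only real content is certifying that the witnessing sequence has VC-dimension $V$. Here the two arguments diverge in detail. The paper's proof is a one-liner: it considers the superset $L=\{\l\in\{0,\pm s\}^N \mid \|\l\|_0\le 2^{V}\}$ and asserts $VC(L)=V$ directly from its Definition~\ref{def:vc} -- which, note, is a ``dual'' notion in which the shattered objects are the loss vectors themselves, witnessed by coordinates $i\in[N]$, with a separate threshold $t_c$ attached to each vector. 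Your argument instead analyzes the specific constructed sequence: the lower bound $VC\ge V$ comes from the fact that the $2^V$ active coordinates realize every sign pattern on the $V$ instance directions (a direction the paper's superset argument does not actually supply, since $VC$ of a subsequence could a priori be smaller than $VC(L)$), and the upper bound comes from the rank-$V$ structure. The one wrinkle is your step (iv): under the paper's per-vector-threshold definition, the Dudley-type bound for a $V$-dimensional linear space of row functions only yields $VC\le V+1$, because the fixed offset vector $(t_c)_c$ adds one affine dimension; your caveat about needing the binarization to be ``by sign'' at zero is therefore exactly the right worry. For the particular sequence of Theorem~\ref{thm:LB_Gen} the clean fix is structural rather than rank-based: every loss vector equals $\pm s\,\U\e_j$ for one of only $V$ values of $j$, and no shattered set (for any thresholds) can contain two vectors that are equal or antipodal, so at most $V$ vectors can be shattered. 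With that substitution your proof is complete, and is in fact more explicit than the paper's.
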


\begin{proof}
Consider the set of loss vectors $L = \{\l \in \{0,\pm s\}^N ~|~ \|\l\|_{0} \le 2^{V} \}$.
From the definition of VC dimension (see Definition \ref{def:vc}, Appendix \ref{app:lb}), it follows that $VC(L) = V$. Hence, Theorem \ref{thm:LB_Gen} implies the stated claim.%\footnote{Note that, the VC dimension for any sequence of loss vectors $\cL \in [-s,s]^{N}$ can be atmost $\ln N$, as it is independent of the scale paprameter $s$. And if $VC(\{\l_1,\l_2, \ldots ,\l_T\}) = \ln N$ we recover the $\Omega(\sqrt{\ln N T})$ lower bound for the standard losses; see, e.g., \cite{Bubeck11} and \cite{PLG06}.}
\end{proof}

Next we instantiate Theorem \ref{thm:LB_Gen} to derive the regret lower bounds for the structured loss spaces introduced in Section \ref{sec:omd_egs}.
%Example-1a
In particular, we begin by stating a lower bound for sparse loss vectors. 
\begin{corollary} {(Lower Bound for Sparse losses)}
\label{corr:lb_sparse}
Given $k \in [N]$ and $s>0$ along with any online learning algorithm, there exists a sequence of loss vectors $\l_1,\l_2, \ldots ,\l_T \in [-s, s]^{N}$ of sparsity $k \in N$ (i.e. $\|\l_t\|_{0} = k $ for all $t \in [T]$) such that

\begin{align*}
\text{Regret}_{T} 
	~ \ge ~
	2s\sqrt{\frac{\floor[\ln k] T}{8}} 
	\,.
\end{align*} 
\end{corollary}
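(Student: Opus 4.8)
\textbf{Proof plan for Corollary~\ref{corr:lb_sparse}.}
The plan is to reduce directly to the generic lower bound of Theorem~\ref{thm:LB_Gen}, which constructs hard sequences of loss vectors of sparsity exactly $2^V$ whenever $2^V \le N$. Given the target sparsity parameter $k \in [N]$, the natural choice is to take $V := \lfloor \ln k \rfloor$ (using the natural logarithm, matching the convention throughout the paper, e.g.\ in Table~\ref{tab:omd_egs}). First I would verify feasibility: with this choice, $2^V = 2^{\lfloor \ln k \rfloor} \le 2^{\ln k} = e^{(\ln 2)(\ln k)} \le k$ since $\ln 2 < 1$ and $k \ge 1$, so $2^V \le k \le N$, and Theorem~\ref{thm:LB_Gen} applies with this $V$ and the given scale $s$. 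It produces a sequence $\l_1, \dots, \l_T \in \{0, \pm s\}^N$ with $\|\l_t\|_0 = 2^V$ for all $t$ and $\text{Regret}_T \ge 2s\sqrt{VT/8} = 2s\sqrt{\lfloor \ln k \rfloor T / 8}$, which is exactly the claimed bound.

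The one gap is that Theorem~\ref{thm:LB_Gen} gives sparsity $2^V$, whereas the corollary asks for sparsity exactly $k$. Since $2^V \le k \le N$, I would patch each $\l_t$ by padding: pick a fixed set $S \subseteq [N]$ of $k$ coordinate indices containing the support of every $\l_t$ (possible because all the $\l_t$ share the same support in the construction, or at worst because $T$ of them together occupy at most $N$ coordinates — but in fact the construction of Theorem~\ref{thm:LB_Gen} uses a common support), and then set the coordinates in $S$ that are currently zero to a constant value, say $+s$, identically across all rounds $t$. Adding the same constant vector $\c$ (supported on $S$, entries in $\{0, s\}$) to every $\l_t$ shifts both $\sum_t \p_t \cdot \l_t$ and $\min_i \sum_t l_t(i)$ by the same amount $T \cdot c_i$ only if $\c$ were a multiple of the all-ones vector; in general a common additive shift changes the regret, so more care is needed here.

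The clean fix, and the step I expect to require the most attention, is to choose the padding so that the regret is preserved. The safest route: rather than padding with nonzero constants, observe that the coordinates outside the support of the hard instance are simply \emph{never used} by the adversary, so restricting attention to an algorithm that must play over all $N$ experts while the adversary only ever charges loss on a fixed $2^V$-subset already yields the $2s\sqrt{VT/8}$ bound; to reach sparsity exactly $k$, pad with a block of $k - 2^V$ coordinates on which the adversary places loss $+s$ in \emph{every} round. Then for those coordinates $\sum_t l_t(i) = sT$, which is the maximum possible, so the minimizer $\min_{i} \sum_t l_t(i)$ is unaffected (it is attained within the original block), while $\sum_t \p_t \cdot \l_t$ only increases by the mass $\p_t$ places on padding coordinates times $s$ — hence $\text{Regret}_T$ can only go up. Therefore the lower bound $2s\sqrt{\lfloor \ln k\rfloor T/8}$ carries over to sequences of sparsity exactly $k$, completing the proof. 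I would double-check the edge case $k = 1$, where $\lfloor \ln 1 \rfloor = 0$ and the bound is vacuously $0$, and the case $2^{\lfloor \ln k\rfloor} = k$ where no padding is needed.
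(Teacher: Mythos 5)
Your proposal is correct and follows the route the paper intends (the paper itself states this corollary without proof): instantiate Theorem~\ref{thm:LB_Gen} with parameter $V=\lfloor \ln k\rfloor$ and scale $s$, noting $2^{\lfloor \ln k\rfloor}\le k^{\ln 2}\le k\le N$. Your padding step --- placing loss $+s$ on $k-2^{V}$ extra coordinates in every round, which leaves $\min_i\sum_t l_t(i)$ unchanged (the best expert in the hard instance has negative cumulative loss, so the padded coordinates, with cumulative loss $sT$, never attain the minimum) while only increasing the learner's loss --- correctly fills in the exact-sparsity-$k$ requirement that the paper leaves implicit, and your earlier worry about constant shifts is indeed resolved by this observation.
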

 
%\begin{proof}
%Results follows directly from the statement of theorem \ref{thm:LB_Gen}.
%\end{proof}

%\noindent \textbf{Remark-$1$: Lower bound for standard loss.}
%When $p = N, s = 1$, we recover the known $\Omega(\sqrt{\floor[\ln N]T})$ lower bound for standard loss space $\cL = [-1,1]^N$ \citep{OCO-SS12, Bubeck11}.

%\noindent \textbf{Remark-$2$: Lower bound for $L_{p}$ loss.} Using Corollary \ref{corr:lb_sparse} one can derive lower bound for $\L_{\p}$ loss spaces, i.e. $\cL = \{l \in [-1,1]^{N} ~|~ \|l\|_p \le 1, ~p \in \N\}$.  Given $p>0$, for any online learning with expert's advice algorithm, there exists a sequence of loss vectors $\l_1,\l_2, \ldots ,\l_T \in [-1, 1]^{N}$ with $\|\l_t\|_{p} \le 1, ~\forall t \in [T], 1 < p < \ln N$, such that
%\begin{align*}
%\text{Regret}_{T} 
%	~ \ge ~
%	\frac{1}{2}\sqrt{\frac{p T}{8}} 
%	\,.
%\end{align*}

%Note that when $k = N$, we recover the known $\Omega(\sqrt{\ln N T})$ lower bound for standard loss space $[-1,1]^{N}$~\citep{Bubeck11, PLG06}. 

\hfill \\

Along the same lines, Theorem~\ref{thm:LB_Gen} leads to a lower bound for losses with small $\ell_p$ norm.

%cor-16
\begin{corollary} {(Lower Bound for $\ell_{p}$ losses)} Given $p \le [\ln N]$ and $s >0$ along with any online learning algorithm, there exists a sequence of loss vectors $\l_1,\l_2, \ldots ,\l_T \in [-s, s]^{N}$ of $\ell_p$ norm at most $s$ (i.e., $\|\l_t\|_{p} \le s$) such that
\begin{align*}
\text{Regret}_{T} 
	~ \ge ~
	s\sqrt{\frac{p T}{8}} 
	\,.
\end{align*}
\end{corollary}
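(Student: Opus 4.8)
The plan is to reduce the $\ell_p$-loss lower bound to the sparse-loss lower bound of Corollary~\ref{corr:lb_sparse}, exactly as Corollary~\ref{corr:lb_sparse} itself was reduced to the generic Theorem~\ref{thm:LB_Gen}. The guiding observation is that a $k$-sparse vector whose nonzero entries all lie in $\{\pm c\}$ has $\ell_p$-norm exactly $c\,k^{1/p}$, so by choosing the magnitude $c$ and the sparsity $k$ appropriately we can force such vectors to have $\ell_p$-norm at most $s$ while still carrying a Littlestone's-dimension (equivalently, rank) of roughly $\ln k$.

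First I would fix the construction. Given $p \le \ln N$ and $s>0$, set $k := 2^{\lfloor p \rfloor}$ and $c := s\,k^{-1/p}$, and consider the loss set $L := \{ \l \in \{0, \pm c\}^N : \|\l\|_0 \le k \}$; note $k \le 2^p \le N$, so this set is well-defined inside $\R^N$. For any $\l \in L$ we have $\|\l\|_p = c\,\|\l\|_0^{1/p} \le c\,k^{1/p} = s$, so every vector produced will satisfy the required $\ell_p$ constraint. Next I would invoke Theorem~\ref{thm:LB_Gen} with the dimension parameter $V := \lfloor p \rfloor$ and magnitude $s := c$: since $2^V = k \le N$, the theorem supplies a sequence $\l_1,\dots,\l_T \in \{0,\pm c\}^N$ with $\|\l_t\|_0 = 2^V = k$ and $\mathrm{rank}([\l_1,\dots,\l_T]) = V$, against which any algorithm incurs regret at least $2c\sqrt{VT/8}$. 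All of these $\l_t$ lie in $L$, hence have $\ell_p$-norm at most $s$, and also lie in $[-s,s]^N$ since $c \le s$.

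Finally I would simplify the bound: the regret is at least $2c\sqrt{\lfloor p\rfloor T/8} = 2\,s\,k^{-1/p}\sqrt{\lfloor p\rfloor T/8}$. Because $k = 2^{\lfloor p\rfloor} \le 2^p$, we get $k^{-1/p} \ge 2^{-1} = 1/2$, so the lower bound is at least $s\sqrt{\lfloor p\rfloor T/8}$, and since $\lfloor p \rfloor$ can be replaced by $p$ at the cost of nothing better one typically states it with $p$ under a floor, or simply notes $\lfloor p\rfloor \ge p/2$ type slack is absorbed; here the cleanest route is to observe $\lfloor p \rfloor \ge p - 1$ and present the stated form $s\sqrt{pT/8}$ up to the floor, matching the corollary as written (which implicitly treats $p$ as the relevant integer scale). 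The main obstacle — and it is a mild one — is bookkeeping the two competing uses of the symbol $s$ (the magnitude in Theorem~\ref{thm:LB_Gen} versus the norm bound here) and making sure the substitution $V \mapsto \lfloor p\rfloor$, magnitude $\mapsto s\,k^{-1/p}$ respects both the sparsity hypothesis $2^V \le N$ and the range constraint $[-s,s]^N$; once those are checked the inequality $k^{-1/p} \ge 1/2$ does all the remaining work.
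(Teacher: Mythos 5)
Your proposal is correct and is essentially the paper's own argument: for integer $p$ (which is what the hypothesis $p \in [\ln N]$ intends) your choices collapse to $k = 2^{p}$ and $c = s\,k^{-1/p} = s/2$, i.e., exactly the paper's construction of $2^{p}$-sparse vectors with entries in $\{0,\pm s/2\}$ fed into Theorem~\ref{thm:LB_Gen} with $V = p$, giving $2\cdot\tfrac{s}{2}\sqrt{pT/8} = s\sqrt{pT/8}$. The extra bookkeeping with $\lfloor p\rfloor$ is unnecessary under that reading (and for genuinely non-integer $p$ it would only yield $s\sqrt{\lfloor p\rfloor T/8}$, which is also all the paper's proof would give), so the two arguments coincide.
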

\begin{proof}
Consider the set of all $2^p$-sparse loss vectors in $[-{\frac{s}{2}}, {\frac{s}{2}}]^{N}$. Any such loss vector $\l \in [-{\frac{s}{2}}, {\frac{s}{2}}]^{N}$ has $\|\l\|_p \le s$. The stated claim now follows by applying Theorem \ref{thm:LB_Gen} with parameters ${\frac{s}{2}}$ and $V=p$.
\end{proof}

% \begin{align*}
% \text{Regret}_{T} 
% 	~ \ge ~
% 	2\sqrt{\frac{(\floor[\ln p_1] + 1)T}{8}} 
% 	\,.
% \end{align*} 

%This suggests that when losses are generated from addition of two sparse loss vectors, the regret mostly depends on the loss components of the higher sparsity. 
%(i.e. to derive the regret lower bound for $\cL$, just consider that the resulting losses are of sparsity $p$ and consider the corresponding lower bound)*.

%Example-1b

%cor-17
\begin{corollary} {(Lower Bound for Noisy Losses)}
\label{corr:lb_noisy}
Given $\epsilon>0$ and any online learning algorithm, there exists a sequence of $\epsilon$-noisy loss vectors $\l_1,\l_2, \ldots ,\l_T \in [-\epsilon, \epsilon]^{N}$ (i.e., $\|\l_t\|_{2}^2 \le \epsilon$) such that
\begin{align*}
\text{Regret}_{T} 
	~ \ge ~
	\sqrt{\frac{\epsilon T}{4}} 
	\,.
\end{align*} 
\end{corollary}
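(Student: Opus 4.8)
\textbf{Proof plan for the lower bound on noisy losses (Corollary~\ref{corr:lb_noisy}).}

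The plan is to reduce to the generic lower bound, Theorem~\ref{thm:LB_Gen}, by choosing its parameters $V$ and $s$ so that the loss vectors it produces are genuinely $\epsilon$-noisy. Recall that Theorem~\ref{thm:LB_Gen} furnishes, for any $V$ and any magnitude parameter $s$, a sequence of loss vectors in $\{0,\pm s\}^N$ with sparsity exactly $2^V$, against which every algorithm suffers regret at least $2s\sqrt{VT/8}$. If a vector $\l$ has at most $2^V$ nonzero entries, each of magnitude $s$, then $\|\l\|_2^2 \le 2^V s^2$. To ensure $\|\l\|_2^2 \le \epsilon$ it therefore suffices to pick $V = 1$ and $s$ with $2 s^2 \le \epsilon$, i.e. $s = \sqrt{\epsilon/2}$.

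The key steps, in order: first, set $V = 1$ and $s = \sqrt{\epsilon/2}$ in Theorem~\ref{thm:LB_Gen} (noting that $2^V = 2 \le N$ is needed, which holds for any nontrivial instance with $N \ge 2$). Second, verify that the resulting loss vectors $\l_1,\ldots,\l_T \in \{0,\pm s\}^N$ each have at most $2$ nonzero coordinates, so $\|\l_t\|_2^2 \le 2 s^2 = \epsilon$ and hence $\l_t \in [-\epsilon,\epsilon]^N$ with $\|\l_t\|_2^2 \le \epsilon$, i.e. each is $\epsilon$-noisy. Third, read off the regret bound from Theorem~\ref{thm:LB_Gen}:
\begin{align*}
\text{Regret}_{T} ~\ge~ 2s\sqrt{\frac{VT}{8}} ~=~ 2\sqrt{\frac{\epsilon}{2}}\sqrt{\frac{T}{8}} ~=~ \sqrt{\frac{\epsilon T}{4}}\,,
\end{align*}
which is exactly the claimed bound.

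There is essentially no obstacle here — the corollary is a direct specialization of Theorem~\ref{thm:LB_Gen} with a particular scaling of the loss magnitudes. The only point requiring a moment's care is that Theorem~\ref{thm:LB_Gen} constrains $2^V \le N$, so the statement is understood to hold for $N$ large enough (here $N \ge 2$); and one should double-check the arithmetic $2\sqrt{\epsilon/2}\cdot\sqrt{1/8} = \sqrt{4\cdot\epsilon/2\cdot 1/8} = \sqrt{\epsilon/4}$, which is routine. If one wanted a slightly stronger bound, one could instead take $V = \lfloor \log_2(\epsilon/s^2)\rfloor$ for a free parameter $s$ and optimize, but the stated bound follows already from the simple choice above.
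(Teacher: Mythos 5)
Your proposal is correct and is essentially identical to the paper's own proof: both invoke Theorem~\ref{thm:LB_Gen} with $V=1$ and $s=\sqrt{\epsilon/2}$, observe that the resulting $2$-sparse vectors satisfy $\|\l_t\|_2^2 \le 2s^2 = \epsilon$, and read off the bound $2s\sqrt{VT/8} = \sqrt{\epsilon T/4}$. Your added remark about needing $2^V \le N$ (i.e.\ $N \ge 2$) is a minor point the paper leaves implicit.
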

 
\begin{proof}
Consider the set of all $2$-sparse loss vectors in $[-\sqrt{\frac{\epsilon}{2}}, \sqrt{\frac{\epsilon}{2}}]^{N}$. Clearly any such loss vector $\l \in [-\sqrt{\frac{\epsilon}{2}}, \sqrt{\frac{\epsilon}{2}}]^{N}$ has $\|\l\|_2^2 \le \epsilon$. 
Hence with parameters $s = \sqrt{\frac{\epsilon}{2}}$ and $V=1$, the result follows directly from theorem \ref{thm:LB_Gen}.
\end{proof}

%Example-1c
% \begin{corollary}
% \textbf{(Lower Bound: Standard loss)}
% \label{corr:lb_std}
% Consider $\cL = [-1,1]^{N}$. Then any algorithm for online learning with expert's advice problem over $\cL$, there exists a sequence of loss vectors $l_1, \ldots, l_T \in \cL$, such that 
% \begin{align*}
% \text{Regret}_{T} 
% 	~ \ge ~
% 	2\sqrt{\frac{\lfloor \ln N \rfloor T}{8}} 
% 	\,.
% \end{align*} 
% \end{corollary}
% 
% \begin{proof}
% It is easy to see that $VC(\cL) = \lfloor \ln N \rfloor$. Applying theorem \ref{thm:LB_Gen}, we get the result.
% \end{proof}
  
\begin{remark}
\label{remark:low-rank}
Note that Theorem~\ref{thm:LB_Gen} (with parameter $V= d, ~s = 1$) recovers the lower bound for low rank loss spaces as established by \cite{HazanEtAl16}: given $1 \le d \le \ln N$ and any online learning algorithm, there exists a sequence of $d$-rank loss vectors $\l_1,\l_2, \ldots ,\l_T \in [-1, 1]^{N}$ such that
\begin{align*}
\text{Regret}_{T}  
	~ \ge ~
	2\sqrt{\frac{d T}{8}} 
	\,.
\end{align*}  
\end{remark}

\hfill \\
%\begin{proof}
%Let $N = 2^d$. Consider $2^d$ vertices of a $d$-dimensional hypercube, with each of its vertex $v \in \{-1,1\}^d$. Let us construct a matrix $\H \in \{-1,1\}^{N \times d}$ adding each $2^d$ vertex of the hypercube as one of its row, and consider the set of loss vectors $L = \{\pm \H(:,1), \ldots \pm, \H(:,d)\}$, consists of the column vectors of $H$. Note that $L$ is of rank $d$, and $\forall \l \in \L$, $\|\l\|_{0} = 2^{d-1}$. The result now follows directly from theorem \ref{thm:LB_Gen}, considering $\l_1,\l_2, \ldots ,\l_T \in L$, with $s = 1$, $V = (d-1)$. 
%\footnote{Refer section 5 of \cite{HazanEtAl16} for detailed explanation.}
%\end{proof}

We next derive the regret lower bounds for few instances of additive loss spaces. %, introduced in section \ref{sec:omd_add_egs}. 
\begin{corollary} {(Lower Bound for Noisy Low Rank)}
Given parameters $\epsilon>0$ and $d \in [ \ln N]$ along with any online learning algorithm, there exists a sequence of loss vectors $\l_1,\l_2, \ldots ,\l_T \in [-(1+\epsilon), (1+\epsilon)]^{N}$, where $\l_t = \l_{t1}+\l_{t2}$, with $\l_{t1} \in \{\l \in [-1,1]^{N} ~|~ \l = \U\v\}$ $(\U \in \R^{N\times d}$ is a rank $d$ matrix$)$, and $\|\l_{t2}\|_{2}^2 \le \epsilon$, such that
\begin{align*}
\text{Regret}_{T} 
	~ \ge ~
	2\left( 1+\sqrt{\frac{\epsilon}{2^{d}}} \right)\sqrt{\frac{dT}{8}} 
	\,.
\end{align*} 
\end{corollary}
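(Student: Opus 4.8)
The plan is to reduce directly to the generic lower bound, Theorem~\ref{thm:LB_Gen}, instantiated with value parameter $s = 1 + \sqrt{\epsilon/2^{d}}$ and dimension parameter $V = d$. Since $d \le \ln N < \log_2 N$, the sparsity requirement $2^{d} \le N$ of that theorem is met, so for any online learning algorithm it produces a sequence $\l_1,\dots,\l_T \in \{0, \pm(1+\sqrt{\epsilon/2^{d}})\}^{N}$ with $\|\l_t\|_0 = 2^{d}$ and $\text{rank}([\l_1 \cdots \l_T]) = d$ on which $\text{Regret}_{T} \ge 2\,(1+\sqrt{\epsilon/2^{d}})\sqrt{dT/8}$ --- which is exactly the claimed bound. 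It then remains only to exhibit, for each $t$, a valid additive decomposition $\l_t = \l_{t1} + \l_{t2}$ into a rank-$d$ component lying in $[-1,1]^N$ and a noise component with $\|\l_{t2}\|_2^2 \le \epsilon$.

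For this I would split each coordinate by its sign. On every index $i$ in the support of $\l_t$, the entry $l_t(i)$ equals $\sigma_t(i)\,(1+\sqrt{\epsilon/2^{d}})$ for some $\sigma_t(i)\in\{\pm1\}$; set $l_{t1}(i) = \sigma_t(i)$ and $l_{t2}(i) = \sigma_t(i)\sqrt{\epsilon/2^{d}}$, and set both to zero off the support. Then $\l_{t1} = \frac{1}{1+\sqrt{\epsilon/2^{d}}}\,\l_t$ and $\l_{t2} = \frac{\sqrt{\epsilon/2^{d}}}{1+\sqrt{\epsilon/2^{d}}}\,\l_t$ are scalar multiples of $\l_t$, so $[\l_{11}\cdots\l_{T1}]$ has the same column space as $[\l_1\cdots\l_T]$; hence there is a fixed rank-$d$ matrix $\U \in \R^{N\times d}$ with $\l_{t1} = \U\v_t$, and moreover $\l_{t1}\in\{0,\pm1\}^{N}\subseteq[-1,1]^{N}$. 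For the noise part, $\l_{t2}$ is supported on exactly $2^{d}$ coordinates, each of magnitude $\sqrt{\epsilon/2^{d}}$, so $\|\l_{t2}\|_2^{2} = 2^{d}\cdot(\epsilon/2^{d}) = \epsilon$. Finally $\|\l_t\|_\infty = 1 + \sqrt{\epsilon/2^{d}} \le 1 + \sqrt{\epsilon}$, so the loss vectors stay bounded, and the regret bound inherited from Theorem~\ref{thm:LB_Gen} finishes the argument.

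The proof is mostly bookkeeping, so the only points that need care are: (i) calibrating the per-coordinate noise magnitude to $\sqrt{\epsilon/2^{d}}$ rather than $\sqrt{\epsilon}$, so that the $2^{d}$ forced-active coordinates in the Littlestone-dimension construction underlying Theorem~\ref{thm:LB_Gen} aggregate to an $\ell_2^2$-mass of exactly $\epsilon$ --- this is what produces the factor $(1+\sqrt{\epsilon/2^{d}})$ in the bound; and (ii) the observation that rescaling $\l_t$ by a positive constant preserves both its rank and its sign/support pattern, which is precisely what lets one vector serve simultaneously as a (rescaled) low-rank point and as a low-rank point plus bounded noise. One should also note that the construction literally yields entries in $[-(1+\sqrt{\epsilon/2^{d}}),\,1+\sqrt{\epsilon/2^{d}}]^N$, which is the natural range stated up to the harmless slack in the coordinate-bound interval.
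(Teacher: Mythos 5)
Your proposal is correct and follows essentially the same route as the paper: both invoke Theorem~\ref{thm:LB_Gen} with $V=d$ and $s=1+\sqrt{\epsilon/2^{d}}$ and then observe that the resulting $2^{d}$-sparse, rank-$d$ loss vectors lie in the additive space; your explicit per-coordinate sign split into a $\{0,\pm 1\}$ low-rank part and a $\sqrt{\epsilon/2^{d}}$-magnitude noise part is just the detailed verification of the containment the paper asserts. (The only caveat, which the paper's own statement shares, is that the entries actually lie in $[-(1+\sqrt{\epsilon/2^{d}}),1+\sqrt{\epsilon/2^{d}}]$, which exceeds $1+\epsilon$ when $\epsilon<2^{-d}$.)
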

 
\begin{proof}
Let $N = 2^d$. Consider the matrix $\H \in \{-1,1\}^{N\times d}$ where $2^d$ rows of $\H$ represent $2^d$ vertices of the $d$-hypercube in $[-1,1]^{N}$. Let, 
$
\cL_1 = \{ \H(:,1), \ldots \H(:,d)  \},
$ 
and 
$
\cL_2 = \left\lbrace \l \in \left\lbrace -\sqrt{\frac{\epsilon}{2^{d}}},\sqrt{\frac{\epsilon}{2^{d}}} \right\rbrace ^{N} |~ \|l\|_2^2 = \epsilon \right\rbrace.
$
Note that any loss vectors in $\cL_2$ is $2^d$-sparse.
Consider $\cL = \cL_1 + \cL_2$. The result now follows from Theorem \ref{thm:LB_Gen}, noting that---with $s = \left( 1+\sqrt{\frac{\epsilon}{2^{d}}} \right)$ and $V = d$---the lowering-bounding loss vectors assured in Theorem \ref{thm:LB_Gen}, $\l_1, \ldots ,\l_T$, are contained in $\cL$. 
\end{proof}

%Example-2b
\begin{corollary}{(Lower Bound for Noisy Sparse)}
Given parameters $\epsilon>0$ and $k \in [N]$ along with any online learning algorithm, there exists a sequence of loss vectors $\l_1,\l_2, \ldots ,\l_T \in [-(1+\epsilon), (1+\epsilon)]^{N}$, where $\l_t = \l_{t1}+\l_{t2}$, with $\l_{t1} \in \{\l \in [-1,1]^{N} ~|~ \|\l\|_0 \le k\}$, and $\|\l_{t2}\|_{2}^2 = \epsilon$, such that
\begin{align*}
\text{Regret}_{T} 
	~ \ge ~
	2\left( 1+\sqrt{\frac{\epsilon}{k}} \right)\sqrt{\frac{\floor[\ln k]T}{8}} 
	\,.
\end{align*} 
\end{corollary}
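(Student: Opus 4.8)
The plan is to reduce this to an application of Theorem~\ref{thm:LB_Gen} by exhibiting an explicit construction of loss vectors that simultaneously decomposes as a $k$-sparse bounded part plus an $\epsilon$-noisy part, while still containing the hard instances guaranteed by the generic lower bound. The key observation is that Theorem~\ref{thm:LB_Gen}, applied with $V = \lfloor \ln k \rfloor$, produces loss vectors in $\{0, \pm s\}^N$ of sparsity $2^V \le k$; we need to scale $s$ appropriately and verify that each such vector lies in a Minkowski sum $\cL_1 + \cL_2$ of the required form.

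First I would set $N$ large enough that $2^{\lfloor \ln k \rfloor} \le k \le N$, and choose the scale $s = 1 + \sqrt{\epsilon/k}$. The construction mirrors the Noisy Low Rank corollary: take $\cL_1$ to be a set of $\{-1,1\}$-valued vectors (columns of a suitable sign matrix whose rows realize a $\lfloor \ln k \rfloor$-dimensional hypercube structure, restricted to a support of size $k$), so that every vector in $\cL_1$ has $\|\cdot\|_0 \le k$ and entries in $[-1,1]$; and take $\cL_2 = \{\l \in \{-\sqrt{\epsilon/k}, \sqrt{\epsilon/k}\}^N : \|\l\|_2^2 = \epsilon\}$, i.e. vectors supported on exactly $k$ coordinates each of magnitude $\sqrt{\epsilon/k}$, so that $\|\l_{t2}\|_2^2 = k \cdot (\epsilon/k) = \epsilon$ exactly. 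Aligning the supports of the $\cL_1$ and $\cL_2$ components, each coordinate of $\l_t = \l_{t1} + \l_{t2}$ has magnitude at most $1 + \sqrt{\epsilon/k} = s$, so $\cL = \cL_1 + \cL_2 \subseteq [-(1+\epsilon),(1+\epsilon)]^N$ (using $\sqrt{\epsilon/k} \le \epsilon$ when $\epsilon/k \le 1$; one may absorb edge cases since the interval in the statement is generous).

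Then I would invoke Theorem~\ref{thm:LB_Gen} with parameters $V = \lfloor \ln k \rfloor$ and scale $s = 1 + \sqrt{\epsilon/k}$: the theorem guarantees loss vectors $\l_1, \dots, \l_T \in \{0, \pm s\}^N$ of rank $V$ and sparsity $2^V \le k$ forcing $\text{Regret}_T \ge 2s \sqrt{VT/8} = 2(1 + \sqrt{\epsilon/k})\sqrt{\lfloor \ln k\rfloor T / 8}$. The remaining task is to check that these specific hard vectors are realizable as elements of our $\cL_1 + \cL_2$: each entry is $0$ or $\pm(1 + \sqrt{\epsilon/k})$, and we must write such a vector as (a $\pm 1$ or $0$ entry from an $\cL_1$-vector) plus (a $\pm\sqrt{\epsilon/k}$ or $0$ entry from an $\cL_2$-vector) with matching sign patterns; this is exactly how the construction in the Noisy Low Rank proof works, and the sparsity/rank bookkeeping carries over verbatim with $d$ replaced by $\lfloor \ln k \rfloor$ and the sparsity target $2^d$ replaced by $k$.

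The main obstacle I anticipate is the support-alignment bookkeeping: the generic hard vectors from Theorem~\ref{thm:LB_Gen} have sparsity exactly $2^V$, which may be strictly smaller than $k$, so the $\cL_2$ component (which I want supported on exactly $k$ coordinates, to make $\|\cdot\|_2^2 = \epsilon$ hold with equality as the statement demands) must use some coordinates where the $\cL_1$ component is zero — and on those coordinates the sum has magnitude $\sqrt{\epsilon/k} < s$, which is fine for membership but means the hard vectors sit in $\cL$ rather than exhausting it. One must be slightly careful that the ``$\|\l_{t2}\|_2^2 = \epsilon$'' equality constraint is compatible with requiring $\l_{t1} + \l_{t2}$ to equal a given $\{0,\pm s\}$-vector on the smaller support; the resolution is to pick, for each hard vector, a superset of its support of size exactly $k$ and place the $\cL_2$ entries there with signs matching $\l_t$ where $\l_t \ne 0$ and arbitrary signs elsewhere, then define the $\cL_1$ entry on those extra coordinates to be $-\sqrt{\epsilon/k}$... which is not in $\{-1,0,1\}$. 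A cleaner fix, which I would adopt, is to relax $\cL_2$ to $\|\l\|_2^2 \le \epsilon$ (matching how the Noisy Low Rank corollary is actually used) or to note $N \geq 2^d$ gives enough room; in any case the constant $2(1+\sqrt{\epsilon/k})\sqrt{\lfloor\ln k\rfloor T/8}$ is what falls out, and the verification that the Theorem~\ref{thm:LB_Gen} vectors lie in $\cL$ is the only non-mechanical step.
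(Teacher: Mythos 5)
Your proposal is correct and is essentially the paper's own proof: both take $\cL_1$ to be $k$-sparse sign vectors, $\cL_2$ to be $k$-sparse vectors with entries $\pm\sqrt{\epsilon/k}$, and invoke Theorem~\ref{thm:LB_Gen} with $s = 1+\sqrt{\epsilon/k}$ and $V = \lfloor \ln k\rfloor$, checking that the resulting hard vectors lie in $\cL_1+\cL_2$. The support-alignment issue you flag is genuine (the paper glosses over it), but note it is already resolved by the slack in the corollary's statement: $\l_{t1}$ need only satisfy $\l_{t1}\in[-1,1]^N$ with $\|\l_{t1}\|_0\le k$, so on the $k-2^V$ extra coordinates carrying the $\cL_2$ mass you may set the $\l_{t1}$ entry to $-\sqrt{\epsilon/k}$ (legal whenever $\epsilon\le k$), which preserves the exact equality $\|\l_{t2}\|_2^2=\epsilon$ without relaxing anything.
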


\begin{proof}
Consider the following set of loss vectors:
$
\cL_1 = \{ l \in \{-1,1\}^{N} ~|~ \|l\|_0 = k \},
$
and 
$
\cL_2 = \left\lbrace l \in \left\lbrace -\sqrt{\frac{\epsilon}{k}},\sqrt{\frac{\epsilon}{k}} \right\rbrace ^{N} |~ \|l\|_2^2 = \epsilon \right\rbrace.
$
Note that any loss vectors in $\cL_2$ is $k$-sparse. Write $\cL = \cL_1 + \cL_2$. 
% Note that the set $\{\l \in \{-\left( 1+\sqrt{\frac{\epsilon}{p}} \right),\left( 1+\sqrt{\frac{\epsilon}{p}} \right)\}^{N} ~|~ \|\l\|_0 = p\} \subseteq L$. 
The corollary now follows from Theorem \ref{thm:LB_Gen}, noting that---with $s = \left( 1+\sqrt{\frac{\epsilon}{k}} \right)$ and $V = \floor[\ln k]$---the lowering-bounding loss vectors assured in Theorem \ref{thm:LB_Gen}, $\l_1, \ldots ,\l_T$, are contained in $\cL$. 
\end{proof}

%%%%%%%%%%%%%

%Example-2c
% \begin{corollary}
% \textbf{(Lower Bound: Low rank with sparse)}
% Consider $\cL = \cL_1 + \cL_2  \subseteq [-2,2]^{N}$, such that $\cL_1$ is of rank $d$, $1 \le d < \ln N$, and $\cL_2$ is $s$-sparse, $1 \le s < \frac{N}{2}$, (see corollary \ref{corr:omd_add_rs} for further details). Then any algorithm for online learning with expert's advice problem over $\cL$, there exists a sequence of loss vectors $l_1, \ldots, l_T \in \cL$, such that 
% \begin{align*}
% \text{Regret}_{T} 
% 	~ \ge ~
% 	2\sqrt{\frac{(\floor[2^{d-1} + s] + 1)T}{8}} 
% 	\,.
% \end{align*} 
% \end{corollary}
%  
% \begin{proof}
% Let $N = 2^d$. Consider the matrix $\H \in \{-1,1\}^{N\times d}$ where $2^d$ rows of $\H$ represent $2^d$ vertices of the $d$-hypercube in $[-1,1]^{N}$. Let, 
% $
% L_1 = \{ \H(:,1), \ldots \H(:,d)  \},
% $ 
% and 
% $
% L_2 = \left\lbrace l \in  \{0,1\}^{N} ~|~ \|l\|_0 = s \right\rbrace \subseteq \cL_2.
% $
% Consider $L = L_1 + L_2$ (Minkowski Addition). Clearly $L  \subseteq \cL$. Note that the vectors in $L_1$ and $L_2$ are respectively $2^{d-1}$ and $s$ sparse. Hence vectors in $L$ are at most $(2^{d-1} + s)$ sparse. The result now follows directly from theorem \ref{thm:LB_Gen}, observing that here $s = 1$, and $VC(L) = \floor[2^{d-1} + s] + 1$. 
% \end{proof}

\section{Conclusion}
In this paper, we have developed a theoretical framework for online learning with structured losses, namely the broad class of problems with additive loss spaces. The framework yields both algorithms that generalize standard online mirror descent and also novel regret upper bounds for relevant settings such as noisy + sparse, noisy + low-rank, and sparse + low-rank losses. In addition, we have derived lower bounds---i.e., fundamental limits---on regret for a variety of online learning problems with structured loss spaces. In light of these results, tightening the gap between the upper and lower bounds for structured loss spaces is a natural, open problem. 
 
Another relevant thread of research is to study settings wherein the learner knows that the loss space is structured, but is oblivious to the exact instantiation of the loss space, e.g., the losses might be perturbations of vectors from a low-dimensional subspace, but, a priori, the learning algorithm might not know the underlying subspace.\footnote{The result of \cite{HazanEtAl16} address the noiseless version of this problem.} Addressing structured loss spaces in bandit settings also remains an interesting direction for future work.

\bibliography{Online-Structured-Learning-arXiv}
%\printbibliography

\appendix

\section*{\LARGE{Supplementary Material: Online Learning for Structured Losses}}
\vspace*{1cm}

\section{Proof of Theorem \ref{thm:omd}}
\label{app:omd}

\omd*

\begin{proof}  
Consider $\p \in \Delta_N$. We have for all $ t \in[T]$:

\begin{align*}
 \l_{t}.\p_{t} - & \l_{t}.\p 
    \le 
    \l_{t}.(\p_{t}-\p)  
    \,, \quad 
\\
    = &
    \left( \frac{\nabla R(\p_{t}) - \nabla R(\tilde{\p}_{t+1})}{\eta} \right)\cdot(\p_{t} - \p) 
\\
    = &
    \frac{1}{\eta} \Big( B_{R}(\p,\p_{t}) - B_{R}(\p,\tilde{\p}_{t+1}) + B_{R}(\p_{t},\tilde{\p}_{t+1}) \Big)
\\
    \le & 
    \frac{1}{\eta} \Big( 
    	B_{R}(\p,\p_{t}) - B_{R}(\p,{\p}_{t+1}) 
\\	& \quad\quad\quad\quad    	
    	- B_{R}(\p_{t+1},\tilde{\p}_{t+1}) + B_{R}(\p_{t},\tilde{\p}_{t+1}) 
    \Big),
\end{align*}
where the last inequality follows from generalized pythagorean inequality (see Lemma $5.4$, \cite{Bubeck11}). Summing over $t=1\ldots T$, we thus get
\begin{align*}	
\sum_{t=1}^{T} \big( \l_{t}.\p_{t} - \l_{t}.\p \big)    
    & \le 
    \frac{1}{\eta} \Big( B_{R}(\p,\p_{1}) - B_{R}(\p,\p_{T+1}) \Big)
\\	
    + \frac{1}{\eta} \sum_{t=1}^{T} & \Big( B_{R}(\p_{t},\tilde{\p}_{t+1}) - B_{R}\left( \p_{t+1},\tilde{\p}_{t+1} \right) \Big)
    \,.
\end{align*}
Now, it can be shown that
\begin{align*}
B_{R}(\p,\p_{1}) & =  R(\p) - R(\p_{1}) - \nabla R(\p_1).(\p - \p_1) 
\\ & \le  R(\p) - R(\p_{1}) ~ \le ~ D^{2}
	\,.
\end{align*}
The last inequality holds since $\p_1 := \argmin_{\p in  \Delta_N} R(\p)$, which implies $\nabla R(\p_1).(\p - \p_1) > 0$ for all $\p \in \Delta_{N}$. Also, we have
\begin{align*}
& \sum_{t=1}^{T}  \Big( B_{R}(\p_{t},\tilde{\p}_{t+1}) -  B_{R}(\p_{t+1},\tilde{\p}_{t+1})  \Big) 
\\
	& = 
	\sum_{t = 1}^{T} \Big( R(\p_{t}) - R(\p_{t+1}) - \nabla R(\tilde{\p}_{t+1})\cdot(\p_{t} - \p_{t+1}) \Big)
\\
	& \le 
	\sum_{t = 1}^{T}\bigg( 
		\left( \nabla R({\p}_{t})\cdot(\p_{t} - \p_{t+1}) - \frac{\alpha}{2}\|\p_{t} - \p_{t+1}\|^{*2} \right) \\
	& - \nabla R(\tilde{\p}_{t+1}).(\p_{t} - \p_{t+1}) 
	\bigg),
	\text{ (by strong convexity of $R$)}
	\\ 
	& = 
	\sum_{t = 1}^{T} \Big( 
		-\eta \l_t(\p_{t})\cdot(\p_{t} -\p_{t+1}) -\frac{\alpha}{2}\|\p_{t}-\p_{t+1}\|^{*2} 
	\Big) 
\\
	& \le 
	\sum_{t = 1}^{T} \Big( \eta G \|\p_{t}-\p_{t+1}\|^{*} -\frac{\alpha}{2}\|\p_{t}-\p_{t+1}\|^{*2} \Big), 
\\	& \quad\quad\quad\quad\quad\quad\quad 
      \quad\quad\quad\quad\quad\quad\quad \text{(by H\"olders inequality)}
\\
	& \le
	\sum_{t=1}^{T}\frac{\eta^{2}G^{2}}{2\alpha}
	\,, \text{ (as $\textstyle{\frac{\eta^{2}G^{2}}{2\alpha} + \frac{\alpha}{2}\|\p_{t}-\p_{t+1}\|^{*2} -\eta G \|\p_{t}-\p_{t+1}\|^{*}}$}
\\& \quad\quad\quad\quad\quad\quad\quad\quad
	\text{$\textstyle{ = \left( \frac{\eta G}{\sqrt{2\alpha}} - \sqrt{\frac{\alpha}{2}}\|\p_{t}-\p_{t+1}\|^{*} \right)^{2} \ge 0}$)}
\\
	& =
	\frac{\eta^{2}G^{2}T}{2\alpha} 
	\,.
\end{align*}
Thus, we get 
%\begin{align*}
$
\sum_{t=1}^{T} \big( \l_{t}.\p_{t} - \l_{t}.\p \big) 
	\leq 
	\frac{1}{\eta}\left( D^{2} + \frac{\eta^{2}G^{2}T}{2\alpha} \right).
$
%\end{align*}
Note that above bound holds for any $\p \in \Delta_{N}$. Therefore, 
\begin{align*}
\sum_{t=1}^{T} \big( \l_{t}.\p_{t} - \inf_{\p \in \Delta_{N}}\l_{t}.\p \big) 
	& = 
	\sum_{t=1}^{T} \big( \l_{t}.\p_{t} - \min_{i = 1}^{N}l_{t}(i) \big)\\ 
	& \leq 
	\frac{1}{\eta}\left( D^{2} + \frac{\eta^{2}G^{2}T}{2\alpha} \right) 
	\,.
\end{align*}
Where the first equality holds since $  \argmin_{p \in \Delta_N} \l.\p  \in \{\e_1,\e_2,\ldots,\e_N\}$.
Note that the right-hand-side is minimized at $\eta^{*} = \frac{D}{G}\sqrt{\frac{2\alpha}{T}}$; substituting this back in the above inequality gives the desired result.
\end{proof}

\section{Proofs from Section \ref{sec:omd_struct}}
\label{app:omd_prfs}

\begin{enumerate}
\item {Sparse loss space:} $\cL = \{\l \in [0,1]^{N} ~|~ \|\l\|_{0} = s\}$, $s$ being the loss sparsity, $1 \le s \le N$. Then using $q$-norm, $R(\x) = \|\x \|_{ q}^2 = \left(\sum_{i = 1}^{N}(x_i^{q})\right)^{\frac{2}{q}}$, where $q = \frac{\ln s'}{\ln s' - 1}$, $s' = (s+1)^2$, as the regularizer, we get,
\begin{align*}
\text{Regret}_{T}  
	~ \le ~ 
	2\sqrt{{\ln (s+1)T}} 
	\,.
\end{align*}

\begin{proof}
Note that $4 \le (s+1)^2 \le (N+1)^2$. Let $p = \ln s' = 2 \ln (s+1)$. Clearly, $2 \le p \le 2 \ln (N+1)$.
Consider the norm $\|\cdot\| = \frac{1}{\sqrt{2}}\|\cdot\|_{p}$, and it dual $\|\cdot\|^{*} = {\sqrt{2}}\|\cdot\|_{q}$, where $\frac{1}{q} = 1 - \frac{1}{p}$, or $q = \frac{\ln s'}{\ln s' - 1} \in (1,2]$. Note that: 
\begin{enumerate}
\item For $\l \in \cL, ~\|\l\| \le 1$ (since $\l$ is at most $s$-sparse). Hence $G = 1$.
\item For any $\p \in \Delta_{N}$,  $ R(\p) - R(\p_1) \le R(\p) \le 1$. Hence $D = 1$.
\item $R(\x) = \|\x\|_{q}^2$ is $(q-1)$-strongly convex with respect to $\sqrt{2} \|\cdot\|_{q}$, for all $ \x \in \Delta_{N}$ (see Lemma $17$ of Appendix A in \cite{ShalevShwartz07}). Hence $\alpha = (q-1) = \frac{1}{(2\ln (s+1) - 1)}$. 
\end{enumerate}
The result now follows from an application of Theorem \ref{thm:omd}.
\end{proof}

\item {Spherical loss space:} $\cL = \{\l \in [0,1]^{N} ~|~ \|\l\|_{\A} = \l^{\top}\A\l \le \epsilon\}$, where $\A$ is a positive definite matrix, $\epsilon > 0$. Then using the ellipsoidal norm $R(\x) = \epsilon\x^{\top}\A^{-1}\x$, as the regularizer, we get,
\begin{align*}
\text{Regret}_{T}
	~ \le ~ 
	\sqrt{\lambda_{\max}(\A^{-1}){\epsilon T}} 
	\,,
\end{align*}
where $\lambda_{\max}(\A^{-1})$ denotes the maximum eigenvalue of $\A^{-1}$.
%D = \sqrt{2\epsilon}

\begin{proof}
Consider the norm $\|\cdot\| = \frac{1}{\sqrt{\epsilon}}\|\cdot\|_{\A}$, where for any $\x \in \R^N$, $\|\x\|_{\A} = \sqrt{\x^{\top}\A\x}$. Note that the dual norm is $\|\cdot\|^{*} = \sqrt{\epsilon}\|\cdot\|_{\A^{-1}}$. In addition, we have 
\begin{enumerate}
\item For any $\l \in \cL, ~\|\l\| \le 1$. Hence, $G = 1$.
\item For any $\p \in \Delta_{N}$, $R(\p) - R(\p_1) \leq \epsilon \p^{\top} \A^{-1} \p \leq \epsilon \lambda_{\max}(\A^{-1}) $; here, the first inequality follows from the fact that $\A^{-1}$ is positive definite. Hence $D = \sqrt{\epsilon \ \lambda_{\max}(\A^{-1})}$.
\item $R(\x) = \epsilon\x^{\top}\A^{-1}\x$ is $2$-strongly convex with respect to $\sqrt{\epsilon}\|\cdot\|_{\A^{-1}}$, for all $ \x \in \Delta_{N}$. Hence $\alpha = 2$. 
\end{enumerate}
As before, an application of Theorem \ref{thm:omd} gives us the result.
\end{proof}

\item {Noisy loss:} $\cL = \{\l \in [0,1]^{N} ~|~ \|\l\|_{2}^{2} \le \epsilon\}, ~\epsilon > 0$. Then using the standard euclidean norm $R(\x) = \epsilon\|\x\|_{2}^2$, as the regularizer, we get,
\begin{align*}
\text{Regret}_{T}
	~ \le ~ 
	\sqrt{{\epsilon T}} 
	\,.
\end{align*}

\begin{proof}
Consider the norm $\|\cdot\| = \frac{1}{\sqrt{\epsilon}}\|\cdot\|_{2}$, and it dual $\|\cdot\|^{*} = {\sqrt{\epsilon}}\|\cdot\|_{2}$. Note that: 
\begin{enumerate}
\item For any $\l \in \cL, ~\|\l\| \le 1$. Hence $G = 1$.
\item For any $\p \in \Delta_{N}$, $R(\p) - R(\p_1) = \epsilon \| \p \|_2^2 - \epsilon \| \p_1 \|_2^2 \le \epsilon$. Hence $D = \sqrt{\epsilon}$.
\item $R(\x) = \epsilon\|\x\|_{2}^2$ is $2$-strongly convex with respect to $\sqrt{\epsilon}\|\cdot\|_{2}$, $\forall \x \in \Delta_{N}$. Hence $\alpha = 2$. 
\end{enumerate}
As before the result now follows applying Theorem \ref{thm:omd}. One can also recover this regret bound of \emph{noisy loss} as a special case of \emph{spherical loss} with $\A = \A^{-1} = \I_{N}$, since $\lambda_{\max}(\I_N) = 1$.
\end{proof}

\item {Standard loss space:} $\cL = [0,1]^{n}$. Then using unnormalized negative entropy, $R(\x) = \sum_{i = 1}^{N}x_i \log x_i - \sum_{i = 1}^{N} x_i$, as the regularizer, we get
%and $\|\cdotp\|_{\infty}$ norm,  
\begin{align*}
\text{Regret}_{T}  
	~ \le ~ 
	\sqrt{{2T \ln N}} 
	\,.
\end{align*}

\begin{proof}
Consider the norm $\|\cdot\| = \|\cdot\|_{\infty}$, and its dual norm $\|\cdot\|^{*} = \|\cdot\|_{1}$. Note that: 
\begin{enumerate}
\item For any $\l \in \cL, ~\|\l\| \le 1$. Hence $G = 1$.
\item For any $\p \in \Delta_{N}$, $R(\p) - R(\p_1) = \sum_{i = 1}^{N}p_i \ln \left( \frac{p_i}{p_{1i}} \right) - \sum_{i = 1}^{N}(p_i - p_{1i}) \le \ln N $ (since $\p_1 = \frac{\1}{N}$, assuming $0\ln 0 = 0$). Hence $D = \sqrt{\ln N}$.
\item $R(\x) = \sum_{i = 1}^{N}x_i \log x_i - \sum_{i = 1}^{N} x_i$ is $1$-strongly convex with respect to $\|\cdot\|_{1}$, $\forall \x \in \Delta_{N}$ (see example $2.5$, \cite{OCO-SS12}). Hence $\alpha = 1$. 
\end{enumerate}
The result now follows via Theorem \ref{thm:omd}.
\end{proof}

%\item \textbf{Low rank loss:} $\cL = \{\l \in [0,1]^{N} ~|~  \l = 
%\U \v ~\}$, where $\U \in \R^{N \times d}$, such that 
%$rank(\U) = d$, $1 \le d \le N, ~\v \in \R^{d}$. Consider $R(\x) = \|\x\|_{\H}^{2} = \x^{\top} \H \x$, where $\H = \I_{N} + \U^{\top} \M \U $, $\M$ being the matrix corresponding to the L\"owner-John ellipsoid of $\cL$ \cite{HazanEtAl16}, and $\I_{N}$ is the identity matrix. Then using $R$ as the regularizer, we get,
%\begin{align*}
%\text{Regret}_{T}(\OMD(\eta^{*})))  
%	~ \le ~ 
%	4\sqrt{{dT}} 
%	\,.
%\end{align*} 

%\begin{proof}
%Consider the norm $\|\cdot\| = \|\cdot\|_{\H^{-1}}$, where for any $\x \in \R^{N}, ~\|\x\|_{\H^{-1}}= \sqrt{\x^{\top} \H^{-1} \x}$, and it dual $\|\cdot\|^{*}  = \|\cdot\|_{\H}$, where $\|\x\|_{\H}= \sqrt{\x^{\top} \H \x}$. Now following the same argument as given \cite{HazanKale16}, note that:
%\begin{enumerate}
%\item For any $\l \in \cL, ~\|\l\| \le 1$. Hence $G = 1$.
%\item For any $\p \in \Delta_{N}, ~B_{R}(\p,\p_1) = \|\p-\p_1\|_{\H}^{2} \le \left( 2\max_{\x \in \Delta_{N}}\|\x\|_{H}^2 \right)^{2} \le 16d$. Hence $D = 4\sqrt{d}$.
%\item $R(\x) = \|\x\|_{\H}^{2} = \x^{\top} \H \x$ is $2$-strongly convex with respect to $\|\cdot\|_{\H}$, $\forall \x \in \Delta_{N}$. Hence $\alpha = 2$. 
%\end{enumerate}
%The result now follows applying theorem \ref{thm:omd}. 
%\end{proof}

\end{enumerate}

\section{Proofs from Section \ref{sec:omd_add_egs}}
\label{app:omd_add_egs}
The proofs given in this section are based on the results given in Section \ref{app:omd_prfs}, which establish regret guarantees of the OMD algorithm for specific  structured loss spaces. \\

\noindent {\textbf{Proof of Corollary \ref{corr:omd_add_nr}}}

\nr* 

\begin{proof}
%\noindent \textbf{Proof of Corollary \ref{corr:omd_add_nr}: Regret for Noisy Low-Rank Losses.}
Consider the following two convex, compact, bounded and centrally symmetric sets
\begin{align*}
& \cA_1 =  \left\lbrace \x \in \R^{N} ~|~ \sqrt{\x^{\top}{\H}^{-1}\x} \le 1 \right\rbrace, \text{ and }
\\
& \cA_2 = \left\lbrace \x \in \R^{N} ~|~ \frac{1}{\sqrt{\epsilon}}\sqrt{\x^{T}\x} \le 1 \right\rbrace,
\end{align*}
where $\H = \I_{N} + \U^{\top} \M \U $, $\M$ being the matrix corresponding to the L\"owner-John ellipsoid of $\cL$ \cite{HazanEtAl16}. We have $\|\x\|_{\cA_1} = \sqrt{\x^{\top}{\H}^{-1}\x}$, and $\|\x\|_{\cA_2} = \frac{1}{\sqrt{\epsilon}}\sqrt{\x^{T}\x}$, for any $\x \in \R^{N}$. Clearly, $\cL_1 \subseteq \cA_1$, and $\cL_2 \subseteq \cA_2$. Consider the norm $\|\cdot\| = \|\cdot\|_{\cA}$, and its dual norm $\|\cdot\|^{*} = \|\cdot\|_{\cA}^{*}$, where $\cA = \cA_1 + \cA_2$. 
Note that, for any $\l \in \cL, ~\|\l\|_{\cA} \le 1$, since $\cL \subseteq \cA$.
Let us choose $R_1(\x) = \x^{\top}\H\x$, and $R_{2}(\x) = \epsilon\|\x\|_{2}^2$.
Recall from Appendix \ref{app:omd_prfs}, 
\begin{enumerate}
\item $R_1(\p) - R_1(\p_1) \le D_1^2 = 16d$, and $R_2(\p) - R_2(\p_1) \le D_2^2 = \epsilon$. Hence $D = \sqrt{16d + \epsilon}$.
\item Both $R_1$ and $R_2$ are $2$-strongly convex w.r.t. $\|\x\|_{\cA_1}^{*} = \sqrt{\x^{\top}{\H}\x}$, and $\|\x\|_{\cA_2}^{*} = \sqrt{\epsilon\x^{T}\x}$ respectively. Hence $\alpha_1 = \alpha_2 = 2$, and $\alpha = \frac{\min\{\alpha_1, \alpha_2\}}{2} = 1$. 
\end{enumerate}
The result now follows applying Theorem \ref{thm:omd_add}.
\end{proof}

%Exmaple - 2
\noindent {\textbf{Proof of Corollary \ref{corr:omd_add_ns}}}

\ns* 

\begin{proof}
Let $s' = (s+1)^2$, $p = \ln s' = 2 \ln (s+1)$. Note that $2 \le p \le 2 \ln (N+1)$.
Consider the following two convex, compact, and centrally symmetric sets
\begin{align*}
& \cA_1 = \left\lbrace \x \in \R^{N} ~|~ \frac{1}{\sqrt{2}}\|\x\|_{p} \le 1 \right\rbrace, \text{ and }
\\
& \cA_2 = \left\lbrace \x \in \R^{N} ~|~ \frac{1}{\sqrt{\epsilon}}\sqrt{\x^{T}\x} \le 1 \right\rbrace.
\end{align*}
 We have $\|\x\|_{\cA_1} = \frac{1}{\sqrt{2}}\|\x\|_{p}$, and $\|\x\|_{\cA_2} = \frac{1}{\sqrt{\epsilon}}\sqrt{\x^{T}\x}$, for any $\x \in \R^{N}$. We have $\cL_1 \subseteq \cA_1$ and $\cL_2 \subseteq \cA_2$. Consider the norm $\|\cdot\| = \|\cdot\|_{\cA}$, and its dual norm $\|\cdot\|^{*} = \|\cdot\|_{\cA}^{*}$, where $\cA = \cA_1 + \cA_2$. 
Note that, for any $\l \in \cL, ~\|\l\|_{\cA} \le 1$, since $\cL \subseteq \cA$.
Let us choose $R_1(\x) = \|\x\|_{q}^2$, where $\frac{1}{q} = 1 - \frac{1}{p}$, or $q = \frac{\ln s'}{\ln s' - 1} \in (1,2]$, and $R_{2}(\x) = \epsilon\|\x\|_{2}^2$.
Recall from Appendix \ref{app:omd_prfs}, 
\begin{enumerate}
\item $R_1(\p) - R_1(\p_1) \le D_1^2 = 1$, and $R_2(\p) - R_2(\p_1) \le D_2^2 = \epsilon$. Hence $D = \sqrt{1 + \epsilon}$.
\item $R_1$ is $(q-1)$-strongly convex w.r.t. $\|\x\|_{\cA_1}^{*} = {\sqrt{2}}\|\x\|_{q}$, and $R_2$ is $2$-strongly convex w.r.t. $\|\x\|_{\cA_2}^{*} = \sqrt{\epsilon\x^{T}\x}$. Hence $\alpha_1 = (q-1)$, $\alpha_2 = 2$, and $\alpha = \frac{\min\{\alpha_1, \alpha_2\}}{2} = (q-1)$, since $(q-1) \in (0,1]$. 
\end{enumerate}
Using Theorem \ref{thm:omd_add}, we get the desired claim. 
\end{proof}

%Exmaple - 3
\noindent {\textbf{Proof of Corollary \ref{corr:omd_add_rs}}}

\rs* 

\begin{proof}
Let $s' = (s+1)^2$, $p = \ln s' = 2 \ln (s+1)$. Clearly, $2 \le p \le 2 \ln (N+1)$. Also let $\H = \I_{N} + \U^{\top} \M \U $, $\M$ being the matrix corresponding to the L\"owner-John ellipsoid of $\cL$; this regularizer was used in \cite{HazanEtAl16}. Now consider the following two convex, compact, and centrally symmetric sets
\begin{align*}
& \cA_1 = \left\lbrace \x \in \R^{N} ~|~ \sqrt{\x^{\top}{\H}^{-1}\x} \le 1  \right\rbrace \text{ and }
\\
& \cA_2 = \left\lbrace \x \in \R^{N} ~|~ \frac{1}{\sqrt{2}}\|\x\|_{p} \le 1 \right\rbrace.
\end{align*}
 We have $\|\x\|_{\cA_1} = \sqrt{\x^{\top}{\H}^{-1}\x}$ and $\|\x\|_{\cA_2} = \frac{1}{\sqrt{2}}\|\x\|_{p}$, for any $\x \in \R^{N}$. Clearly, $\cL_1 \subseteq \cA_1$, and $\cL_2 \subseteq \cA_2$. Consider the norm $\|\cdot\| = \|\cdot\|_{\cA}$, and its dual norm $\|\cdot\|^{*} = \|\cdot\|_{\cA}^{*}$, where $\cA = \cA_1 + \cA_2$. 
Note that, for any $\l \in \cL, ~\|\l\|_{\cA} \le 1$, since $\cL \subseteq \cA$.
Let us choose $R_1(\x) = \x^{\top}\H\x$, and $R_{2}(\x) = \|\x\|_{q}^2$, where $\frac{1}{q} = 1 - \frac{1}{p}$, or $q = \frac{\ln s'}{\ln s' - 1} \in (1,2]$.
Recall from Appendix \ref{app:omd_prfs}, 
\begin{enumerate}
\item $R_1(\p) - R_1(\p_1) \le D_1^2 = 16d$, and $R_2(\p) - R_2(\p_1) \le D_2^2 = 2\epsilon$. Hence $D = \sqrt{16d + 2\epsilon}$.
\item $R_1$ is $2$-strongly convex w.r.t. $\|\x\|_{\cA_1}^{*} = \sqrt{\x^{\top}{\H}\x}$, and $R_2$ is $(q-1)$-strongly convex w.r.t. $\|\x\|_{\cA_1}^{*} = {\sqrt{2}}\|\x\|_{q}$. Hence $\alpha_1 = 2$, $\alpha_2 = (q-1)$, and $\alpha = \frac{\min\{\alpha_1, \alpha_2\}}{2} = (q-1)$, since $(q-1) \in (0,1]$. 
\end{enumerate}
As in the previous corollaries, the claim follows by applying Theorem \ref{thm:omd_add}.
\end{proof}

\section{Proofs from Section \ref{sec:lb}}
\label{app:lb}

This section provides a simple generalization of a lower-bound result of \cite{Shai+09} for online learning of \emph{binary hypotheses classes}. Then, using this generalization, it establishes Theorem~\ref{thm:LB_Gen}. 

We begin by defining the binary hypothesis learning problem and Littlestone's dimension of a set of binary hypotheses. 
%\emph{Littlestone's dimension}.
\begin{definition}\textbf{Online Binary Hypothesis Learning Problem:}
\label{def:obhl}
For a given instance space $\X$, binary label space  $\Y = \{0,1\}$, and a class of binary hypothesis functions $\cH = \{h_1, \ldots, h_n\}$, $h_i: \X \mapsto \{0,1\}$ $\forall i \in [n]$, the problem of online binary hypothesis learning is a sequential prediction game between the environment and a learner. At each iteration, the environment provides an instance $x_t \in \X$, and the learner's objective is to predict its class label $\hat{y}_t \in \hat{\Y} = \{0,1\}$. At the end of $T$ iterations, the performance of the learner is measured in terms of its number of mispredictions with respect to the best hypothesis in the $\cH$,  termed as the regret of the learner, defined as follows:
\begin{align*}
\text{Regret}_{T} = \sum_{t = 1}^{T}|\hat{y}_t - y_t| - \min_{h \in \cH} \sum_{t=1}^T |h(x_t) - y_t|
\end{align*} 
\end{definition}

\begin{definition}\textbf{Littlestone's-dimension of a set of binary hypotheses (\cite{Littlestone88,Shai+09}):}
\label{def:ldim}
The concept was first introduced by Littlestone \cite{Littlestone88} as a measure of complexity of hypothesis classes that are learnable in an online setting. More specifically, for an online binary hypothesis learning problem, let $\cH$ be a non-empty class of binary hypotheses such that $h:\X \mapsto \{0,1\}$ for all $ h \in \cH$, where $\X$ is the instance space. An instance-labeled tree is said to be shattered by the class $\cH$ if for any root-to-leaf path $(x_1 , y_1 ), \ldots , (x_d , y_d )$ there is some $h \in \cH$ such that for all $i \le d$, $h(x_i ) = y_i$ for all $i$. The Littlestone's dimension of $\cH$, $Ldim(\cH)$ is the largest integer $d$ such that there exist a full binary tree of depth $d$ (i.e., any of its branch contains $d$-many non-leaf nodes) that is shattered by $\cH$.
\end{definition}

\begin{definition}\textbf{VC-dimension of a set (\cite{Alon14}):}
\label{def:vc}
Let $A = \{\a_1, \a_2, \ldots, \a_m\}$ be a set of $m$,  $d$-dimensional vectors, $\a_i \in \R^d, ~\forall i \in [m]$. Let $C = \{c_1, \ldots, c_k \} \subseteq [m]$ be a subset of vectors of set $A$. We say that $A$ shatters $C$ with respect to the real numbers $(t_{c_1}, \ldots, t_{c_k} )$, iff for any $D \subseteq C$, there is a coordinate $i \in [d]$ with $A(i, c) < t_c$ for all $c \in D$, and $A(i, c) > t_c$ for all $c \in C \setminus D$. Then the VC dimension of the set of vectors $A$, denoted by $VC(A)$, is defined as the maximal size subset of vectors shattered by $A$. Clearly, $VC(A) \le \ln d$.
\end{definition}

Let us first recall the lower-bound result of \cite{Shai+09} for online learning of binary hypotheses classes in terms of its {Littlestone's dimension}.

\begin{lemma} \textbf{\cite{Shai+09}}
\label{lem:LB_Shai}
 Let $\X$ and $\Y = \{0,1\}$ respectively denote the instance and label space for an online binary hypothesis learning problem, and $\cH$ be a class of binary hypotheses such that $h: \X \mapsto \{0,1\}$ for all $ h \in \cH$. 
%  such that Littlestone's dimension of $\cH$, Ldim($\cH$), is finite. 
Then for any (possibly randomized) algorithm for the classification problem, there exists a sequence of labeled instances $(x_1,y_1), \ldots, (x_T,y_T) \in (\X \times \Y)$ such that
\begin{align*}
\E\left[ \sum_{t = 1}^{T}|\hat{y}_t - y_t| \right] - \min_{h \in \cH} \sum_{t = 1}^{T}|h({x}_t) - y_t|  
	 \ge 
	\sqrt{\frac{\text{Ldim}(\cH)T}{8}},
\end{align*}
where $\hat{y}_t \in \{0,1\}$ is the algorithm's output at iteration $t$.
\end{lemma}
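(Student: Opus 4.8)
The plan is to run the classical agnostic online‑learning lower bound: turn a Littlestone‑shattered tree into a randomized adversary whose labels are fair coins, so that the learner is forced to err on half the rounds while the best fixed hypothesis can save only a $\Theta(\sqrt{\text{Ldim}(\cH)\,T})$ term; the size of this gap is governed by anti‑concentration of a simple random walk.

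First I would invoke the definition of $d:=\text{Ldim}(\cH)$ to fix a complete binary tree of depth $d$ shattered by $\cH$, with an instance $x_v\in\X$ at every internal node $v$. Shattering means that for each leaf $\ell$ the root‑to‑$\ell$ branch $(x_{v_1},y_1),\dots,(x_{v_d},y_d)$ (with $y_i\in\{0,1\}$ the direction taken at depth $i$) is realized by some $h_\ell\in\cH$, i.e.\ $h_\ell(x_{v_i})=y_i$ for all $i$. Assuming for the moment $d\mid T$ and setting $k:=T/d$, the adversary runs in $d$ super‑steps: at super‑step $i$ it sits at a current node $v_i$ (starting at the root), presents the instance $x_{v_i}$ for $k$ rounds — each round revealing an independent uniform bit as the true label after the learner commits to $\hat{y}_t$ — and then moves to the child of $v_i$ whose direction is the majority of those $k$ bits (ties broken arbitrarily).

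Next I would read off the two sides of the regret. Because each label is a fresh fair coin independent of the learner's (possibly randomized) prediction, the learner errs with probability exactly $1/2$ every round, so $\E[\sum_t |\hat{y}_t - y_t|] = T/2$. For the comparator, let $\ell$ be the random leaf reached; then $h_\ell(x_{v_i})$ equals the majority direction at super‑step $i$, hence $h_\ell$'s mistakes during that super‑step are exactly its minority count $\tfrac k2 - \tfrac12|S_i|$, where $S_i$ is the signed sum of the $k$ coins of the step. Summing over $i$ and using a sharp form of Khintchine's inequality, $\E|S_i|\ge\sqrt{k/2}$, gives $\E\big[\min_{h\in\cH}\sum_t|h(x_t)-y_t|\big]\le\E\big[\sum_t|h_\ell(x_t)-y_t|\big] = \tfrac T2 - \tfrac12\sum_i\E|S_i| \le \tfrac T2 - \tfrac d2\sqrt{k/2} = \tfrac T2 - \sqrt{dT/8}$. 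Subtracting, the expected regret against this randomized adversary is at least $\sqrt{dT/8}$, so by the probabilistic method there is a deterministic labeled‑instance sequence attaining it; the case $d\nmid T$ is handled by taking $k=\lfloor T/d\rfloor$ and absorbing the $O(d)$ slack.

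The hard part is really just choosing the adversary correctly: walking straight down the tree uses only $d$ rounds, and replaying the whole tree in length‑$d$ blocks fails because a single $h_\ell$ need not be controlled on instances off its own branch. Staying on the realized branch while padding each node to $k=T/d$ rounds is exactly what pins the comparator's behavior down to the per‑step majority and turns the $d$ super‑steps into $d$ independent random‑walk fluctuations; once that is in place, only the sharp constant in $\E|S_i|\ge\sqrt{k/2}$ is needed to land the stated $1/8$. I would then expect the paper to generalize this argument — replacing the $\{0,1\}$ labels by $\{0,\pm s\}$‑valued losses and Littlestone's dimension by rank/sparsity — in order to deduce Theorem~\ref{thm:LB_Gen}.
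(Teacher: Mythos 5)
Your proposal is correct and follows essentially the same route as the paper's argument (which the paper actually spells out for the slight variant Lemma~\ref{lem:LB_Shai_mod}, allowing predictions in $\{0,\tfrac12,1\}$): descend a depth-$\text{Ldim}(\cH)$ shattered tree in blocks of $k=T/d$ identical instances with i.i.d.\ fair-coin labels, steer each step by the block majority so that the branch-realizing hypothesis incurs only the minority count, and apply Khinchine's inequality to get the $\sqrt{k/8}$ per-block gap. Your identification of the key subtlety — that one must stay on the single realized branch rather than replay the whole tree — and your handling of $d\nmid T$ are both consistent with (and slightly more careful than) the paper's presentation.
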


Now suppose in the problem of binary hypothesis learning, the learner is allowed to make predictions in $\left \lbrace 0, \frac{1}{2}, 1 \right \rbrace$, i.e. $\hat{\Y} = \left \lbrace 0, \frac{1}{2}, 1 \right \rbrace$. We show that the lower bound guarantee of Lemma \ref{lem:LB_Shai} holds as it is even for this problem. Formally,

\begin{lemma}
\label{lem:LB_Shai_mod} 
Let $\X$ and $\Y = \{0,1\}$ respectively denote the instance and label space for an online binary hypothesis learning problem, and $\cH$ be a class of binary hypotheses such that $h: \X \mapsto \left \lbrace 0, \frac{1}{2}, 1 \right \rbrace$ for all $h \in \cH$. 
%  such that Littlestone's dimension of $\cH$, Ldim($\cH$), is finite. 
Then for any (possibly randomized) algorithm for the problem, there exists a sequence of labeled instances $(x_1,y_1), \ldots, (x_T,y_T) \in (\X \times \Y)$, such that
\begin{align*}
\E\left[ \sum_{t = 1}^{T}|\hat{y}_t - y_t| \right] - \min_{h \in \cH} \sum_{t = 1}^{T}|h({x}_t) - y_t|  
	\ge 
	\sqrt{\frac{\text{Ldim}(\cH)T}{8}}
	,
\end{align*}
where $\hat{y}_t \in \left \lbrace 0, \frac{1}{2}, 1 \right \rbrace$ is the algorithm's output at iteration $t$.
\end{lemma}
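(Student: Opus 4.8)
The plan is to reduce the three-valued prediction game to the standard binary prediction game of Lemma~\ref{lem:LB_Shai}, so that no new probabilistic argument is needed. The key observation is that, although the learner is \emph{allowed} to output predictions in $\left\lbrace 0, \frac12, 1 \right\rbrace$, the true labels $y_t$ always lie in $\{0,1\}$, so the loss $|\hat y_t - y_t|$ is never decreased by ``rounding'' a prediction $\hat y_t = \frac12$ to the nearer element of $\{0,1\}$. More precisely, I would first argue that for any algorithm $\mathcal{A}$ for the three-valued game there is an algorithm $\mathcal{A}'$ for the binary game that does at least as well against every instance sequence: whenever $\mathcal{A}$ would output $\frac12$, $\mathcal{A}'$ flips a fair coin and outputs $0$ or $1$; whenever $\mathcal{A}$ outputs $0$ or $1$, $\mathcal{A}'$ copies it. Since $|y_t - 0| , |y_t-1| \in \{0,1\}$ and exactly one of them equals $1$, we get $\E\big[|\hat y'_t - y_t|\big] = \frac12 = |\tfrac12 - y_t|$ whenever the coin is used, and equality otherwise; hence the expected cumulative loss of $\mathcal{A}'$ equals that of $\mathcal{A}$ on every sequence.

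Next I would handle the hypothesis class. In the statement the hypotheses map into $\left\lbrace 0,\frac12,1\right\rbrace$, whereas Lemma~\ref{lem:LB_Shai} needs $\{0,1\}$-valued hypotheses; but the benchmark term $\min_{h\in\cH}\sum_t |h(x_t)-y_t|$ only ever helps the adversary if the hypotheses can actually achieve small loss. Since $|h(x_t) - y_t| \ge \frac12$ whenever $h(x_t) = \frac12$, replacing each $\frac12$-value of $h$ by either rounding (to get a genuinely $\{0,1\}$-valued class $\cH'$) can only decrease the benchmark, i.e. $\min_{h'\in\cH'}\sum_t|h'(x_t)-y_t| \le \min_{h\in\cH}\sum_t|h(x_t)-y_t|$; and by the definition of Littlestone's dimension (Definition~\ref{def:ldim}) a suitable choice of rounding on the shattering tree preserves shattering, so $\mathrm{Ldim}(\cH') \ge \mathrm{Ldim}(\cH)$. (Alternatively one can simply note that $\mathrm{Ldim}$ as defined uses only $\{0,1\}$ labels on the tree, so it is already the right quantity and no modification is needed; I would state whichever version matches the intended reading of Definition~\ref{def:ldim}.)

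With these two reductions in place, I would apply Lemma~\ref{lem:LB_Shai} to $\mathcal{A}'$ and $\cH'$ to obtain a labeled sequence $(x_1,y_1),\ldots,(x_T,y_T)$ with
\[
\E\!\left[\sum_{t=1}^T |\hat y'_t - y_t|\right] - \min_{h'\in\cH'}\sum_{t=1}^T |h'(x_t)-y_t| \;\ge\; \sqrt{\frac{\mathrm{Ldim}(\cH')\,T}{8}}\,.
\]
Then I would chain the inequalities: the left summand equals $\E\big[\sum_t |\hat y_t - y_t|\big]$ (by the loss-preserving coupling), the subtracted term is at least $\min_{h\in\cH}\sum_t |h(x_t)-y_t|$ (by the rounding bound on the benchmark), and $\mathrm{Ldim}(\cH') \ge \mathrm{Ldim}(\cH)$, so the whole chain yields
\[
\E\!\left[\sum_{t=1}^T |\hat y_t - y_t|\right] - \min_{h\in\cH}\sum_{t=1}^T |h(x_t)-y_t| \;\ge\; \sqrt{\frac{\mathrm{Ldim}(\cH)\,T}{8}}\,,
\]
which is exactly the claim. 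The only subtle point — the ``main obstacle'' — is making sure the coin-flipping reduction preserves expected loss \emph{exactly}, which works precisely because $y_t\in\{0,1\}$ forces $|\tfrac12-y_t| = \tfrac12$ identically; if the labels could themselves be $\frac12$ this reduction would fail, so I would flag explicitly that the lemma's hypothesis $\Y=\{0,1\}$ is what is being used here. Everything else is bookkeeping.
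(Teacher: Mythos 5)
Your coin-flipping reduction for the learner is sound: since $y_t\in\{0,1\}$, replacing a prediction of $\tfrac12$ by a fair coin over $\{0,1\}$ preserves the expected loss exactly, so $\E\big[\sum_t|\hat y_t'-y_t|\big]=\E\big[\sum_t|\hat y_t-y_t|\big]$ on every fixed sequence, and the shattering/Ldim claim for a rounded class is also fine (shattering demands exact equality with \emph{binary} tree labels, which rounding does not disturb). The gap is in the comparator term, and it is a directional one. Because the benchmark is \emph{subtracted}, transferring the lower bound from $(\mathcal{A}',\cH')$ back to $(\mathcal{A},\cH)$ requires
\[
\min_{h\in\cH}\sum_t|h(x_t)-y_t|\;\le\;\min_{h'\in\cH'}\sum_t|h'(x_t)-y_t|
\]
on the adversarial sequence. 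The inequality you actually derive from rounding, $\min_{h'\in\cH'}\le\min_{h\in\cH}$, is the opposite one, and your later sentence ``the subtracted term is at least $\min_{h\in\cH}\dots$'' contradicts it without justification. Moreover the inequality you need can genuinely fail for a rounded class: if $h(x)=\tfrac12$ on a block of $k$ occurrences of $x$ of which $r$ carry label $1$, then $h$ pays exactly $k/2$ there, while the majority rounding of $h$ pays $\min(r,k-r)\le k/2$; a hypothesis that is identically $\tfrac12$ is the extreme case, contributing zero regret itself while its roundings can be strongly competitive on the random-label sequence. So the chain $A-B\ge A'-B'$ does not close; what you have shown is $A-B\le A'-B'$, an upper bound rather than a lower bound.

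The repair requires opening up the proof of Lemma~\ref{lem:LB_Shai} rather than invoking it as a black box, which is what the paper does. On the block-structured sequence built from a tree shattered by $\cH$ itself: (i) the learner's expected per-round loss is $\tfrac12$ whether it predicts in $\{0,1\}$ or in $\{0,\tfrac12,1\}$; (ii) $\min_{z\in\{0,\frac12,1\}}\sum_{t\in T_i}|z-y_t|=\min_{z\in\{0,1\}}\sum_{t\in T_i}|z-y_t|$, since predicting $\tfrac12$ on a block costs $k/2\ge\min(r,k-r)$; and (iii) shattering supplies an $h\in\cH$ that takes values in $\{0,1\}$ on the path instances and hence attains the per-block binary optimum, so the three-valued benchmark is no larger than the binary one \emph{on this particular sequence}. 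Point (iii) is exactly the structural information your black-box reduction discards, and without it the benchmark comparison cannot be made to go in the required direction.
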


%\subsection{Proof of Lemma \ref{lem:LB_Shai_mod}}
\begin{proof}
%Our proof follows the same line of argument as given in \cite{Shai+09}.
Let $d = Ldim(\cH)$ and, for simplicity, assume that $T$ is an integer multiple of $d$, say, $T = kd$ for some non-negative integer $k$. Consider a full binary $\cH$-shattered tree of
depth $d$. We construct the sequence $(x_1 , y_1 ), (x_2 , y_2 ), \ldots , (x_T , y_T )$ by following a root-to-leaf path $(u_1 , z_1 ), (u_2 , z_2 ),\ldots , (u_d , z_d )$ in the shattered tree. We pick the path in a top-down fashion starting at the root. The label $z_i \in \{0, 1\}$
determines whether the path moves to the left or to the right subtree of $u_i$ and it thus determines $u_{i+1}$. Each node $u_i$ on the path, $i \in [d]$, corresponds
to a block $(x_{(i-1)k+1}, y_{(i-1)k+1}), \ldots , (x_{ik}, y_{ik})$ of $k$ examples. We define 
$x_{(i-1)k+1} = x_{(i-1)k+2} = \cdots = x_{ik} = u_i$, and we choose $y_{(i-1)k+1}, \ldots, y_{ik} \in \{0,1\}$, independently uniformly at random. For each block, let $T_i = \{(i - 1)k + 1, . . . , ik\}$
be the time indices of the $i^{th}$ block. Denote $r = \sum_{t \in T_i}y_t$. Note that since $y_t \in \{0,1\}$,
\begin{align*}
\min_{z_i \in \left \lbrace 0, \frac{1}{2}, 1 \right \rbrace}\sum_{t \in T_i}|z_i - y_t| & = \min_{z_i \in \left \lbrace0,1\right \rbrace}\sum_{t \in T_i}|z_i - y_t| 
\\ & = 
\begin{cases}
    k-r,& \text{ if } r \ge \frac{k}{2}\\
    r,& \text{otherwise.}
\end{cases}
\end{align*}
Note that the expected loss incurred by the learner in the $i^{th}$ block is $k/2$. Hence, $k/2 - \min_{z_i \in \{0,1\}}\sum_{t \in T_i}|z_i - y_t| = |r - k/2|$. Taking expectations over the $y$'s and using Khinchine's inequality \cite{PLG06}, we obtain
\begin{align*}
k/2 - \E \left[ \min_{z_i \in \{0,1\}}\sum_{t \in T_i}|z_i - y_t| \right] = \E[|r - k/2|] \ge \sqrt{\frac{k}{8}}.
\end{align*}
Since $Ldim(\cH) = d$, note that there exists $h \in \cH$, such that within each block we have $h(u_i) = z_i$. Thus by summing over the $d$ blocks we get
\begin{align*}
dk/2 - \E \left[ \min_{h \in \cH}\sum_{t \in T}|h(x_t) - y_t| \right] \ge d\sqrt{\frac{k}{8}}. 
\end{align*}
Finally, since $\frac{dk}{2} = \frac{T}{2} = \E[ \sum_{t = 1}^{T} |\hat{y}_ t - y_t|]$, we conclude that the expected regret, w.r.t. the randomness of choosing the labels, is at least $d\sqrt{\frac{k}{8}} = \sqrt{\frac{dT}{8}}$. Therefore, there exists a particular sequence for which the regret is at least $\sqrt{\frac{dT}{8}}$, which concludes the proof.
\end{proof}

Note that the following corollary follows directly from Lemma \ref{lem:LB_Shai_mod}:

\begin{corollary}
\label{corr:LB_Shai_scaled} 
Consider any $s > 0$. Let $\X$ and $\Y = \{0,s\}$ respectively denote the instance and label space for an online binary hypothesis learning problem, and $\cH$ be a class of binary hypotheses such that $h: \X \mapsto \left \lbrace 0, \frac{s}{2}, s \right \rbrace$  for all $ h \in \cH$. 
%  such that Littlestone's dimension of $\cH$, Ldim($\cH$), is finite. 
Then for any (possibly randomized) algorithm for the problem, there exists a sequence of labeled instances $(x_1,y_1), \ldots, (x_T,y_T)$, $(x_t,y_t) \in (\X \times \Y)$, for $t \in [T]$, such that
\begin{align*}
\E\left[ \sum_{t = 1}^{T}|\hat{y}_t - y_t| \right] - \min_{h \in \cH} \sum_{t = 1}^{T}|h({x}_t) - y_t|  
	\ge 
	s\sqrt{\frac{\text{Ldim}(\cH)T}{8}},
\end{align*}
where $\hat{y}_t \in \left \lbrace 0, \frac{s}{2}, s \right \rbrace$ is the algorithm's output at iteration $t$.
\end{corollary}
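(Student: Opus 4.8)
The plan is to deduce Corollary~\ref{corr:LB_Shai_scaled} from Lemma~\ref{lem:LB_Shai_mod} by a straightforward rescaling. Given the class $\cH$ of hypotheses $h\colon \X \to \{0,\frac s2, s\}$ and label space $\Y=\{0,s\}$, I would pass to the rescaled class $\cH' := \{\frac1s h : h\in\cH\}$, whose members map $\X$ into $\{0,\frac12,1\}$, and to the label space $\{0,1\}$. The first point to verify is $\text{Ldim}(\cH')=\text{Ldim}(\cH)$: a full binary tree with edge labels in $\{0,1\}$ is shattered by $\cH'$ if and only if the tree obtained by multiplying every edge label by $s$ (so labels in $\{0,s\}$) is shattered by $\cH$, since $\frac1s h(x)=y \iff h(x)=sy$. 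This relabeling is a bijection on shattered trees of each depth, so the two dimensions agree.

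Next I would turn any (possibly randomized) algorithm $\cB$ for the scaled problem into an algorithm $\cB'$ for the $\{0,\frac12,1\}$-valued problem of Lemma~\ref{lem:LB_Shai_mod}: on instance $x_t$, $\cB'$ feeds $x_t$ to $\cB$, reads its output $\hat y_t \in \{0,\frac s2, s\}$, and predicts $\hat y'_t := \frac1s \hat y_t \in \{0,\frac12,1\}$; on receiving the true label $y'_t \in \{0,1\}$ it forwards $s y'_t \in \{0,s\}$ to $\cB$. Applying Lemma~\ref{lem:LB_Shai_mod} to $\cH'$ and $\cB'$ produces a sequence $(x_1,y'_1),\dots,(x_T,y'_T) \in \X\times\{0,1\}$ with
\[
\E\Big[\sum_{t=1}^T |\hat y'_t - y'_t|\Big] - \min_{h'\in\cH'}\sum_{t=1}^T |h'(x_t) - y'_t| \;\ge\; \sqrt{\frac{\text{Ldim}(\cH')\,T}{8}}.
\]

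Finally I would transport this back to the scaled problem by setting $y_t := s y'_t \in \{0,s\}$. For each $t$ we have $|\hat y_t - y_t| = s\,|\hat y'_t - y'_t|$, and since $h \mapsto \frac1s h$ is a bijection $\cH \to \cH'$ scaling the per-round discrepancy by $s$, also $\min_{h\in\cH}\sum_t |h(x_t)-y_t| = s\min_{h'\in\cH'}\sum_t |h'(x_t)-y'_t|$. Multiplying the displayed inequality by $s$ and using $\text{Ldim}(\cH')=\text{Ldim}(\cH)$ yields the claimed $s\sqrt{\text{Ldim}(\cH)T/8}$ lower bound for the sequence $(x_t,y_t)_{t=1}^T$. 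There is essentially no obstacle: the only (trivial) point requiring care is the scale-invariance of Littlestone's dimension noted above. Equivalently, one could simply rerun the proof of Lemma~\ref{lem:LB_Shai_mod} with every label and prediction multiplied by $s$, which scales both terms of the regret — and hence the lower bound — by $s$ without altering the combinatorial core of the argument.
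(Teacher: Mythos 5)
Your rescaling argument is correct and is exactly the route the paper intends: the paper simply asserts that the corollary ``follows directly'' from Lemma~\ref{lem:LB_Shai_mod}, and your proof fills in precisely that rescaling---passing to $\cH'=\{\tfrac1s h\}$, noting $\text{Ldim}(\cH')=\text{Ldim}(\cH)$, and multiplying predictions, labels, and hence both terms of the regret by $s$. No gaps.
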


\subsection{Proof of Theorem \ref{thm:LB_Gen}}
\label{app:LB_Gen}

\LBGen*

\begin{proof}
We first construct a problem instance of an online binary hypothesis learning problem (see Definition \ref{def:obhl}) as follows: Let $\X = \{\e_1, \e_2, \ldots, \e_V \}$ and $\Y = \{0, s\}$ respectively denote the instance and label space. Consider the $2^V$ vertices of the $V$-dimensional binary hypercube $\u_1,\u_2, \ldots \u_{2^V}$, where $\u_i \in \{\pm 1\}^{V}$ represents the $i^{th}$ vertex of the hypercube. We define a matrix $\U \in \{\pm1, 0\}^{N \times V}$ such that
\begin{align*}
    U(i,j) = 
\begin{cases}
    u_{i}(j),& \forall i \in [2^V],~ j \in [V]\\
    0,& \text{otherwise}
\end{cases}
\end{align*}
That is, the first $2^V$ rows of $\U$ are the $\u_i$s and the remaining $(N - 2^V)$ rows are all zeros. Now, let us consider the following class of hypothesis functions $\H = \{h_1, h_2, \ldots h_N\}$ $h_{i}: \X \mapsto \left \lbrace 0, \frac{s}{2}, s \right \rbrace ~\forall i \in [N]$, such that
\begin{align*}
    h_{i}(\e_j) = 
\begin{cases}
    0,& \text{if } U(i,j) = -1\\
    \frac{s}{2},& \text{if } U(i,j) = 0\\
    s,& \text{if } U(i,j) = +1
\end{cases}
\end{align*}
Note that $Ldim(\cH) = V$. Thus using Corollary \ref{corr:LB_Shai_scaled} we get 
\begin{align*}
\E_{h_i \sim \p_t}\left[ \sum_{t = 1}^{T}|h_i(\v_t) - y_t| \right] & - \min_{h \in \cH} \sum_{t = 1}^{T}|h(\v_t) - y_t| 
\\ 
	& \ge 
	s\sqrt{\frac{VT}{8}} 
	\,,
\end{align*}
where $\v_t \in \X$ is the observed instance, $y_t \in \Y$ is the true label of $\v_t$, and $\p_t \in \Delta_{N}$ is the distribution maintained by the algorithm over the set of $N$ classifiers $\cH$, at iteration $t$. Let us denote algorithms expected prediction at time $t$ as $\hat{y}_{t} = \sum_{i = 1}^{N}p_t(i)h_i(\v_t)$.
Note that for any $y \in \{0,s\}$, and $\v \in \X$, $|h_i(\v) - y|  = \frac{s-(2y-s).\U(i,:).\v}{2}$. Now let us construct the sequence of loss vectors $\l_t = \bar{y}_t\U\v_t$, where $\bar{y}_t = (s-2y_t) \in \{-s,s\}$. Note that $\l_{t} \in \left \lbrace 0, \frac{s}{2}, s \right \rbrace$. Since the columns of matrix $\U$ is $2^V$ sparse and each instance vector $\v_t \in \{\e_1, \ldots, \e_V \}$, we have $\|\l_t\|_0 = 2^V$, for all $t \in [T]$. In addition, since the rank of matrix $\U$ is $V$, the dimensionality constraint on the loss vectors is satisfied as well: $\text{rank}\left( \left[ \l_1,\l_2, \ldots ,\l_T \right] \right) =V$. Note that the regret incurred by the learner in this case is
\begin{align*}
\E_{h_i \sim \p_t} & \left[ \sum_{t = 1}^{T} 
\frac{s - (2y_t -s)\U(i,:)\v_t}{2} \right] \\
& - \min_{h_i \in \cH} \sum_{t = 1}^{T}\frac{s-(2y_t-s)\U(i,:)\v_t}{2}
	~ \ge ~
	s\sqrt{\frac{VT}{8}} 
	\,.
\end{align*}
Equivalently,
\begin{align*}
\sum_{t = 1}^{T} \sum_{i=1}^{N}\p_{t}(i){\bar{y}_t\U(i,:)\v_t} - \min_{i \in N} \sum_{t = 1}^{T}{\bar{y}_t\U(i,:)\v_t}
	\ge
	2s\sqrt{\frac{VT}{8}}.
\end{align*}
Since $\l_t = \bar{y}_t\U\v_t$, this further gives
\begin{align*}
\sum_{t = 1}^{T} \p_{t}\l_t - \min_{i \in N} \sum_{t = 1}^{T}\l_t(i)
	~ \ge ~
	2s\sqrt{\frac{VT}{8}} 
	\,.
\end{align*}
Hence the desired result follows.
\end{proof}

%%%%%%%%%%%%%%%%%%%%%%%%%%%%

\section{Tight Examples for Theorem~\ref{thm:omd_add}}
\label{appendix:tight-examples}

In this section we provide examples in which the regret guarantee of Theorem~\ref{thm:omd_add} is essentially tight. That is, we present loss spaces $\cL_1$ and $\cL_2$ such that OMD algorithm obtained via Theorem~\ref{thm:omd_add} provides an order-wise optimal regret guarantee for the additive loss space $\cL = \cL_1 + \cL_2$. \\

\noindent
\textbf{Composition of Low Ranks:} 
%\label{corr:omd_add_rr}
Let $\cL_1 = \{\l \in [0,1]^{N} ~|~  \l = 
\U_1 \v ~\}$ and $\cL_2 = \{ \l \in [0,1]^{N} ~|~  \l = 
\U_2 \v \}$ be loss spaces of rank $d_1$ and $d_2$, respectively (i.e., rank of the matrices $\U_1$ and $\U_2$ are respectively $d_1$ and $d_2$). Here $(d_1+d_2) \le \ln N$. Consider the regularizer $R(\x) = \x^{\top}(\H_1 + \H_2)\x$, where $\H_1 = \I_N + \U_{1}^{\top}\M_1\U_1$, and $\H_2 = \I_N + \U_{2}^{\top}\M_2\U_2$, $\M_1$ and $\M_2$ being the L\"owner John ellipsoid matrix for $\cL_1$ and $\cL_2$. That is, $R(\x) = R_1( \x) + R_2(\x)$, where $R_1( \x) $ and  $R_2(\x)$ are the regularizers for $\cL_1$ and $\cL_2$ respectively. 

Theorem~\ref{thm:omd_add} assets that the OMD algorithm, with regularizer $R$, for the loss space $\cL = \cL_1 + \cL_2$ achieves the following regret bound:
\begin{align*}
\text{Regret}_{T}
	~ \le ~ 
	4\sqrt{2(d_1+d_2)T} 
	\,.
\end{align*}
This regret guarantee is tight, since $Rank(\cL)$ can be as high as  $(d_1 + d_2)$ and, hence, we get a nearly matching lower bound by applying the result of \cite{HazanEtAl16}; see also Remark~\ref{remark:low-rank} in Section~\ref{sec:lb}. \\

\noindent
\textbf{Composition of Noise}
%\label{corr:omd_add_nn}
Let  loss spaces $\cL_1 = \{ \l \in [0,1]^{N} ~|~ \|\l\|_{2}^{2} \le \epsilon_1 \}$ and $\cL_2 = \{ \l \in [0,1]^{N} ~|~ \|\l\|_{2}^{2} \le \epsilon_2 \}$. 
Then, via an instantiation of Theorem~\ref{thm:omd_add}, we get that the regret of the OMD algorithm over the loss space $\cL = \cL_1 + \cL_2$, with regularizer $R(\x) = (\epsilon_1+\epsilon_2)\|\x\|_{2}^{2}$ (and $\eta^* = \sqrt{\frac{2(\epsilon_1+\epsilon_2)}{T}}$) is upper bounded as follows:
\begin{align*}
\text{Regret}_{T}
	~ \le ~ 
	\sqrt{2(\epsilon_1+\epsilon_2)T} 
	\,.
\end{align*}
Again, modulo constants, this is the best possible regret guarantee for $\cL$; see Corollary~\ref{corr:lb_noisy}.

\end{document}